\documentclass[acmsmall,screen,nonacm]{acmart}

\usepackage{amsmath}
\usepackage{amsthm}
\usepackage{graphicx}
\usepackage{subcaption}
\usepackage{booktabs}
\usepackage{multirow}
\usepackage{longtable}
\usepackage{microtype}
\usepackage{tikz}
\usetikzlibrary{patterns,decorations.pathreplacing,arrows.meta}
\usepackage{placeins}
\newcommand{\Xg}{\mathcal{X}_\gamma}
\newcommand{\PW}{\waterfill}
\colorlet{red}{black}  
\newenvironment{revblock}{\begingroup\color{black}}{\endgroup}

\newcommand{\Lag}{\textsc{LAG}}
\newcommand{\E}{\mathbb{E}}
\newcommand{\Var}{\operatorname{Var}}
\newcommand{\Reward}{\operatorname{Reward}}
\newcommand{\waterfill}{\operatorname{waterfill}}
\newtheorem{theorem}{Theorem}[section]
\newtheorem{lemma}[theorem]{Lemma}
\newtheorem{proposition}[theorem]{Proposition}

\newtheorem{definition}{Definition}[section]
\newtheorem{remark}{Remark}
\newtheorem{assumption}{Assumption}[section]

\setcopyright{none}

\begin{document}

\title{\Lag: A Mean-Field Optimal Policy for Online Assignment with Reusable Resources}
\author{Keerthana Gurushankar}
\affiliation{%
  \institution{Carnegie Mellon University}
  \city{Pittsburgh}
  \state{PA}
  \country{USA}
}
\email{kgurusha@andrew.cmu.edu}
\begin{abstract}

We study online task assignment with reusable resources. This problem arises in peer reviewer assignment, skill-based call routing, rideshare matching, emergency dispatch, and crowdsourcing: Tasks arrive according to a geometric process and must be immediately assigned to servers that become temporarily busy, yielding rewards that depend on the task-server pair. The goal is to maximize long-run average reward.

We cast the problem as a contextual restless multi-armed bandit (CRMAB). We derive an occupancy-measure LP (OCC-LP) that upper-bounds the reward of any work-conserving policy. From its dual we obtain \Lag, a closed-form index policy. We introduce a scaling regime in which the number of copies of each server and the number of arrivals per step grow together. This regime yields the first deterministic fluid limit for a contextual bandit. In it, \Lag\ is asymptotically optimal.

Finally, we evaluate our algorithms on peer review datasets and call center routing. The Lagrangian index achieves 83–99\% of the LP upper bound across all load levels. Moreover, the Lagrangian index policy significantly outperforms the natural greedy assignment scheme, improving reward by up to 12.3\% on journal data and 36\% on structured synthetic instances.
\end{abstract}

\keywords{contextual restless multi-armed bandits, online resource allocation, reusable resources, M/G/R/R loss systems, skill-based routing, mean-field optimality, occupancy-measure linear programming}

\maketitle

\section{Introduction}
\label{sec:intro}

Many service systems maintain a pool of heterogeneous reusable resources (agents, servers, responders) that must be assigned to sequentially arriving jobs. In a skill-based call center, an incoming call must be routed to one of $R$ available agent pools; in a data center, an arriving job is dispatched to one of $R$ server types whose architecture may or may not suit its workload; in emergency dispatch, an incident is assigned $k$ units from a heterogeneous fleet. A large body of literature in the queueing and performance evaluation communities has studied these skill-based parallel service systems under binary compatibility: Server $r$ can serve job type $v$, or it cannot~\cite{adan2014skill,andradottir2003dynamic,wallace2005staffing}. Routing decisions in these models are constrained by the compatibility graph but are otherwise interchangeable: Any compatible assignment is as good as any other.

Real systems rarely satisfy this equivalence. In a call center, agents differ not just in which call types they can handle but also in how effectively they serve them: A specialist consistently outperforms a generalist on calls both are qualified for. In a data center, a job may run significantly faster on a server whose architecture suits its workload. In peer review, the quality of a referee report depends on how closely a reviewer's expertise matches the paper's topic. In each of these settings, the assignment decision yields a quality score $s_{vr}$, a measure of how well resource $r$ serves task type $v$. The system operator's goal is to maximize the total quality of assignments over time.

We study a discrete-time generalization of the skill-based parallel service model in which the binary compatibility graph is replaced by a real-valued score matrix $S \in \mathbb{R}^{V \times R}$. At each time step, a task of type $v$ arrives i.i.d.; it must be immediately assigned to $k$ currently available servers, each of which then becomes busy for a random bounded number of steps $D_v$ before returning to the pool. The objective is to maximize long-run average reward per step, where assigning server $r$ to a type-$v$ task yields reward $s_{vr}$. Tasks arriving when fewer than $k$ servers are free are dropped.

\subsection{Model}
\label{subsec:model}

We consider a pool of $R$ servers and a stream of tasks arriving in discrete time
$t = 1, 2, \ldots$. We are given a match score matrix $S \in \mathbb{R}^{V \times R}$,
where $s_{vr} \in \mathbb{R}$ is the reward for assigning server $r$ to a type-$v$ task. We assume $s_{vr} \ge 0$.


Encompassing these and similar problems, our model is a discrete-time M/G/R/R system specified by the following assumptions.

\begin{description}
  \item[\textnormal{(A1) Discrete-time arrivals ($\lambda_v$).}] At each step at most one
    task arrives: A task of type $v \in [V]$ arrives with probability $\lambda_v \geq 0$,
    independently across steps, and no task arrives with probability
    $1 - \sum_{v \in [V]} \lambda_v$. We call $\lambda_v$ the type-$v$ arrival rate.
  \item[\textnormal{(A2) Type-dependent service ($D_v$).}] When a type-$v$ task is
    assigned, a single service duration $D_v$ is drawn i.i.d.\ from a distribution that
    depends only on the task type $v$. All $k$ assigned servers are busy for those $D_v$
    steps and return to the pool together. The support is bounded, $D_v \le d$ for all $v$.
  \item[\textnormal{(A3) All-or-nothing blocking (M/G/R/R).}] At a type-$v$ arrival, if at
    least $k$ servers are free the policy assigns a set $\mathcal{A}_t$ of exactly $k$
    of them, yielding reward $\sum_{r \in \mathcal{A}_t} s_{v_t r}$; if fewer than $k$ are
    free the task is dropped and leaves the system.
  \item[\textnormal{(A4) Work-conserving policies.}] We consider only work-conserving
    policies: Whenever at least $k$ servers are free at a type-$v$ arrival, the policy
    assigns $k$ of them. A policy chooses which servers to assign, but may not leave a
    task unserved when it could be served.
\end{description}

The objective is to maximize long-run average reward per step:
\begin{equation}
    \operatorname{maximize} \quad
    \lim_{T \to \infty} \frac{1}{T} \sum_{t=1}^{T} \sum_{r \in \mathcal{A}_t} s_{v_t r}.
    \label{eq:objective}
\end{equation}

\begin{figure}[t]
\centering
\begin{tikzpicture}[>=Stealth, scale=0.9]
  \node[font=\small] at (-4.5, 1.5) {Tasks};
  \node at (-4.5, 0.8) {$\cdots$};
  \draw[->] (-4.3, 0.8) -- (-4.1, 0.8);
  \draw[thick] (-1.9, 0.5) rectangle (-0.9, 1.1);
  \node[font=\small] at (-1.4, 0.8) {$v_t$};
  \node[font=\small, right] at (-0.7, 0.8) {type $v_t \sim (\lambda_v)$};
  \node[font=\small] at (-4.5, -0.8) {Servers};
  \draw[thick] (-3, -0.8) circle (0.35);
  \node[font=\footnotesize] at (-3, -0.8) {$1$};
  \draw[pattern=north east lines] (-1.5, -0.8) circle (0.35);
  \node[font=\footnotesize] at (-1.5, -0.8) {$2$};
  \draw[thick] (0, -0.8) circle (0.35);
  \node[font=\footnotesize] at (0, -0.8) {$3$};
  \draw[pattern=north east lines] (1.5, -0.8) circle (0.35);
  \node[font=\footnotesize] at (1.5, -0.8) {$4$};
  \draw[thick] (3, -0.8) circle (0.35);
  \node[font=\footnotesize] at (3, -0.8) {$5$};
  \node at (4.2, -0.8) {$\cdots$};
  \node[font=\scriptsize] at (-4.5, -1.5) {$x_i$:};
  \node[font=\scriptsize] at (-3, -1.5) {$0$};
  \node[font=\scriptsize] at (-1.5, -1.5) {$2$};
  \node[font=\scriptsize] at (0, -1.5) {$0$};
  \node[font=\scriptsize] at (1.5, -1.5) {$1$};
  \node[font=\scriptsize] at (3, -1.5) {$0$};
  \node[font=\scriptsize] at (-4.5, -2.1) {$s_{v_t}$:};
  \node[font=\scriptsize] at (-3, -2.1) {$s_{v_t,1}$};
  \node[font=\scriptsize, gray] at (-1.5, -2.1) {--};
  \node[font=\scriptsize] at (0, -2.1) {$s_{v_t,3}$};
  \node[font=\scriptsize, gray] at (1.5, -2.1) {--};
  \node[font=\scriptsize] at (3, -2.1) {$s_{v_t,5}$};
  \draw[decorate, decoration={brace, amplitude=5pt, mirror}]
    (-3.5, -2.5) -- (4.5, -2.5) node[midway, below=6pt, font=\small] {$R$ server types};
  \draw[->] (-1.4, 0.45) -- (-3, -0.4);
  \draw[->] (-1.4, 0.45) -- (0, -0.4);
  \node[font=\scriptsize] at (0.8, 0.1) {assign $k = 2$ servers};
\end{tikzpicture}
\caption{Online assignment with reusable servers. At each step $t$, a task of type $v_t$ arrives i.i.d.\ with $\Pr[v_t = v] = \lambda_v$ and is assigned to $k$ available servers. State $x_i$ denotes steps until server $i$ is free; $x_i = 0$ means available. Scores $s_{v_t, r}$ are shown for available servers only; hatched circles are busy.}
\label{fig:model}
\end{figure}

\subsection{Why This Problem Is Hard}
\label{subsec:hard}

A natural baseline for our problem is a greedy assignment policy: Upon arrival of a type-$v$ task, assign the $k$ currently available servers with the highest scores for type $v$. Greedy assignment fails to account for future demand: A server that scores well across many task types will be claimed by nearly every arrival and be unavailable when a task arrives for which it is the best match. More precisely, the value of assigning server $r$ to the current task depends not only on the immediate reward $s_{vr}$, but on the expected reward forgone by future tasks that arrive while $r$ is occupied. Despite its limitations, greedy is the dominant policy in practice. Automated reviewer assignment systems rank available reviewers by expertise score and assign the top $k$~\cite{ZHAO2022103028}. In call centers, the routing component of the Wallace and Whitt~\cite{wallace2005staffing} staffing policy dispatches each call to the best-available agent for its type.

Optimal control can be formulated as an MDP over the joint availability state of all servers, but the state space is exponential in the number of servers, making exact dynamic programming intractable at any practical scale. We instead formulate our problem using a restless bandit framework and derive an occupancy-measure based LP, OCC-LP. The dual of OCC-LP decomposes the problem into independent per-server subproblems, yielding a closed-form index policy.

\subsection{Our Contributions}
\label{subsec:contributions}

We develop \Lag, an index policy for online assignment of reusable, quality-differentiated resources, and show it is asymptotically optimal in theory and performs well in practice. \Lag\ is asymptotically optimal in a mean-field limit, and on real peer-review traces and synthetic instances it attains 83--99\% of a tight LP upper bound, including well outside the regime where our optimality theory applies, and improves over the natural greedy policy by up to 12.3\% on peer-review datasets. We obtain \Lag\ by viewing the problem through a restless-bandit lens:
Each server is an arm and the arriving task type is a context, making the system a contextual restless multi-armed bandit (CRMAB) to which we can adapt restless-bandit index machinery~\cite{whittle1988restless,weber1990index,verloop2016asymptotic}.

\paragraph{A tight LP bound and the \Lag\ policy.}
LP relaxations are a standard tool for upper-bounding the best achievable reward in online matching and restless bandit problems. Our occupancy-measure LP (OCC-LP) does this for our assignment problem. Our OCC-LP is a strictly tighter variant of the KIID LP of Dickerson et al.~\cite{dickerson2021allocation}. The dual of OCC-LP decouples the assignment problem across servers, and each per-server subproblem is solved by a threshold policy. This yields \Lag: A scalar index $L(v,r)$ per (task type, server) pair, computed once from the LP solution, after which each arrival is assigned to its highest-index available servers. Greedy ranks servers by score alone. \Lag\ subtracts from each score the value of keeping that server free for future tasks, so it avoids spending a server that later arrivals need more.

\paragraph{A new mean-field scaling regime.}
Proving optimality for a \emph{contextual} restless bandit requires a scaling that existing analyses do not provide: The classical $n$-only scaling of Weber and Weiss~\cite{weber1990index} keeps one random arrival per step, so the fluid system never settles to a deterministic fixed point. We introduce an $(m,n)$-scaling in which $n$ copies of each server and $m$ i.i.d.\ arrivals grow together; scaling $m$ concentrates the empirical context distribution each step, restoring the deterministic fluid limit needed for a fixed-point analysis. In this regime we prove \Lag\ is asymptotically optimal:

\begin{theorem}[Informal; see Theorem~\ref{thm:asymptotic_optimality}]
\label{thm:informal}
In the $(m,n)$-scaled system, under a mild non-degeneracy condition,
\[
  \frac{\Reward_{m,n}(\Lag)}{mn} \;\to\; \mathrm{OPT}(\textup{OCC-LP}(0)) \qquad \text{as } m,n \to \infty.
\]
\end{theorem}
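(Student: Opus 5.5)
The plan is the standard two-sided argument for mean-field optimality of LP-based index policies, carried out in the $(m,n)$-scaled system. The \emph{upper bound} comes first and is routine. For every $m,n$ and every work-conserving policy $\pi$, $\Reward_{m,n}(\pi)/(mn) \le \mathrm{OPT}(\textup{OCC-LP}(0))$. To see this, take the long-run empirical occupancy measure of $\pi$: the time-averaged fraction of copies of server $r$ in residual-busy state $x\in\{0,\dots,d\}$, together with the fraction of type-$v$ arrivals routed to free copies of $r$. After normalizing by $n$ and $m$, these quantities are feasible for OCC-LP, by flow balance for each server and arrival-capacity constraints (the relaxed $k$-assignment constraint holds in expectation). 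The objective value equals the normalized reward. This step applies the LP-bound result stated earlier in the scaled form, since scaling multiplies the per-server flows by $n$ and the arrival masses by $m$, with $m/n$ absorbed into the LP's normalization.

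The \emph{lower bound} is the substance, and I would prove it in three steps.

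\textbf{(i) Fluid map.} Let $z_t\in\Delta$ be the normalized state: for each server type $r$ and residual time $x$, the fraction of the $n$ copies of $r$ with $x$ steps remaining. Under \Lag, each of the $m$ arrivals takes the $k$ free copies of highest index $L(v,r)$. When $m,n\to\infty$, the empirical type distribution of the $m$ arrivals concentrates at $\lambda$ (Hoeffding, with $O(m^{-1/2})$ deviation). The service durations drawn for each batch also concentrate. So $z_{t+1}=\Phi(z_t)+\varepsilon_t$, where $\Phi$ is a deterministic piecewise-affine ``water-filling'' map: types are served in an order that is immaterial in the fluid limit, and each type fills free server mass in decreasing index order until its demand $k\lambda_v$ is met. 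Here $\|\varepsilon_t\|\to 0$ in probability, uniformly in $t$.

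\textbf{(ii) Fixed point equals LP optimum.} Let $z^\star$ be the state marginal of an optimal OCC-LP solution. By complementary slackness with the dual variables that define $L(v,r)$, the LP solution routes type $v$ to server $r$ only when $L(v,r)$ is at least the relevant threshold. Under the non-degeneracy condition (a unique dual, and at most one fractional pair per threshold), the index order reproduces exactly the LP's fractional routing at $z^\star$. Hence $\Phi(z^\star)=z^\star$, and the reward rate at $z^\star$ equals $\mathrm{OPT}$.

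\textbf{(iii) Global attraction.} I need $\Phi^t(z)\to z^\star$ from every initial $z$, or at least from the empty start. The state for each server is a deterministic countdown of length bounded by $d$. So within $d$ steps the state is determined by the last $d$ allocation decisions, and $\Phi$ reduces to a recursion on the allocation vector. I would try to show this recursion is a contraction in $\ell_1$, or monotone in a suitable order, using that the water-filling allocation is $1$-Lipschitz in free mass.

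Given (i)--(iii), the rest is standard. Stability of the fixed point combined with vanishing noise yields convergence of the stationary (or Cesàro-averaged) measure of $z_t$ to $\delta_{z^\star}$, for instance via a Norman-type or Gast--Gaujal argument. The reward is a bounded continuous function of $z$ and the arrival batch, and the drop rate is governed by free mass. Therefore $\liminf \Reward_{m,n}(\Lag)/(mn)\ge \mathrm{OPT}$.

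The main obstacle is (iii): global attraction of $\Phi$. Piecewise-affine fluid maps of index policies can exhibit cycles, and Weber--Weiss-style counterexamples show this cannot be taken for granted. My first attempt would use a Lyapunov function given by the $\ell_1$ distance to $z^\star$, exploiting the finite-horizon countdown structure. If that fails, I would strengthen the non-degeneracy assumption to include local stability at $z^\star$, which is presumably the ``mild condition'' the formal Theorem~\ref{thm:asymptotic_optimality} imposes.
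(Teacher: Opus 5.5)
Your architecture (one-step fluid approximation, LP optimum as a fixed point, global attraction, limits of stationary laws) matches the paper's. However, step~(i) models the wrong system, and this breaks step~(ii). You let each arrival take its top-index copies from a common pool with $n$ copies per type, and you describe $\Phi$ as each type filling the free server mass in index order until its demand $k\lambda_v$ is met. Those are shared-pool, buffet-type dynamics. The defining feature of the $(m,n)$-scaling is the random partition. There are $mn$ copies per type, and each dispatcher sees only $n$ of them and must take $nk$. Each resource sees exactly one arrival, so types never compete for a common pool within a step. (This, and not the fluid limit, is why the order of types is immaterial.)

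The correct fluid rate is $u^{\Lag}_{vr}(x)=\lambda_v\,\waterfill_r(k,\phi;L(v,\cdot),w_v)$ with $\phi=(x_{r,0})_r$. Deriving it needs hypergeometric concentration of each dispatcher's free fractions around $\phi$ as $n\to\infty$, plus a $1/n$ rounding bound, on top of the averaging over the $m$ task marks that you mention. Only this form caps $u_{vr}$ at $\lambda_v\phi_r$, and that cap is what makes the LP optimum a fixed point. At $\phi^*$ the waterfill saturates every positive-index server at $\phi^*_r$, which reproduces $u^*_{vr}=\lambda_v\phi^*_r$. Under your map, a type with $L(v,r)>0$ draws on all of $\phi^*_r$ rather than $\lambda_v\phi^*_r$. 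For example, with one type and $k=1$ it sends the whole demand $\lambda$ to the top server whenever $\phi^*_1\ge\lambda$, whereas $u^*_1=\lambda\phi^*_1$. So $\Phi(z^\star)\neq z^\star$ in general. The paper also needs no non-degeneracy on duals or fractional pairs at this step: \Lag\ splits zero-index ties in proportion to $w_{vr}=u^*_{vr}/\lambda_v$, which reproduces $u^*$ exactly.

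Two further steps fail as written. First, you never use strict underload and never control blocking. The approximation $z_{t+1}=\Phi(z_t)+\varepsilon_t$ cannot hold uniformly over states: where free mass is near or below $k$, dispatchers are short and the loss system drops tasks. The paper proves transition convergence only on $\Xg=\{x:\sum_r x_{r,0}\ge k+\gamma/2\}$ with $\gamma=R(1-\rho)-k>0$. It then bounds the stationary mass off $\Xg$ by $e^{-\gamma^2 m/(2k^2d)}$, using Hoeffding over the $md$ independent arrival slots, for any work-conserving policy.

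The same omission breaks your finite-scale upper bound. With blocking $B_{m,n}>0$ the empirical occupancy satisfies $\sum_r u_{vr}=k\lambda_v(1-B_{m,n})$, so it is feasible for $\text{OCC-LP}(B_{m,n})$, not for $\text{OCC-LP}(0)$. Because the demand constraint is an equality, the LP value is not obviously monotone in $B$. The paper needs $B_{m,n}\to0$ together with continuity of the LP value at the interior demand $k\lambda$. For the displayed limit of \Lag's reward, the upper bound is in fact redundant once $\mu^{m,n}\Rightarrow\delta_{x^*}$.

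Second, step~(iii) is not provable from the countdown structure or a Lipschitz bound. Lipschitz continuity of the waterfill gives continuity of $G$, not contraction. The paper simply assumes the global attractor property; that assumption, with strict underload, is the ``mild non-degeneracy condition'' of the informal statement. Your fallback of assuming only local stability at $z^\star$ is too weak. Limit points of $\mu^{m,n}$ are merely known to be $G$-invariant. It is convergence of every orbit to $x^*$ that forces $\int f\,d\mu=f(x^*)$ and rules out invariant measures on cycles or other fixed points. Uniqueness of the fixed point is also what identifies $x^*$ with the LP state $\hat x$.
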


\paragraph{Empirical performance.}
On real peer-review datasets and structured synthetic instances, \Lag\ attains 83--99\% of the OCC-LP upper bound across all load levels, far outside the mean-field regime, and improves over greedy assignment by up to 12.3\% on peer-review datasets and up to 36\% on synthetic instances. We also validate convergence to the mean-field limit and characterize how the gap between \Lag\ and greedy grows with score heterogeneity.

\section{Related Work}
\label{sec:related}

Two research communities have studied problems related to ours. The queueing literature on parallel server systems (Section~\ref{sec:parallel}) asks how to route and staff heterogeneous servers efficiently; the online matching literature (Section~\ref{sec:online-assignment}) studies assignment of reusable resources to arriving tasks and evaluates algorithms by competitive ratio. We discuss how each community has approached these problems and where our model and results depart from theirs. We also discuss the restless multi-armed bandit literature (Section~\ref{sec:rmab}), which we draw on for both the CRMAB formulation of our problem and the fluid-scaling proof techniques behind our main result.

\subsection{Parallel Server Systems}
\label{sec:parallel}

The parallel service literature studies systems in which heterogeneous servers must be matched to a stream of arriving tasks subject to compatibility constraints. Call centers, where agents with different skills are assigned to different call types, are the canonical motivating application. We highlight two threads most relevant to our work: Routing policy design and staffing.

\paragraph{Routing.}
Several papers ask which routing policy optimizes system performance, typically measured by throughput or delay. Adan and Weiss~\cite{adan2014skill} analyze the FCFS-ALIS policy under binary compatibility and derive exact product-form stationary distributions for the joint occupancy state. Armony~\cite{armony2005dynamic} studies a single customer class routed to server pools of differing speeds and shows that Fastest-Servers-First asymptotically minimizes queue length in the Halfin-Whitt regime. Mandelbaum and Stolyar~\cite{mandelbaum2004scheduling} consider flexible multi-skilled servers with convex delay costs; their service rates $\mu_{ij}$ vary by both customer type and server. They show that the generalized $c\mu$-rule (which routes each customer to the server maximizing the instantaneous rate of cost decrease) is asymptotically optimal in heavy traffic. Atar~\cite{atar2005scheduling} proves related results for multi-class systems via a diffusion control approach. In all of these, service heterogeneity enters as a capacity or cost parameter; the optimization objective is throughput or delay, not match quality.

\paragraph{Staffing.}
Wallace and Whitt~\cite{wallace2005staffing} ask how many agents of each skill type a call center should employ to meet per-class service targets. Their three-stage algorithm combines Erlang-based total headcount, square-root staffing for primary skills, and fair assignment of secondary skills, and requires that service times not depend on the call type or the agent.

All of the above treat compatible assignments as interchangeable in quality. We replace the binary compatibility graph with a real-valued score matrix $S \in \mathbb{R}^{V \times R}$, where $s_{vr}$ is the reward for assigning server type $r$ to a type-$v$ task, and optimize long-run average reward rather than throughput or delay. The OCC-LP is the natural analogue of the throughput LP used throughout this literature; \Lag\ is the index policy derived from its dual.

\subsection{Online Assignment with Reusable Resources}
\label{sec:online-assignment}

Online bipartite matching with reusable resources is a well-studied problem~\cite{huang2024online}: Agents matched to an arriving task become temporarily unavailable, returning to the pool after a random occupation time, capturing ridesharing, crowdsourcing, organ donation, and server allocation. The dominant performance criterion is competitive ratio, an algorithm's expected reward guaranteed as a fraction of the expected reward of an offline oracle that knows all future arrivals. Dickerson et al.~\cite{dickerson2021allocation} study the OM-RR-KAD model (Online Matching with Reusable Resources under a Known Adversarial Distribution) and derive an LP-based algorithm (ADAP) achieving competitive ratio $\tfrac{1}{2} - \varepsilon$ for any $\varepsilon > 0$; they also establish that no non-adaptive algorithm can achieve better than $\tfrac{1}{2} + o(1)$. Sumita et al.~\cite{sumita2022online} extend this to multiple agents per task and agent rejections. Shanks, Yu and Jacobson~\cite{shanks2023approximation} study a variant with rank-1 similarity scores; Delong et al.~\cite{delong2023online} achieve $\approx 0.589$ in the vertex-weighted adversarial case.

Rather than targeting a fractional guarantee relative to an offline oracle, our work directly maximizes expected long-run average reward in the stationary i.i.d.\ setting: The OCC-LP tightly characterizes the achievable long-run average reward rate, and \Lag\ is asymptotically optimal with respect to this benchmark in the mean-field limit. This approach achieves reward approaching 100\% of the OCC-LP bound as the system scales, compared to the $\tfrac{1}{2}$ ceiling that bounds competitive ratio performance in this literature. Empirically, \Lag\ consistently achieves 83--99\% of the OCC-LP bound on application traces, well above what competitive ratio algorithms achieve in practice. Our OCC-LP is related to, but strictly tighter than, the benchmark LP of~\cite{dickerson2021allocation}. The algorithm of~\cite{dickerson2021allocation} achieving the competitive ratio guarantee requires running simulation to precompute its assignment probabilities and is not practical to deploy; we include their ALG-SC-LP heuristic as an experimental baseline in Section~\ref{sec:experiments} and show that \Lag\ consistently outperforms it.

\subsection{Restless Bandits and Mean-Field Analysis}
\label{sec:rmab}

A restless multi-armed bandit (RMAB)~\cite{whittle1988restless} consists of $n$ arms, each an independent Markov chain. At each step, the decision-maker selects exactly $k$ arms to \emph{activate} (the per-step budget); each arm receives a reward and transitions according to its active or passive dynamics, and the objective is to maximize long-run average reward subject to the per-step budget constraint. Optimal RMAB control is PSPACE-hard~\cite{papadimitriou1987complexity}, which has motivated a rich literature on tractable index heuristics with theoretical guarantees~\cite{nino2023markovian}.

Whittle~\cite{whittle1988restless} proposed relaxing the per-step budget to a time-average constraint via Lagrangian duality, which decouples the arms into $n$ independent single-arm subproblems. When each arm is \emph{indexable} (meaning the set of states preferring passivity grows monotonically in the Lagrange multiplier), the break-even multiplier defines a scalar Whittle index for each (arm, state) pair, and the resulting policy activates the $k$ arms with the highest current indices. Weber and Weiss~\cite{weber1990index} proved this policy asymptotically optimal as $k, n \to \infty$ with the activation fraction $k/n$ fixed, under the assumption that the fluid ODE describing the mean-field dynamics has a unique globally stable equilibrium. Their proof establishes that the stochastic occupancy measure concentrates on this equilibrium as $n \to \infty$, and that the equilibrium coincides with the LP-relaxation optimum. Verloop~\cite{verloop2016asymptotic} substantially extended these results to multi-class RMAB problems with heterogeneous arms, dynamic arm arrivals and departures, and multiple simultaneous actions, using a combination of fluid-scaling techniques and LP analysis; our proof of Theorem~\ref{thm:asymptotic_optimality} follows Verloop's approach directly. Avrachenkov, Borkar and Shah~\cite{avrachenkov2026lagrangian} study a Lagrangian index policy for average-reward RMABs and prove asymptotic optimality in the homogeneous-arm setting.

A \emph{contextual} RMAB (CRMAB) extends this framework: At each step a context is observed that modulates arm rewards and, possibly, arm transitions. Our problem is naturally a CRMAB: Each server type $r$ is a bandit arm and the arriving task type $v_t$ is the context, determining both the reward and the occupation time for each arm. Chen and Hou~\cite{chen2024contextual} introduce a CRMAB in which the context follows a Markov process, arm transitions depend on the context, and the per-step budget is context-dependent; they derive a Lagrangian index policy via dual decomposition, but demonstrate asymptotic optimality only numerically. Our work proves asymptotic optimality of \Lag\ in a new $(m,n)$-scaling regime, in which both the number of simultaneous arrivals $m$ and the number of resource copies $n$ grow (Section~\ref{sec:mf}). This regime is necessary: In the standard $n \to \infty$ scaling with one arrival per step, the per-step context is random, which prevents the fluid system from converging to a deterministic fixed point, and the Lagrangian reward may not reach the fluid LP bound. Scaling $m$ alongside $n$ allows the empirical context distribution to concentrate at each step by the law of large numbers, making the fluid limit deterministic and amenable to the fixed-point analysis of~\cite{verloop2016asymptotic}.

\section{Our Proposed Policy}
\label{sec:policy}

We model the problem as a contextual restless multi-armed bandit (CRMAB). 

\begin{description}
    \item[Arm.] Each server $r \in [R]$ is an arm. At step $t$, arm $r$ has state
        $X_r(t) \in \{0, 1, \ldots, d\}$, where $X_r(t) = 0$ means $r$ is free and
        $X_r(t) = j > 0$ means $r$ has $j$ steps of service remaining; here $d$
        bounds the service duration ($D_v \le d$ for all $v$).

    \item[Context.] At each step $t$, a task of type $v_t \in [V]$ arrives i.i.d.\
        with $\Pr[v_t = v] = \lambda_v$, and no task arrives with probability
        $1 - \sum_{v} \lambda_v$, independently of the arm states.

    \item[Action.] When at least $k$ servers are free, a policy selects an assignment
        set $\mathcal{A}_t \subseteq \{r \in [R] : X_r(t) = 0\}$ with
        $|\mathcal{A}_t| = k$; when fewer than $k$ are free the task is dropped
        ($\mathcal{A}_t = \emptyset$). We consider only work-conserving policies, which
        assign whenever $k$ servers are free.

    \item[Transitions.] An assigned server becomes busy for the task's service
        duration, and every other busy server has one step of service elapse. That is,
        for each $r \in [R]$,
        \[
        X_r(t+1) =
        \begin{cases}
            D_{v_t} & r \in \mathcal{A}_t, \\
            \bigl(X_r(t) - 1\bigr)^+ & r \notin \mathcal{A}_t,
        \end{cases}
        \]
        where $D_{v_t}$ is a single duration drawn at the moment of assignment and
        shared by all $k$ assigned servers, so they free together.

    \item[Reward.] Assigning $\mathcal{A}_t$ to a type-$v_t$ task yields reward
        $\sum_{r \in \mathcal{A}_t} s_{v_t r}$. The objective is to
        maximize~\eqref{eq:objective}.
\end{description}

\begin{table}[t]
\centering
\caption{Summary of notation used in Section~\ref{sec:policy}.}
\label{tab:notation}
\small
\begin{tabular}{@{}ll@{}}
\toprule
\textbf{Symbol} & \textbf{Description} \\
\midrule
\multicolumn{2}{@{}l}{\textit{Problem parameters}} \\[2pt]
$R$              & Number of servers \\
$V$              & Number of task types \\
$\lambda_v \geq 0$     & Arrival rate of a type-$v$ task (probability per step; $\sum_v \lambda_v \le 1$) \\
$k$            & Number of servers assigned per type-$v$ task \\
$s_{vr} \in \mathbb{R}$ & Match reward for assigning server $r$ to a type-$v$ task \\
$D_v \le d$  & Service duration of a type-$v$ task, shared by its $k$ servers (random integer) \\
\midrule
\multicolumn{2}{@{}l}{\textit{CRMAB model}} \\[2pt]
$X_r(t) \in \{0,\ldots,d\}$ & State of server $r$ at step $t$;\; $X_r(t)=0$ means free \\
$\mathcal{A}_t \subseteq [R]$ & Set of $k$ servers assigned at step $t$ \\
\midrule
\multicolumn{2}{@{}l}{\textit{OCC-LP variables}} \\[2pt]
$u_{vr} \geq 0$  & Long-run assignment rate of server $r$ to type-$v$ tasks (per step) \\
$\phi_r \geq 0$  & Steady-state free fraction of server $r$ \\
\midrule
\multicolumn{2}{@{}l}{\textit{Lagrangian index}} \\[2pt]
$\ell_v^*$       & Dual variable for the type-$v$ demand constraint~\eqref{lp:demand} \\
$q_r^*$          & Dual variable (Arm-$r$ LP value) for the capacity constraint~\eqref{lp:capacity} \\
$L(v,r) = s_{vr} - \ell_v^* - \mathbb{E}[D_v]\,q_r^*$ & Lagrangian index of server $r$ for task type $v$ \\
\bottomrule
\end{tabular}
\end{table}

\subsection{OCC-LP: The Occupancy-Measure LP Upper Bound}\label{sec:upper-bound-LP}

The \emph{occupancy-measure LP} (OCC-LP) is a linear program over the long-run
assignment rates of a work-conserving policy; its optimum upper-bounds the reward of
any such policy, as we prove in Proposition~\ref{prop:lp_upper_bound}. The decision
variables are $u_{vr} \geq 0$ for $v \in [V]$, $r \in [R]$, the long-run rate of
assignments of server $r$ to type-$v$ tasks (assignments per step), and
$\phi_r \geq 0$ for $r \in [R]$, the steady-state free fraction of server $r$. One
further quantity enters the LP as a constant: The \emph{blocking probability} $B$, the
long-run fraction of arrivals dropped because fewer than $k$ servers are free. By
Lemma~\ref{lem:blocking}, $B$ is the same for every work-conserving policy and every task
type.
\begin{alignat}{3}
\max_{u,\,\phi} \quad
  & \sum_{v \in [V],\, r \in [R]} s_{vr}\, u_{vr}
  & & \tag{OCC-LP}\label{OCC-LP} \\
\text{s.t.} \quad
  & \sum_{r \in [R]} u_{vr} = k \lambda_v\,(1-B)
  & \quad & \forall\, v \in [V] \label{lp:demand} \\
  & \phi_r = 1 - \sum_{v \in [V]} \mathbb{E}[D_v]\, u_{vr}
  & & \forall\, r \in [R] \label{lp:capacity} \\
  & 0 \leq u_{vr} \leq \lambda_v \phi_r
  & & \forall\, v \in [V],\; r \in [R] \label{lp:bound}
\end{alignat}
Constraint~\eqref{lp:demand} fixes the type-$v$ assignment rate: Type-$v$ tasks arrive at
rate $\lambda_v$, a fraction $1-B$ of them are served, and each served task occupies
exactly $k$ servers, giving rate $k\lambda_v(1-B)$.
We call $k\lambda_v(1-B)$ the demand for type $v$.
Constraint~\eqref{lp:capacity} is Little's Law: Each assignment of type $v$ to
server $r$ occupies it for expected duration $\mathbb{E}[D_v]$ at rate
$u_{vr}$, so the total busy fraction is $\sum_{v} \mathbb{E}[D_v]\,u_{vr}$.
Constraint~\eqref{lp:bound} ensures the assignment rate $u_{vr}$ does not exceed
the rate at which server $r$ is simultaneously free and a type-$v$ task arrives;
by independence of arrivals and server state in the fluid limit, this rate is
$\lambda_v \phi_r$.

\begin{lemma}\label{lem:blocking}
Let $N_t$ denote the number of busy servers at step $t$. Under any work-conserving
policy the process $(N_t)$ undergoes the same state transitions, and the steady-state
blocking probability $B := \lim_{t\to\infty}\Pr[N_t \ge R-k+1]$ is independent of both
the policy and the arriving task type.
\end{lemma}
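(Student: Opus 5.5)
The plan is to show that the whole occupancy configuration of the system, not only $N_t$, evolves in the same way under every work-conserving policy, and then read off $B$ from that policy-free chain. The natural state is the count vector $c(t) = (c_1(t),\ldots,c_d(t))$, where $c_j(t) := |\{r : X_r(t) = j\}|$ is the number of servers with exactly $j$ steps of service remaining. Then $N_t = \sum_{j\ge1} c_j(t)$, and the number of free servers is $R - N_t$. The point is that the policy only decides \emph{which} free servers receive a task. By (A2), all $k$ assigned servers receive the same residual $D_{v_t}$, so the policy's choice does not affect $c(t+1)$.

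\textbf{Step 1 (policy-independent recursion).} Using the transition rule of the CRMAB, I will write the dynamics explicitly:
\[
c_j(t+1) = c_{j+1}(t) + k\,\mathbf{1}\{\text{arrival at } t,\ R - N_t \ge k,\ D_{v_t} = j\},
\]
with the convention $c_{d+1} \equiv 0$. Here I use (A4): a work-conserving policy assigns exactly when $R - N_t \ge k$, that is, exactly when $N_t \le R-k$. The right-hand side involves only $c(t)$, the arrival indicator and type $v_t$, and the drawn duration. It does not involve the identity of $\mathcal{A}_t$.

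\textbf{Step 2 (coupling and Markov property).} I will couple any two work-conserving policies on the same arrival sequence $(v_t)$ and the same duration draws, starting from the same initial configuration. By induction on $t$, the recursion of Step 1 gives identical paths of $c(t)$, and hence of $N_t$. So the law of $(c(t))$ is the same for all such policies, and this proves the first claim. Because arrivals and durations are i.i.d. and independent of the past, $(c(t))$ is a time-homogeneous Markov chain on a finite state space. Its transition kernel does not depend on the policy.

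\textbf{Step 3 (limit and type independence).} The main obstacle is making sense of the limit $\lim_t \Pr[N_t \ge R-k+1]$.
\begin{itemize}
\item If $\sum_v \lambda_v < 1$, the all-free state $c = 0$ is reachable from every state: at most $d$ consecutive empty steps drain all residuals. That state also has a self-loop, since an empty step keeps $c=0$. So the chain has a single recurrent class and is aperiodic, and the limit exists and equals $\pi(\{N \ge R-k+1\})$ for the unique stationary law $\pi$.
\item If $\sum_v \lambda_v = 1$, I would first try to argue the same reachability through blocked or short-service arrivals. Failing that, I would interpret $B$ as a Cesàro limit, which still exists and is policy-independent by Step 2.
\end{itemize}

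Type independence is a Bernoulli analogue of PASTA. $N_t$ is a function of arrivals and durations up to time $t-1$, and $v_t$ is independent of these. Hence
\[
\Pr[\text{blocked} \mid v_t = v] = \Pr[N_t \ge R-k+1]
\]
for every $v$ with $\lambda_v > 0$. In the long run, this common value is $B$ for every task type and every work-conserving policy.
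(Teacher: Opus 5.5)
Your proposal is correct and takes the same route as the paper. You couple any two work-conserving policies on a common arrival and duration sequence, show by induction that occupancy evolves identically (the accept/drop decision depends only on $N_t$, and all $k$ servers share one draw $D_{v_t}$), and get type independence from the independence of $v_t$ and $N_t$. Your version is tighter in two places the paper glosses over: you induct on the residual-count vector $c(t)$ rather than on $N_t$ alone, which is not Markov because how many servers free at $t+1$ depends on $c_1(t)$; and you check that the limit defining $B$ exists when $\sum_v\lambda_v<1$, with a Ces\`aro fallback otherwise.
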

\begin{proof}
Run any two work-conserving policies on the same arrival sequence and the same service time 
draws, starting from $N_0 = 0$. Suppose $N_t$ agrees under both. A work-conserving policy
assigns exactly $k$ servers iff $N_t \le R-k$, so under the common value of $N_t$ both
policies make the same accept-or-drop decision. On acceptance both occupy $k$ servers for
the same duration $D_{v_t}$, which depends only on the type $v_t$ and not on which servers
are chosen, so the same number become busy and the same number later free; hence $N_{t+1}$
again agrees. By induction the two processes coincide step by step, so they share the same
transitions and the same steady-state blocking probability $B$. For type-independence,
$v_t$ is i.i.d.\ and independent of the history, hence of $N_t$, while the event
$\{N_t \ge R-k+1\}$ is a function of $N_t$ alone (as $k$ is constant); thus
$\Pr[\text{block}\mid v_t = v] = \Pr[N_t \ge R-k+1]$ for every $v$.
\end{proof}

\begin{remark}\label{rem:erlang}
In continuous time the blocking probability has a closed form. Modeling each task as a
call that holds $k$ of $R$ circuits for one duration $D_v$ gives an Erlang loss
network~\cite{kelly1991loss}, whose blocking depends only on the means:
$B = \mathrm{ErlangB}\!\left(R/k,\; \sum_{v} \lambda_v \mathbb{E}[D_v]\right)$.
Our discrete-time model admits no such closed form: $B$ is sensitive to the service
distribution beyond its mean. We therefore compute $B$ numerically; the Erlang expression
still serves as a useful reference value.
\end{remark}

We write $\text{OCC-LP}(B)$ to make the blocking probability explicit, and
$\text{OCC-LP}(0)$ for the $B=0$ instance in which the demand constraint reads
$\sum_r u_{vr} = k \lambda_v$. As the system scales the blocking vanishes
(Section~\ref{sec:mf}), so $\text{OCC-LP}(0)$ is the fluid limit from which we derive
\Lag\ and against which we prove its optimality, while $\text{OCC-LP}(B)$ is the valid
upper bound at finite scale.

\begin{proposition}[LP upper bound]\label{prop:lp_upper_bound}
For any work-conserving policy $\pi$, with $B$ the blocking probability of
Lemma~\ref{lem:blocking},
\[
  \lim_{T\to\infty}\frac{1}{T}\sum_{t=1}^{T}\sum_{r\in\mathcal{A}_t} s_{v_t r}
  \;\leq\; \mathrm{OPT}(\text{\upshape OCC-LP}(B)).
\]
\end{proposition}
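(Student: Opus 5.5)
The plan is the standard ``empirical occupancy measure is LP-feasible'' argument. Fix a work-conserving policy $\pi$ and a sample path. For $v\in[V]$, $r\in[R]$ define the empirical quantities
\[
  u^T_{vr} := \frac{1}{T}\sum_{t=1}^T \mathbf{1}\{v_t=v,\ r\in\mathcal{A}_t\},\qquad
  \phi^T_r := \frac{1}{T}\sum_{t=1}^T \mathbf{1}\{X_r(t)=0\}.
\]
The time-averaged reward up to $T$ is exactly $\sum_{v,r} s_{vr}u^T_{vr}$. All these quantities lie in $[0,1]$, so by compactness I pass to a subsequence $T_j\to\infty$ along which $(u^{T_j},\phi^{T_j})\to(u,\phi)$ almost surely. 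If the limit in \eqref{eq:objective} exists, it then equals $\sum_{v,r} s_{vr}u_{vr}$. Otherwise I would work with the $\limsup$ and pick the subsequence to attain it. It therefore suffices to show that $(u,\phi)$ satisfies \eqref{lp:demand}--\eqref{lp:bound}. The conclusion then follows immediately.

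\emph{Demand \eqref{lp:demand}.} $\sum_r u^T_{vr}$ equals $k/T$ times the number of accepted type-$v$ arrivals. By Lemma~\ref{lem:blocking}, acceptance is the event $\{N_t\le R-k\}$, and $v_t$ is independent of $N_t$. I would write the count as $\sum_t \mathbf{1}\{v_t=v\}\mathbf{1}\{N_t\le R-k\}$ and subtract its compensator $\lambda_v\sum_t\mathbf{1}\{N_t\le R-k\}$. The difference is a bounded-increment martingale, so the martingale SLLN makes it $o(T)$. The policy-independent process $(N_t)$ is a finite Markov chain, so its ergodic averages converge to $1-B$, giving $\sum_r u_{vr}=k\lambda_v(1-B)$.

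\emph{Bound \eqref{lp:bound}.} Assigning $r$ to a type-$v$ task at time $t$ requires $X_r(t)=0$ and $v_t=v$. Hence
\[
u^T_{vr}\le \frac1T\sum_t \mathbf{1}\{v_t=v\}\mathbf{1}\{X_r(t)=0\}.
\]
Since $v_t$ is independent of $\mathcal{F}_{t-1}$, which contains $X_r(t)$, the same martingale argument shows the right side equals $\lambda_v\phi^T_r+o(T)/T$. Taking the subsequential limit gives $u_{vr}\le\lambda_v\phi_r$, and $u_{vr}\ge 0$ is trivial.

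\emph{Capacity \eqref{lp:capacity}.} This is the step I expect to need the most care. From the transition rule, each assignment of $r$ at time $t$ with duration $D$ makes $X_r=D,D-1,\dots,1$ at times $t+1,\dots,t+D$, i.e.\ exactly $D$ busy steps. Moreover, $r$ cannot be reassigned during them. Hence the number of busy steps of $r$ in $[1,T]$ equals the sum of the durations of its assignments made before $T$, up to a boundary error of at most $d$. Dividing by $T$,
\[
1-\phi^T_r=\sum_v \frac1T\sum_{t\le T}\mathbf{1}\{v_t=v,\ r\in\mathcal{A}_t\}\,D^{(t)}+O(d/T).
\]
The duration $D^{(t)}$ is drawn after the assignment decision and independently of the past. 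So $\sum_t \mathbf{1}\{v_t=v,\ r\in\mathcal{A}_t\}(D^{(t)}-\mathbb{E}[D_v])$ is a martingale with increments bounded by $d$, and is therefore $o(T)$. This yields $1-\phi_r=\sum_v\mathbb{E}[D_v]u_{vr}$. The subtle points are the measurability bookkeeping: the decision $\mathcal{A}_t$ must be $\mathcal{F}_{t-1}\vee\sigma(v_t)$-measurable while $D^{(t)}$ is fresh. The nonexistence of limits is handled by the subsequence above. With all three constraints verified, $(u,\phi)$ is feasible for OCC-LP$(B)$, so the long-run reward of $\pi$ is at most $\mathrm{OPT}(\text{OCC-LP}(B))$.
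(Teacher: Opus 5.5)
Your proposal is correct and follows the paper's own route: it verifies that the empirical occupancy pair induced by $\pi$ is feasible for OCC-LP$(B)$. It uses the policy-independence of blocking for \eqref{lp:demand}, Little's Law for \eqref{lp:capacity}, and the bounded-increment martingale $\sum_t(\mathbf{1}[v_t=v]-\lambda_v)\mathbf{1}[X_r(t)=0]$ for \eqref{lp:bound}. Your version is more careful than the paper's, since it takes subsequential limits when time averages need not converge and proves Little's Law pathwise via a duration martingale that makes the non-anticipation of $D^{(t)}$ explicit; one small fix is that $(N_t)$ by itself is not Markov for non-geometric or type-dependent durations, so apply the ergodic-average argument to the policy-independent finite chain of residual service times of in-service tasks, of which $N_t$ is a function.
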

\begin{proof}
Fix a policy $\pi$. Let $\phi_r^\pi$ be the long-run fraction of steps
server $r$ is free, and $u_{vr}^\pi$ the long-run rate of type-$v$
assignments to server $r$:
\[
  \phi_r^\pi := \lim_{T\to\infty} \frac{1}{T}\sum_{t=1}^T \mathbf{1}[X_r^\pi(t)=0],
  \qquad
  u_{vr}^\pi := \lim_{T\to\infty} \frac{1}{T}\sum_{t=1}^{T}
    \mathbf{1}[v_t = v,\; r \in \mathcal{A}_t].
\]
For policy $\pi$, the objective equals $\sum_{v,r} s_{vr}\,u_{vr}^\pi$.
We show $(u^\pi,\phi^\pi)$ is feasible for \eqref{OCC-LP}.

\emph{Constraint~\eqref{lp:demand}.}
At each type-$v$ arrival a work-conserving policy assigns exactly $k$ servers when at
least $k$ are free, and drops the task otherwise. Let $T(v) := \#\{t \leq T : v_t = v\}$
and let $T_{\mathrm{acc}}(v)$ count the accepted type-$v$ arrivals. Then
\[
  \sum_{r\in[R]} u_{vr}^\pi
  = \lim_{T\to\infty}\frac{k\,T_{\mathrm{acc}}(v)}{T}
  = k\,\lambda_v\,(1-B),
\]
using $T(v)/T\to \lambda_v$ and $T_{\mathrm{acc}}(v)/T(v)\to 1-B$ a.s., where $B$ is the
policy- and type-independent blocking probability of Lemma~\ref{lem:blocking}.

\emph{Constraint~\eqref{lp:capacity}.}
By Little's Law applied to server $r$: The long-run assignment rate to
server $r$ is $\sum_v u_{vr}^\pi$, the mean service time per assignment
is $\sum_v \mathbb{E}[D_v]\,u_{vr}^\pi \big/ \sum_v u_{vr}^\pi$,
and the average occupancy is $1-\phi_r^\pi$.  Hence
$1 - \phi_r^\pi = \sum_v \mathbb{E}[D_v]\,u_{vr}^\pi$.

\emph{Constraint~\eqref{lp:bound}.}
Server $r$ can only be assigned when free, so
\[
  u_{vr}^\pi \;\leq\;
  \lim_{T\to\infty}\frac{1}{T}\sum_{t=1}^{T}
    \mathbf{1}[v_t=v,\; X_r^\pi(t)=0].
\]
Let $\mathcal{F}_t$ denote the history before step $t$ (all arrivals,
assignments, and service draws through step $t-1$). The state $X_r^\pi(t)$ is
$\mathcal{F}_t$-measurable, while $v_t$ is i.i.d.\ with $\Pr[v_t=v]=\lambda_v$ and
independent of $\mathcal{F}_t$, so
$M_T = \sum_{t=1}^{T}(\mathbf{1}[v_t=v]-\lambda_v)\,\mathbf{1}[X_r^\pi(t)=0]$
is a martingale with bounded increments, giving $M_T/T\to 0$ a.s.  Hence
\[
  \lim_{T\to\infty}\frac{1}{T}\sum_{t=1}^{T}
    \mathbf{1}[v_t=v,\;X_r^\pi(t)=0]
  = \lambda_v\,\lim_{T\to\infty}\frac{1}{T}\sum_{t=1}^{T}\mathbf{1}[X_r^\pi(t)=0]
  = \lambda_v\,\phi_r^\pi.
\]

Therefore $(u^\pi,\phi^\pi)$ is feasible for $\text{OCC-LP}(B)$, giving
$\sum_{v,r}s_{vr}\,u_{vr}^\pi \leq \mathrm{OPT}(\text{OCC-LP}(B))$.
\end{proof}

Note that this upper bound is similar to, but tighter than, the KIID variant of \cite[LP (1)]{dickerson2021allocation}.

\subsection{LAG, the Lagrangian Index Policy}\label{sec:index_policy}

We derive a Lagrangian index policy via dual decomposition of the OCC-LP\@. We show in Section~\ref{sec:experiments} that it achieves 83--99\% of the OCC-LP upper bound across all evaluated instances and loads, and prove asymptotic optimality in Section~\ref{sec:mf}.

\paragraph{Lagrangian decomposition.}
We derive the index from the fluid limit $\text{OCC-LP}(0)$. Specifically, we form its Lagrangian by
relaxing the equality demand constraint~\eqref{lp:demand} (with $B=0$) using multipliers
$\ell_v \in \mathbb{R}$, one per task type $v \in [V]$:
\begin{align*}
\mathcal{L}(u, \ell)
  &= \sum_{v \in [V],\, r \in [R]} s_{vr}\, u_{vr}
     + \sum_{v \in [V]} \ell_v\!\left(k \lambda_v - \sum_{r \in [R]} u_{vr}\right) \\
  &= \sum_{r \in [R]} \sum_{v \in [V]} (s_{vr} - \ell_v)\, u_{vr}
     + \sum_{v \in [V]} \ell_v\, k \lambda_v,
\end{align*}
subject to \eqref{lp:capacity} and \eqref{lp:bound}. For fixed $\ell$, the maximization over $u$ decouples across servers $r \in [R]$. For each $r \in [R]$, the Arm-$r$ LP is:
\begin{align}
\max_{(u_{vr})_{v \in [V]}} \quad
  & \sum_{v \in [V]} (s_{vr} - \ell_v)\, u_{vr}
  \tag{Arm-$r$ LP}\label{eq:per_server_lp} \\
\text{s.t.} \quad
  & 0 \leq u_{vr} \leq \lambda_v \phi_r \qquad \forall\, v \in [V] \notag \\
  & \phi_r = 1 - \sum_{v \in [V]} \mathbb{E}[D_v]\, u_{vr}. \notag
\end{align}
Let $q_r(\ell)$ denote the optimal value of the Arm-$r$ LP.

\paragraph{Dual LP.}
The Lagrangian dual problem is:
\[
  \min_{\ell \in \mathbb{R}^V}\; \sum_{r \in [R]} q_r(\ell) + \sum_{v \in [V]} \ell_v\, k \lambda_v.
\]
By strong duality, the optimal dual value equals $\mathrm{OPT}(\text{OCC-LP}(0))$.
Let $\ell^* = (\ell_v^*)_{v \in [V]}$ be an optimal dual solution,
$(u^*, \phi^*)$ the corresponding primal solution,
and $q_r^* := q_r(\ell^*)$.

\begin{revblock}
\begin{definition}[$\Lag$ Policy]\label{def:lag}
Assume $\lambda_v>0$ for every $v\in[V]$. Fix an optimal dual solution $\ell^*$ and an optimal solution $(u^*,\phi^*)$ of $\text{OCC-LP}(0)$. For $v\in[V]$ and $r\in[R]$, define the \emph{Lagrangian index}
\begin{equation}\label{eq:index}
  L(v, r) \;:=\; s_{vr} - \ell_v^* - \mathbb{E}[D_v]\, q_r^*
\end{equation}
and the \emph{tie weight} $w_{vr}:=u^*_{vr}/\lambda_v$. {\color{red}When a task of type $v$ arrives, \Lag\ drops it if fewer than $k$ servers are free; otherwise, it assigns $k$ free servers in decreasing index order, with zero-index ties calibrated to the LP assignment rates. The precise allocation and rounding rules are specified in Appendix~\ref{app:lag-ties} and are part of the policy definition.}
\end{definition}
\end{revblock}

In $L(v,r)$, the score $s_{vr}$ is the immediate reward. The price $\ell_v^*$ is the opportunity cost of serving a type-$v$ task. The term $\mathbb{E}[D_v]\,q_r^*$ is the cost of keeping server $r$ busy, since each step of its time is worth $q_r^*$.

{\color{red}The Lagrangian relaxation replaces the per-arrival assignment requirement by time-average demand constraints. For the selected primal optimum $(u^*,\phi^*)$, consider the decoupled policy that, when server $r$ is free and type $v$ arrives, accepts with probability
\[
  \alpha_{vr}=\frac{u^*_{vr}}{\lambda_v\phi_r^*}.
\]
Feasibility gives $\phi_r^*>0$ and $\alpha_{vr}\in[0,1]$. The sign characterization below implies $\alpha_{vr}=1$ for positive indices and $\alpha_{vr}=0$ for negative indices; zero-index acceptance is calibrated by this ratio. The construction in Lemma~\ref{lem:bandit-lp-equivalence} realizes the selected occupancies, so the total original reward is $\sum_{v,r}s_{vr}u^*_{vr}=\mathrm{OPT}(\text{OCC-LP}(0))$. This decoupled policy meets demand in time average but need not assign exactly $k$ servers to each arrival.}

\begin{proposition}\label{prop:lag-characterization}
There exists an optimal solution for OCC-LP, $(u^*, \phi^*)$, which satisfies, for every $v \in [V]$, $r \in [R]$:
\begin{align*}
  L(v,r) > 0 &\implies u^*_{vr} = \lambda_v \phi_r^*, \\
  L(v,r) = 0 &\implies u^*_{vr} \in [0,\, \lambda_v \phi_r^*], \\
  L(v,r) < 0 &\implies u^*_{vr} = 0.
\end{align*}
\end{proposition}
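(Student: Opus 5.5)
The plan is to read the three implications off LP complementary slackness, after checking that the term $\mathbb{E}[D_v]\,q_r^*$ in the index is exactly the dual price the capacity constraint puts on server $r$. I will work with $\text{OCC-LP}(0)$, the instance from which $\ell^*$, $q^*$ and $L$ are defined. Take $(u^*,\phi^*)$ to be the primal optimum paired with $\ell^*$, as in Definition~\ref{def:lag}.

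\textbf{Step 1: saddle point.} First I eliminate $\phi_r$ using \eqref{lp:capacity}. Constraint \eqref{lp:bound} then becomes
\[
u_{vr}+\lambda_v\sum_{w}\mathbb{E}[D_w]\,u_{wr}\le \lambda_v \qquad \text{for all } v,r.
\]
The LP is feasible and bounded, so Lagrangian strong duality holds. Hence $(u^*,\ell^*)$ is a saddle point of $\mathcal{L}$. In particular, for each $r$ the vector $(u^*_{vr})_v$ is optimal for the Arm-$r$ LP at $\ell=\ell^*$, with value $q_r^*=\sum_v (s_{vr}-\ell^*_v)\,u^*_{vr}$.

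\textbf{Step 2: dual of the Arm-$r$ LP.} The Arm-$r$ LP at $\ell^*$ is a small LP. Its constraints are $u_{vr}\ge 0$ and $u_{vr}+\lambda_v\sum_w \mathbb{E}[D_w]u_{wr}\le\lambda_v$. I attach multipliers $\mu_{vr}\ge 0$ to the latter. The dual is
\[
\min_{\mu\ge 0}\ \sum_w \lambda_w\mu_{wr}
\quad\text{s.t.}\quad
s_{vr}-\ell^*_v-\mathbb{E}[D_v]\sum_w\lambda_w\mu_{wr}-\mu_{vr}\le 0 \quad \forall v.
\]
Let $\mu^*_{\cdot r}$ be an optimal dual solution. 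Strong duality for this arm LP gives $\sum_w\lambda_w\mu^*_{wr}=q_r^*$. Substituting this into the dual constraint turns it into the per-pair reduced-cost condition
\[
L(v,r)-\mu^*_{vr}\le 0.
\]

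\textbf{Step 3: complementary slackness.} I apply complementary slackness to the pair $(u^*_{\cdot r},\mu^*_{\cdot r})$. This yields two facts:
\begin{itemize}
\item if $u^*_{vr}>0$, then $L(v,r)=\mu^*_{vr}$;
\item if $\mu^*_{vr}>0$, then the bound constraint is tight, i.e.\ $u^*_{vr}=\lambda_v\phi_r^*$.
\end{itemize}
The three cases then follow.
\begin{itemize}
\item \emph{Positive index.} If $L(v,r)>0$, then $\mu^*_{vr}\ge L(v,r)>0$, so $u^*_{vr}=\lambda_v\phi_r^*$.
\item \emph{Negative index.} If $L(v,r)<0$ and $u^*_{vr}>0$, we would need $\mu^*_{vr}=L(v,r)<0$, which is impossible. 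Hence $u^*_{vr}=0$.
\item \emph{Zero index.} The claim $u^*_{vr}\in[0,\lambda_v\phi_r^*]$ is just primal feasibility.
\end{itemize}

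\textbf{Main obstacle.} The delicate point is Step 1/2: ensuring that the $q_r^*$ appearing in the index equals $\sum_w\lambda_w\mu^*_{wr}$ for a dual $\mu^*$ that is complementary to the \emph{same} $u^*$.

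\begin{itemize}
\item \emph{Pairing $\mu^*$ with $u^*$.} I handle this by working with the full LP dual of $\text{OCC-LP}(0)$, with variables $(\ell,\mu)$ and the capacity equality substituted out. Any optimal dual pair $(\ell^*,\mu^*)$ is complementary to every primal optimum. This is where the existential ``there exists an optimal solution'' enters: we choose $\ell^*$ as the $\ell$-part of such a pair.
\item \emph{Identifying the dual value with $q_r^*$.} I then verify that the restriction $\mu^*_{\cdot r}$ is optimal for the Arm-$r$ dual at $\ell^*$. This follows from Step 1 together with weak duality applied arm by arm: the full dual objective is $\sum_v k\lambda_v\ell^*_v+\sum_r\sum_w\lambda_w\mu^*_{wr}$. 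Summed over $r$, each arm's dual value is at least $q_r^*$, and the sum must equal $\mathrm{OPT}(\text{OCC-LP}(0))=\sum_r q_r^*+\sum_v k\lambda_v\ell^*_v$. Equality therefore holds arm by arm.
\item \emph{Well-defined thresholds.} I also note $\phi_r^*>0$, which follows from the constraint displayed in Step 1 and $\lambda_v>0$. This makes the threshold $\lambda_v\phi_r^*$ meaningful.
\end{itemize}
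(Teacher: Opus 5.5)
Your argument is correct, but it takes a different route from the paper.

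\textbf{The paper's route.} The paper argues through Lemma~\ref{lem:cs}. It first derives the closed form $q_r^*=\sum_v\lambda_v\max\{L(v,r),0\}$ from the Bellman equation of the Arm-$r$ bandit, so it depends on Lemmas~\ref{lem:bandit-lp-equivalence} and~\ref{lem:bandit-optimal}. It then substitutes $s_{vr}=L(v,r)+\ell_v^*+\E[D_v]q_r^*$ into the objective. This writes the duality gap of any feasible $(u,\phi)$ as $\sum_{L(v,r)>0}L(v,r)(\lambda_v\phi_r-u_{vr})+\sum_{L(v,r)<0}(-L(v,r))u_{vr}$, a sum of nonnegative terms that vanishes exactly when the sign conditions hold.

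\textbf{Your route.} You stay entirely within finite LP duality. You put multipliers $\mu_{vr}\ge 0$ on the bound constraints and use arm-level strong duality to identify $\sum_w\lambda_w\mu^*_{wr}$ with $q_r^*$. That turns the dual constraint into the reduced-cost inequality $L(v,r)\le\mu^*_{vr}$, and complementary slackness finishes the argument.

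\textbf{How they compare.} The two are closely linked: the paper's identity (i) says precisely that $\mu_{vr}=\max\{L(v,r),0\}$ is an optimal Arm-$r$ dual. So the paper exhibits the certificate explicitly, while you only need one to exist. Your route is more elementary and self-contained, since it needs no average-reward MDP theory. The paper's route buys an interpretable scalar fixed-point equation for $q_r^*$. It also shows the sign conditions are sufficient for optimality, which your argument does not directly give but the proposition does not need.

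\textbf{The ``main obstacle'' is not real.} Complementary slackness holds between every primal optimum and every dual optimum of an LP. Your Step~1 already shows $u^*_{\cdot r}$ is Arm-$r$-optimal for \emph{any} primal optimum and \emph{any} Lagrangian-dual optimum $\ell^*$. Hence any optimal arm dual $\mu^*_{\cdot r}$ works, and you never need to re-choose $\ell^*$, which is fixed in the definition of $L$. In fact you prove the sign conditions for every optimal $(u^*,\phi^*)$, as Lemma~\ref{lem:cs}(iii) does.

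\textbf{One unjustified step.} You assert feasibility of $\text{OCC-LP}(0)$ without justification. This is where Assumption~\ref{assum:underload} enters, and the paper invokes it explicitly. Boundedness is automatic, since $u_{vr}\le\lambda_v$.
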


In the full model, LAG assigns exactly $k$ servers to each arrival, so this decoupled optimality result does not translate directly. In Section~\ref{sec:mf}, we establish a mean-field scaling regime in which this distinction vanishes.

To prove Proposition~\ref{prop:lag-characterization}, we translate the Arm-$r$ LP into a bandit problem.
Lemma~\ref{lem:bandit-lp-equivalence} shows the optimal average reward of the bandit equals~$q_r^*$.
{\color{red}Lemma~\ref{lem:bandit-optimal} shows that, at the dual-optimal penalties, positive indices require acceptance, negative indices require rejection, and zero indices permit either action or randomization.}
{\color{red}Lemma~\ref{lem:cs} then establishes Proposition~\ref{prop:lag-characterization} by expressing the primal--dual gap as a sum of nonnegative terms.} Proofs of Lemmas~\ref{lem:bandit-lp-equivalence} and \ref{lem:bandit-optimal} are deferred to Appendix~\ref{app:lag-derivation}.

\begin{definition}[Arm-$r$ bandit]\label{def:bandit}
Given a penalty vector $\boldsymbol{\ell} \in \mathbb{R}^V$, the \emph{Arm-$r$ bandit} is a
single server observed in discrete time:
\begin{description}
  \item[State.] The server is either free ($X = 0$) or busy with
    $X \in \{1,\ldots,d\}$ steps of service remaining.
    At each step a type-$v$ task arrives with probability $\lambda_v$ (and none with
    probability $1 - \sum_v \lambda_v$), independently of the server state.

  \item[Action.] When the server is free and a type-$v$ task arrives, a policy
    chooses to accept ($a=1$) or reject ($a=0$).
    Acceptance is only possible when the server is free.

  \item[Reward.] Accepting a type-$v$ task yields immediate reward $s_{vr} - \ell_v$;
    the server then becomes busy for $D_v$ steps, drawn i.i.d.\ ($D_v \le d$)
    and independently of all other quantities.

  \item[Objective.] Let $g_\pi$ denote the long-run average reward per step under
    policy~$\pi$ started from the free state. The time-average Arm-$r$ bandit
    maximizes $g_\pi$ over policies~$\pi$.
\end{description}
The penalty $\ell_v^*$ is the shadow price of the demand constraint for
type~$v$: It reflects the opportunity cost of serving a type-$v$ arrival,
as determined by the global demand constraints~\eqref{lp:demand}.
\end{definition}

\begin{lemma}[Arm-$r$ Bandit--LP equivalence]\label{lem:bandit-lp-equivalence}
{\color{red}At penalties $\ell^*$, the optimal average reward of the Arm-$r$ bandit equals the value of the Arm-$r$ LP:}
$g_* = q_r^*$.
\end{lemma}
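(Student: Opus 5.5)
We prove the two inequalities $g_* \ge q_r^*$ and $g_* \le q_r^*$ separately. Both rely on the fact that the Arm-$r$ bandit is a finite, unichain average-reward MDP. Its state space is $\{0,1,\ldots,d\}$ (augmented with the arriving type when the server is free). From the free state every policy returns to $0$ within at most $d+1$ steps, so $0$ is a regeneration point. Optimal average reward is therefore attained by stationary (possibly randomized) policies, and these are described by acceptance probabilities $\alpha_v\in[0,1]$ for each type $v$ at state $0$.

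\emph{Achievability ($g_*\ge q_r^*$).} Take an optimal solution $(u_{\cdot r},\phi_r)$ of the Arm-$r$ LP.
\begin{itemize}
\item First, $\phi_r>0$. If $\phi_r=0$, the bound $u_{vr}\le\lambda_v\phi_r$ forces $u_{\cdot r}=0$, and then the capacity constraint gives $\phi_r=1$, a contradiction.
\item Define $\alpha_v=u_{vr}/(\lambda_v\phi_r)\in[0,1]$, with $\alpha_v=0$ when $\lambda_v=0$, and let $\pi_\alpha$ accept a free-state type-$v$ arrival with probability $\alpha_v$.
\item Use a renewal-reward computation over cycles starting at $X=0$. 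Each step at $0$ triggers an acceptance with probability $p=\sum_v\lambda_v\alpha_v$. An idle period has expected length $1/p$, and a busy period has expected length $\sum_v\lambda_v\alpha_v\E[D_v]/p$. Hence the stationary free fraction is $\phi=1/(1+\sum_v\lambda_v\alpha_v\E[D_v])$, and the stationary rate of type-$v$ acceptances is $\lambda_v\alpha_v\phi$.
\item Substituting $\alpha_v$ gives $\sum_v\lambda_v\alpha_v\E[D_v]=\sum_v \E[D_v]u_{vr}/\phi_r=(1-\phi_r)/\phi_r$, using the capacity constraint. So $\phi=\phi_r$, and the acceptance rates equal $u_{vr}$.
\end{itemize}
The average reward of $\pi_\alpha$ is then $\sum_v(s_{vr}-\ell^*_v)u_{vr}=q_r^*$. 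This is the same "realize the occupancies" construction invoked before Proposition~\ref{prop:lag-characterization}.

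\emph{Optimality ($g_*\le q_r^*$).} Fix an arbitrary policy $\pi$. Define $\phi^\pi$ and $u^\pi_v$ as the long-run free fraction and the long-run type-$v$ acceptance rate, exactly as in the proof of Proposition~\ref{prop:lp_upper_bound}; take $\limsup$ or pass along a subsequence if the limits fail to exist.
\begin{itemize}
\item Because $v_t$ is independent of the past, the martingale argument from that proof gives $u^\pi_v\le\lambda_v\phi^\pi$.
\item Each acceptance of type $v$ occupies the server for an independent duration $D_v$ of mean $\E[D_v]$. Little's law, applied via a strong law for the i.i.d.\ durations indexed by acceptance count, gives $1-\phi^\pi=\sum_v\E[D_v]u^\pi_v$. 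Boundary effects vanish since $D_v\le d$.
\end{itemize}
Hence $(u^\pi,\phi^\pi)$ is feasible for the Arm-$r$ LP. The average reward of $\pi$ is $\sum_v(s_{vr}-\ell^*_v)u^\pi_v$, so $g_\pi\le q_r^*$.

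\emph{Main obstacle.} I expect the upper bound for general, non-stationary and history-dependent policies to be the delicate step, specifically in making the time averages well defined. The fix I would try first is the standard MDP reduction: by finiteness of the state and action sets, an optimal stationary randomized policy exists. For stationary policies, the ergodic limits exist and the computation above applies directly, so the bound then extends to all policies. Alternatively, one can work with $\limsup$ along a subsequence on which all the relevant averages converge, and run the same feasibility argument there. Nothing in this argument uses the specific value $\ell^*$; the identity holds for any penalty vector, which is consistent with the statement.
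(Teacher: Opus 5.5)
Your proposal is correct and follows the paper's proof: the upper bound shows the occupancy pair of any bandit policy is feasible for the Arm-$r$ LP (martingale bound plus Little's law), and achievability uses the randomized policy that accepts with probability $u_{vr}/(\lambda_v\phi_r)$. Your renewal-reward formula for the free fraction is equivalent to the paper's solving the Little's-law fixed-point equation $\phi = 1 - \tfrac{\phi}{\phi_r}(1-\phi_r)$. Your explicit checks that $\phi_r>0$ and that the time averages are well defined (via reduction to stationary policies) fill in points the paper leaves implicit.
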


\begin{lemma}\label{lem:bandit-optimal}
{\color{red}At penalties $\ell^*$, an optimal stationary policy for the time-average Arm-$r$ bandit can be specified by acceptance probabilities $\pi^*(0,v)$ satisfying:}
\begin{align*}
  s_{vr} - {\color{red}\ell_v^*} > \mathbb{E}[D_v]\, q_r^* &\implies \pi^*(0, v) = 1, \\
  s_{vr} - {\color{red}\ell_v^*} = \mathbb{E}[D_v]\, q_r^* &\implies \pi^*(0, v) \in {\color{red}[0,1]}, \\
  s_{vr} - {\color{red}\ell_v^*} < \mathbb{E}[D_v]\, q_r^* &\implies \pi^*(0, v) = 0.
\end{align*}
\end{lemma}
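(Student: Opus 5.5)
We work with the average-reward optimality equation of the Arm-$r$ bandit at penalties $\ell^*$. The bandit is a finite unichain MDP, since every state $j\ge 1$ deterministically decreases to $0$. By Lemma~\ref{lem:bandit-lp-equivalence} its optimal gain is $g_*=q_r^*$. We use a relative value function $h$, normalized by $h(0)=0$, and work at decision epochs: at state $0$ the type $v$ is observed before acting. In this convention the free state has value $0$.

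\textbf{Step 1: bias of the busy states.} For $j\ge 1$ the chain moves deterministically $j\to j-1$ with zero reward, so the Bellman equation $h(j)=-g_*+h(j-1)$ gives $h(j)=-j\,g_*$.

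\textbf{Step 2: expected bias after acceptance.} Accepting a type-$v$ task puts the server in state $D_v$ at the next step. The expected continuation bias is therefore
\[
\mathbb{E}[h(D_v)]=-\mathbb{E}[D_v]\,q_r^*.
\]
Rejecting returns the server to state $0$, with bias $h(0)=0$.

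\textbf{Step 3: the Bellman equation at state $0$.} It reads
\[
h(0)+g_*=\sum_v\lambda_v\max\bigl\{\,s_{vr}-\ell_v^*+\mathbb{E}[h(D_v)],\;h(0)\,\bigr\}+\Bigl(1-\sum_v\lambda_v\Bigr)h(0).
\]
Consequently the stationary policy that is greedy with respect to $h$ is optimal. That policy accepts exactly when $s_{vr}-\ell_v^*-\mathbb{E}[D_v]q_r^*>0$ and rejects when this quantity is negative. On ties ($=0$) both actions attain the maximum, so any randomization $\pi^*(0,v)\in[0,1]$ preserves the equation and hence optimality. This is precisely the three-case statement.

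\textbf{Main obstacle.} The delicate step is justifying that $(g_*,h)$ with $h(j)=-jq_r^*$ actually solves the state-$0$ equation. Equivalently, we must show
\[
q_r^*=\sum_v\lambda_v\bigl(s_{vr}-\ell_v^*-\mathbb{E}[D_v]q_r^*\bigr)^+,
\]
with $g_*$ equal to $q_r^*$ rather than merely bounded by it. The function $F(g)=\sum_v\lambda_v(s_{vr}-\ell_v^*-\mathbb{E}[D_v]g)^+$ is nonincreasing, while $g\mapsto g$ is increasing, so $F(g)=g$ has a unique root $\bar g$. We show $\bar g$ equals the optimal gain using renewal-reward.

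\emph{Optimality of $\bar g$.} Fix any stationary policy with acceptance probabilities $p_v$. A renewal cycle starts at a free step and ends at the next return to state $0$. The cycle reward is $\sum_v\lambda_vp_v(s_{vr}-\ell_v^*)$ and the cycle length is $1+\sum_v\lambda_vp_v\mathbb{E}[D_v]$. The policy's gain $g$ satisfies
\[
g\Bigl(1+\sum_v\lambda_vp_v\mathbb{E}[D_v]\Bigr)=\sum_v\lambda_vp_v(s_{vr}-\ell_v^*),
\]
which rearranges to $g=\sum_v\lambda_vp_v(s_{vr}-\ell_v^*-\mathbb{E}[D_v]g)\le F(g)$. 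Since $F$ is nonincreasing and the identity is increasing, $g\le F(g)$ forces $g\le\bar g$. Equality holds exactly for the threshold policies described above, so $\bar g$ is the optimal gain.

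\emph{Identification with $q_r^*$.} Lemma~\ref{lem:bandit-lp-equivalence} gives $\bar g=g_*=q_r^*$, which also justifies the bias computation. To make the tie case rigorous, note that every tie randomization gives equality in the renewal identity. The reason is that the tied terms contribute exactly zero at $g=q_r^*$.
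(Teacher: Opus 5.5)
Your proof is correct and follows the paper's argument: the average-reward Bellman equation with $h(0)=0$ forces $h(x)=-gx$ on busy states, so acceptance is optimal iff $s_{vr}-\ell_v^*\ge g\,\mathbb{E}[D_v]$, and Lemma~\ref{lem:bandit-lp-equivalence} identifies $g=q_r^*$. The one addition is your renewal-reward check that $q_r^*$ is the unique root of $g=\sum_v\lambda_v(s_{vr}-\ell_v^*-\mathbb{E}[D_v]g)^+$. The paper instead relies on the standard existence of a solution to the optimality equation for this finite unichain MDP; your check makes that step self-contained, yields Lemma~\ref{lem:cs}(i) directly, and shows that the strict-inequality cases are forced for every optimal stationary policy when $\lambda_v>0$.
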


{\color{red}An arbitrary optimal policy for the penalized Arm-$r$ bandit need not realize the selected primal optimum: it may choose different actions at zero indices. The calibrated probabilities $\alpha_{vr}$ above realize $(u^*,\phi^*)$ by the achievability construction in Lemma~\ref{lem:bandit-lp-equivalence}. Separately, Lemma~\ref{lem:cs} shows that the sign conditions of Proposition~\ref{prop:lag-characterization} hold for every optimal primal solution paired with any optimal dual solution.}

\section{Mean-Field Optimality}
\label{sec:mf}
In this section, we propose a mean-field scaling regime and prove that $\Lag$ is asymptotically optimal in the scaling regime.

Our analysis follows the Weber--Weiss tradition of mean-field proofs for index policies~\cite{weber1990index,verloop2016asymptotic}, which rests on delicate fixed-point and convergence arguments for finite Markov chains. These arguments require each arm to have a finite state space, which the discrete-time model of Section~\ref{subsec:model} provides: With bounded service durations, each server's state lies in $\{0, \ldots, d\}$.

\subsection{The Mean Field Scaling Regime}

A rich body of prior work studies mean-field asymptotic optimality of index policies for restless bandit problems~\cite{nino2023markovian,weber1990index,verloop2016asymptotic,larranaga2015dynamic}. The classical result of \cite{weber1990index} shows that when there are $n$ identical arms, the Whittle index policy is optimal as $n \to \infty$. However, these results do not directly extend to our \emph{contextual} RMAB setting. It is not immediately clear how a CRMAB should be scaled. We begin by considering two natural approaches:

\paragraph{Batch-scaling ($n$-scaling).}
Suppose we scale the number of resources by a factor of $n$, so that there are $n$ copies of each resource type. At each time step, a single arrival $v_t$ is assigned $nk$ resources out of a pool of size $nR$. This model is depicted in Figure~\ref{fig:n_scaled_system} (Appendix~\ref{app:scaling-figs}). \cite{chen2024contextual} consider a related scaling regime for general CRMABs.

The difficulty is that the arrival process remains stochastic at unit rate. As a result, even as $n \to \infty$, the system does not converge to a deterministic fixed point. Instead, the evolution of the system depends on the realized sequence of contexts, and the fluid limit may exhibit non-trivial dynamics (e.g., limit cycles) driven by the context process.

\paragraph{Buffet-scaling ($m$-scaling).}
Alternatively, suppose there are $m$ i.i.d.\ arrivals at each time step, and $m$ copies of each resource type. Each arrival must be matched to $k$ free resources, thus a total of $mk$ assignments are made at each step. This model is depicted in Figure~\ref{fig:scaling:m} (Appendix~\ref{app:scaling-figs}).

This avoids the limit-cycle issue, since averaging over many arrivals produces a deterministic arrival sequence in the limit. However, this scaling fundamentally changes the problem. At each time step, assignments can be made jointly across all arrivals, allowing resources to effectively choose the most favorable matches among a batch of tasks. This is strictly more powerful than in the original problem, in which each resource only observes a single arrival.

Formally, under this scaling, the induced occupancy measures $(u^{\pi}, \phi^{\pi})$ may violate the constraint \eqref{lp:bound} in \ref{OCC-LP}. The limiting LP (e.g., the KIID LP~\cite{dickerson2021allocation}) therefore provides a looser upper bound than OCC-LP. In our experiments, heuristics based on this LP almost always perform worse than those based on the tight OCC-LP bound, likely because the relaxation models the problem less tightly. This is discussed in Section~\ref{sec:experiments}.

Thus, neither scaling provides the right generalization of the Weber--Weiss regime: $n$-scaling preserves the problem structure but does not yield a deterministic limit, while $m$-scaling yields a deterministic limit but relaxes the problem too much.

\begin{figure}[htb]
\centering
\includegraphics[width=0.6\textwidth]{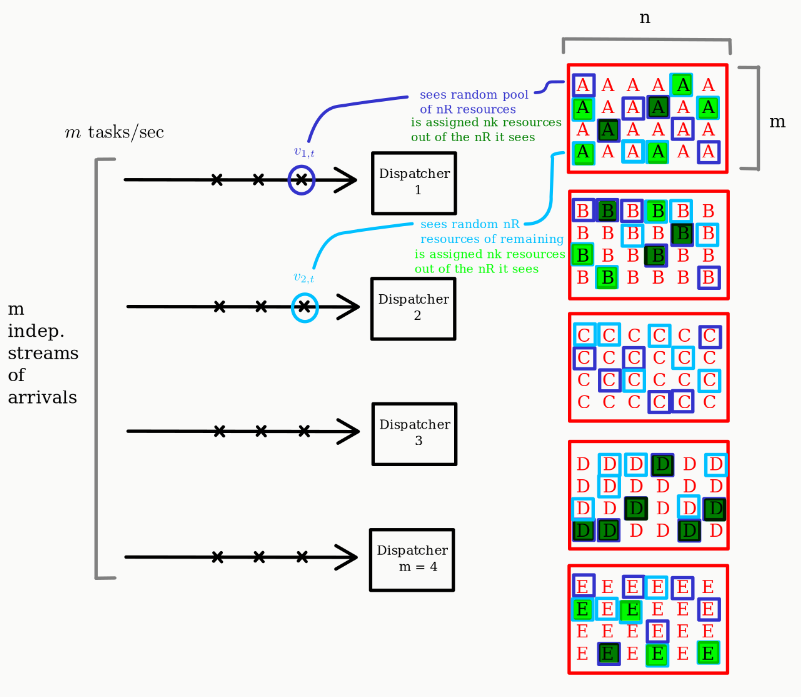}
\caption{$(m,n)$-scaled system ($m{=}4$, $n{=}6$, $R{=}5$, $k{=}2$): $m$ independent streams share
$mn{=}24$ copies of each resource type (A--E), split into $m$ groups of $n$ each per timestep.
Blue cell $=$ Dispatcher$_1$'s partition ($nR{=}30$ total); green cell $=$ assigned this step ($nk{=}12$ total)}
\label{fig:scaling:mn}

\end{figure}

\paragraph{The $(m,n)$-scaling.}
We propose a partitioned scaling that combines both ideas. We scale the number of arrivals and resources, while constructing a system that ensures that each resource only observes one arrival per time step.

At each time step, $m$ i.i.d.\ arrivals occur, and there are $mn$ resources of each type. For each resource type, the $mn$ resources are uniformly partitioned into $m$ groups of size $n$, and each group is assigned to one arrival. Thus, each arrival observes $n$ resources of each type (for a total of $nR$), and must select $nk$ of them. Assigned resources become unavailable for $D_v$ steps (at most $d$).

We scale the arrival rate by $m$, the demand per arrival by $n$, and the resource pool by $mn$, so that the system load remains unchanged:
\[
  \rho = \frac{k}{R}\sum_v \lambda_v\,\E[D_v].
\]

The load $\rho$ is the long-run fraction of busy servers, the same for every
work-conserving policy that serves all demand. Our optimality result requires the system
to stay strictly underloaded.

\begin{assumption}[Strict underload]\label{assum:underload}
\[
  \rho \;<\; 1 - \frac{k}{R}.
\]
\end{assumption}

This is only slightly stronger than $\rho < 1$. For any feasible OCC-LP solution, the bound constraint~\eqref{lp:bound} gives $\sum_r u_{vr} \le \lambda_v \sum_r \phi_r$. The demand constraint~\eqref{lp:demand} fixes the left side at $k\lambda_v$, so $k \le \sum_r \phi_r$. Summing the capacity constraint~\eqref{lp:capacity} over $r$ and using the demand constraint gives $\sum_r \phi_r = R - k\sum_v \lambda_v \mathbb{E}[D_v] = R(1-\rho)$. Hence $k \le R(1-\rho)$, i.e., $\rho \le 1-k/R$. Assumption~\ref{assum:underload} asks for the strict inequality.

Under this scaling, the system admits a mean-field limit as $m,n \to \infty$. The randomness in arrivals and resource states averages out, and each arrival effectively sees a deterministic fraction of available resources of each type. At the same time, the per-resource interaction structure of the original problem is preserved.

The $(m,n)$-scaling admits a natural operational interpretation as a large-scale, distributed matching system with limited visibility, depicted in Figure~\ref{fig:scaling:mn}.

\begin{enumerate}
    \item First, the $m$ arrivals at each time step can be viewed as $m$ independent \emph{dispatchers} (or frontends), each receiving one task. Rather than a single centralized decision-maker with access to the entire resource pool, decisions are made locally and in parallel.
    \item Second, the random partitioning of the $mn$ resources into $m$ groups of size $n$ models \emph{limited access to resources}. Each dispatcher only observes a subset of size $n$ of each resource type, rather than the full system of size $mn$. This reflects practical constraints in large systems, where querying or coordinating across all resources is infeasible due to latency or computational overhead. The partitioning can be interpreted as randomized load balancing or hashing of resources to dispatchers. The partition is redrawn at random at every time step.
    \item Third, each resource is queried by exactly one dispatcher per time step. This enforces that a resource is matched based only on a \emph{single} arriving context, as in the original problem. In particular, resources cannot compare across multiple simultaneous arrivals, avoiding the “buffet” effect of $m$-scaling where resources effectively choose their most preferred task from a batch.
\end{enumerate}

We define $(m,n)$-scaling formally below. 

\begin{definition}[$(m,n)$-scaled system]
\label{def:fluid_scaled}
Fix $m,n \in \mathbb{N}$. The \emph{$(m,n)$-scaled system} is defined as follows.

\begin{description}
    \item[Resources.] For each type $r \in [R]$, there are $mn$ resources of type $r$.
    \item[Arrivals.] For each timestep $t$, $m$ i.i.d.\ tasks arrive.
    \item[Distribution.] The $mn$ resources of each type are split into $m$ uniform partitions, such that every resource sees a uniformly chosen arrival and each arrival sees $n$ uniformly chosen resources of each type.  
    \item[Assignment.] {\color{red}An arriving task is assigned exactly $nk$ free resources from its dispatcher's partition if at least $nk$ are free; otherwise it is dropped without any assignment.}
    \item[Service.] {\color{red}Each accepted type-$v$ task draws one duration $D_v\in\{1,\ldots,d\}$, independently across tasks conditional on their types and independently of partitioning and assignment. All $nk$ resources assigned to that task share this draw and return together.}
\end{description}

For any policy $\pi$, let $\Reward_{m,n}(\pi)$ denote the long-run average total reward per timestep under $\pi$ in the $(m,n)$-scaled system. {\color{red}When dispatcher $j$ has at least $nk$ free resources, scaled \Lag\ uses the same LP-calibrated allocation rule with budget $k$ and capacities $c_r=C_r/n$, where $C_r$ is its free count of type $r$, then rounds to exactly $nk$ assignments as specified in Appendix~\ref{app:lag-ties}. Writing $a$ for the fractional allocation and $\hat a$ for the rounded counts, this rule guarantees
\[
  0\le\hat a_r\le C_r,\qquad \sum_r\hat a_r=nk,
  \qquad \left|\frac{\hat a_r}{n}-a_r\right|<\frac1n.
\]
At $n=1$, it recovers Definition~\ref{def:lag}.}
\end{definition}

In the remainder of the section we show \Lag\ is asymptotically optimal in the
$(m,n)$-scaled loss system: Its normalized reward converges to
$\mathrm{OPT}(\text{OCC-LP}(0))$ (achievability), and no work-conserving policy exceeds
this value in the limit (upper bound). The upper bound combines
Proposition~\ref{prop:lp_upper_bound}, which bounds any policy by
$\mathrm{OPT}(\text{OCC-LP}(B_{m,n}))$, with the fact that blocking vanishes
(Proposition~\ref{prop:congestion}), so the bound tends to
$\mathrm{OPT}(\text{OCC-LP}(0))$.
\begin{theorem}[Asymptotic optimality of \Lag]
\label{thm:asymptotic_optimality}
Under Assumptions~\ref{assum:underload} and \ref{assum:gap}, in the $(m,n)$-scaled loss
system,
\[
  \frac{\Reward_{m,n}(\Lag)}{mn} \;\longrightarrow\; \mathrm{OPT}(\text{\upshape OCC-LP}(0))
  \qquad \text{as } m,n \to \infty,
\]
and $\displaystyle\limsup_{m,n\to\infty}\frac{\Reward_{m,n}(\pi)}{mn}
\;\le\; \mathrm{OPT}(\text{\upshape OCC-LP}(0))$ for every work-conserving policy $\pi$.
Hence \Lag\ is asymptotically optimal.
\end{theorem}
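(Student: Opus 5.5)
The proof has two halves: an upper bound and achievability.

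\emph{Upper bound.} Fix a work-conserving policy $\pi$ in the $(m,n)$-scaled system. I would redo the argument of Proposition~\ref{prop:lp_upper_bound}, normalizing by $mn$. Let $u^\pi_{vr}$ be the assignment rate per copy of type $r$ to type-$v$ tasks, and $\phi^\pi_r$ the free fraction of type-$r$ copies. Little's law gives \eqref{lp:capacity} exactly.

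The bound constraint \eqref{lp:bound} still holds. Each resource is shown exactly one arrival per step, and the partition is redrawn independently of the past, so its context is a fresh draw with law $\lambda$ independent of $\mathcal{F}_t$. The same martingale argument then applies per resource.

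The demand constraint becomes $\sum_r u^\pi_{vr}=k\lambda_v(1-B_{m,n})$, where $B_{m,n}$ is the blocking probability of a dispatcher. Hence the reward of $\pi$ divided by $mn$ is at most $\mathrm{OPT}(\text{OCC-LP}(B_{m,n}))$.

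Proposition~\ref{prop:congestion} gives $B_{m,n}\to 0$. Assumption~\ref{assum:underload} makes OCC-LP$(0)$ strictly feasible, since the $\phi$-slack is $R(1-\rho)-k>0$. So the LP value is continuous in the right-hand side near $B=0$, and the $\limsup$ bound follows.

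\emph{Achievability.} I would follow Verloop's fluid approach. Let $z^{m,n}(t)\in\Delta(\{0,\dots,d\})^R$ be the empirical state distribution of each resource type. Under scaled \Lag, each dispatcher sees $n$ uniformly sampled copies of each type. By the hypergeometric law of large numbers, its free counts satisfy $C_r/n\to z_r(0)$ uniformly over dispatchers, and the dispatcher types have empirical law $\lambda$.

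The allocation rule is a continuous function of the capacities $c=C/n$, up to $O(1/n)$ rounding. Hence $z^{m,n}$ follows, up to vanishing error, a deterministic map $z(t+1)=F(z(t))$. Here $F$ sends a $(v,r)$-free mass $\lambda_v a_{vr}(z_{\cdot}(0))$ into state $D_v$ and shifts all other busy mass down one step.

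Concentration over a finite horizon uses Azuma/Hoeffding over the $m$ dispatchers and $n$ copies, and bounded-difference control for the rounding. This gives $\sup_{t\le T}\|z^{m,n}(t)-z(t)\|\to 0$ in probability. The long-run reward therefore approaches the reward of the fluid trajectory, provided the fluid map has a globally attracting fixed point and the stationary measures of $z^{m,n}$ concentrate there.

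\emph{Fixed point.} I would show that $z^*$, with $z^*_r(0)=\phi^*_r$ and busy levels given by the renewal distribution of $\sum_v u^*_{vr}$ and $D_v$, is a fixed point of $F$. The tie calibration in Appendix~\ref{app:lag-ties} is designed so that at capacities $\phi^*$ the allocation equals $a_{vr}=u^*_{vr}/\lambda_v$. This uses Proposition~\ref{prop:lag-characterization}: positive-index types are fully used, negative-index types are unused, and zero-index types fill exactly the remaining demand $k\lambda_v$.

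The fixed-point reward is $\sum s_{vr}u^*_{vr}=\mathrm{OPT}(\text{OCC-LP}(0))$. Strict underload ensures that each dispatcher has at least $nk$ free copies near $z^*$, so no blocking occurs there.

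\emph{Main obstacle.} The hardest step is global attraction of $z^*$ under $F$, which is where I expect the non-degeneracy Assumption~\ref{assum:gap} to enter. My first attempt would be a contraction argument. Since all service durations are at most $d$, the state $z(t)$ is determined by the free fractions over the last $d$ steps. I would show that the map from the free-fraction history to the new free fractions is a contraction in a weighted $\ell_1$ norm. The gap assumption should make the index ordering locally rigid, so the allocation is piecewise-linear with slope controlled by $\lambda$.

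If that fails, I would fall back on Verloop's route: assume (or derive from Assumption~\ref{assum:gap}) global attraction, then combine tightness of the stationary laws with the finite-horizon convergence. Every limit point of the stationary measures is then an invariant measure of $F$, hence concentrated at $z^*$. This yields $\Reward_{m,n}(\Lag)/mn\to\mathrm{OPT}(\text{OCC-LP}(0))$.
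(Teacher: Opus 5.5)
Your upper-bound half is the paper's argument, and you are more explicit than the paper about why \eqref{lp:bound} survives the redrawn partitions. Your achievability skeleton is also the paper's: tightness of $\mu^{m,n}$, invariance of subsequential limits under the fluid map, the calibrated waterfill reproducing $u^*$ at $\hat x$, and uniqueness. Note that Assumption~\ref{assum:gap} \emph{is} the global attractor property, not a rigidity condition on the index ordering. So the contraction attempt is unnecessary, and one should not expect it in general (the paper, following Weber--Weiss, simply assumes attraction). Your fallback is the actual proof.

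\textbf{The gap.} You claim that $z^{m,n}$ follows $F$ up to vanishing error from every state. Scaled \Lag\ drops a task whenever its dispatcher sees fewer than $nk$ free copies. So the real allocation is not continuous in $c=C/n$: it jumps from $0$ to total mass $k$ at $\sum_r c_r=k$.
\begin{itemize}
\item If you build $F$ on the waterfill, $F$ is continuous but wrong at shortage states. When $\sum_r z_r(0)<k$ the loss system makes essentially no assignments, while $F$ assigns $\min(k,\sum_r z_r(0))$ per arrival.
\item If you build the drop rule into $F$, $F$ is discontinuous. That breaks the weak-convergence step ``limit points are $F$-invariant.''
\end{itemize}
Either way, one-step (or finite-horizon) convergence holds uniformly only on a well-stocked set $\mathcal{X}_\gamma=\{\sum_r x_{r,0}\ge k+\gamma/2\}$, with $\gamma=R(1-\rho)-k>0$. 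The invariance step therefore needs $\mu^{m,n}(\mathcal{X}\setminus\mathcal{X}_\gamma)\to 0$. Your remark that there is no blocking near $z^*$ cannot supply this, because concentration near $z^*$ is exactly what you are proving.

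\textbf{The missing ingredient} is the policy-independent bound of Lemma~\ref{lem:free-conc}. The number of tasks in service is dominated by the number of arrival slots in the last $d$ steps, over all $m$ dispatchers, whose drawn duration exceeds their age. These are $md$ independent exogenous indicators with mean $m\sum_v\lambda_v\mathbb{E}[D_v]$. Hoeffding then gives $\mu^{m,n}(\mathcal{X}\setminus\mathcal{X}_\gamma)\le e^{-\gamma^2 m/(2k^2d)}$. You use this implicitly through Proposition~\ref{prop:congestion} in the upper bound. It is needed in the achievability half too, in the invariance step; the paper also uses it for the reward error in Lemma~\ref{lem:mn_reward}.

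\textbf{A smaller point.} ``$C_r/n\to z_r(0)$ uniformly over dispatchers'' requires a union bound over $m$ dispatchers. That fails unless $\log m=o(n)$, which the joint limit does not guarantee. Only dispatcher averages enter the transition, so use the per-dispatcher bound $\mathbb{E}\|\phi^{(j)}-\phi\|_1\le R/\sqrt{n}$ and average, as the paper does.
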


Below we state the lemmas we use to prove Theorem~\ref{thm:asymptotic_optimality}. Proofs are deferred to Appendix~\ref{app:meanfield}.
\subsection{Lemmas Concerning the Waterfill Function}\label{sec:waterfill}

The proof of Theorem~\ref{thm:asymptotic_optimality} relies on two properties of the OCC-LP optimum: its waterfill structure and the Lipschitz continuity of the waterfill map.

\begin{revblock}
{\color{red}We write $\waterfill(b,c;\iota,w)$ for the fractional allocation that fills capacities $c$ in decreasing index order with budget $b$, using capped proportional allocation with weights $w$ among zero-index servers. Its exact three-stage definition is given in Appendix~\ref{app:lag-ties} (Definition~\ref{def:waterfill}). This is the allocation map used by both the finite and fluid versions of \Lag.}
\begin{lemma}[$4$-Lipschitz]\label{lem:waterfill_lipschitz}
For fixed $b,\iota,w,\pi$, the map $c\mapsto\PW(b,c;\iota,w)$ on $[0,1]^R$ satisfies $\|\PW(b,c')-\PW(b,c)\|_1\le 4\|c'-c\|_1$.
\end{lemma}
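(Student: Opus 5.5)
The plan is to reduce to perturbing a single coordinate and then sum along a path. Write $c'$ as the endpoint of a path $c=c^{(0)},c^{(1)},\ldots,c^{(R)}=c'$ in which $c^{(i)}$ differs from $c^{(i-1)}$ only in coordinate $i$. Then $\|c'-c\|_1=\sum_i |c'_i-c_i|$. By the triangle inequality it suffices to show the following single-coordinate bound: if $c$ and $\tilde c$ differ only in coordinate $i$, with $|\tilde c_i-c_i|=\delta$, then
\[
\|\PW(b,\tilde c)-\PW(b,c)\|_1\le 4\delta .
\]
By continuity and piecewise linearity of every stage in $c$, we may further assume $\delta$ is small enough that no stage boundary is crossed in the interior of the perturbation, and then integrate.

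Next I would use the three-stage structure of Definition~\ref{def:waterfill}. Stage~1 fills positive-index servers in decreasing index order. Stage~2 splits the residual budget among zero-index servers by capped proportional allocation $a_r=\min(c_r,\theta w_r)$, with the level $\theta$ chosen so the total matches the residual. Stage~3 fills negative-index servers in decreasing order with whatever budget is left.

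I would first prove two elementary facts that hold for each stage separately. \emph{(a) Capacity perturbation.} Changing one capacity in a stage by $\delta$, with that stage's budget held fixed, changes the stage's allocation by at most $2\delta$ in $\ell_1$.
\begin{itemize}
\item For an ordered greedy fill, the cumulative allocations are $A_j=\min(b,\sum_{j'\le j}c_{j'})$ and $a_j=A_j-A_{j-1}$. Then $a_i$ moves by at most $\delta$. All later $a_j$ move in the opposite direction, and their total movement is at most $\delta$, because the total allocated $\min(b,\sum c)$ moves by at most $\delta$.
\item For capped proportional allocation, $\theta$ is monotone in $c_i$. Hence every other coordinate $\min(c_r,\theta w_r)$ moves in a common direction, and again the net total constraint bounds their aggregate movement by $\delta$.
\end{itemize}
\emph{(b) Budget perturbation.} Changing a stage's budget by $\varepsilon$ moves its allocation by at most $\varepsilon$ in $\ell_1$. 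This holds because every coordinate is monotone nondecreasing in the budget and the total changes by at most $\varepsilon$.

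Then I would chain the stages. The residual budget passed from stage $s$ to stage $s+1$ is $b$ minus the stage-$s$ total. This residual is a $1$-Lipschitz function of the capacities of the earlier stages, since each total is $\min(\text{budget},\sum c)$.

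A perturbation of $c_i$ lives in exactly one stage. Within that stage it costs at most $2\delta$ by (a). It changes the residual handed onward by at most $\delta$. The downstream stages then respond to a budget change of size at most $\delta$. The key point is that this budget change splits between them: whatever stage~2 does not absorb is passed to stage~3. So the downstream response totals at most $\delta$, or at most $2\delta$ under a cruder accounting that charges each downstream stage separately. In either case the total is at most $4\delta$, which gives the constant $4$.

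I expect the main obstacle to be the capped proportional stage. There I must verify monotonicity of $\theta$ in $c_i$ and in the budget, including the degenerate cases where all zero-index servers are capped or the weights $w_r$ vanish. I would handle these by writing $\theta$ as the generalized inverse of the nondecreasing piecewise-linear map
\[
\theta\mapsto\sum_r \min(c_r,\theta w_r),
\]
and taking the smallest such $\theta$ so the allocation is well defined. I would treat the case $b\ge\sum_r c_r$ (everything saturated, allocation $=c$) separately; it is trivially $1$-Lipschitz.
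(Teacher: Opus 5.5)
Your proposal is correct and follows essentially the same route as the paper. Both arguments use three ingredients: the budget-monotonicity fact (F1), the $2\delta$ single-coordinate capacity bound (F2) for both the priority fill and the capped proportional fill, and chaining through the residual budgets $b_1=(b-\sum_P c)^+$ and $b_2=(b_1-\sum_T c)^+$. Your per-coordinate costs of $4\delta$, $3\delta$, $2\delta$ for a perturbation in $P$, $T$, $N$ are exactly the paper's bound $4a+3t+2n$.

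There are only two minor differences. The paper handles the crossing case $\Sigma<b\le\Sigma+\delta$ directly rather than subdividing the segment. Your ``split'' remark, that the stage-2 and stage-3 responses together cost at most $|\Delta b_1|$, would in fact sharpen the constant to $3$.
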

\end{revblock}

\begin{revblock}
\begin{lemma}[Fluid fixed point of \Lag]\label{lem:lp-optimal}
Let $(u^*,\phi^*)$, $\ell^*$ and $w$ be as in Definition~\ref{def:lag}.
\begin{enumerate}
  \item[(a)] For every $v\in[V]$ and $r\in[R]$, $u^*_{vr}=\lambda_v\,\waterfill_r\bigl(k,\phi^*;\,L(v,\cdot),\,w_v\bigr)$.
  \item[(b)] Define $\hat x\in\mathcal{X}$ by $\hat x_{r,0}=\phi^*_r$ and $\hat x_{r,i}=\sum_v\Pr(D_v\ge i)\,u^*_{vr}$ for $i=1,\dots,d$. Then $u^{\Lag}(\hat x)=u^*$, $G(\hat x)=\hat x$, and $\sum_{v,r}s_{vr}u^{\Lag}_{vr}(\hat x)=\mathrm{OPT}(\text{OCC-LP}(0))$.
\end{enumerate}
\end{lemma}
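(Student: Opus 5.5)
Part (a) comes first: it identifies the row $u^*_{v\cdot}/\lambda_v$ as the waterfill of budget $k$ over capacities $\phi^*$, ordered by $L(v,\cdot)$. Part (b) then follows from (a) together with a direct fluid-transition computation.

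\emph{Step 1 (structure of a row).} Fix $v$ and set $a_r := u^*_{vr}/\lambda_v$, which is well defined since $\lambda_v>0$. Constraint \eqref{lp:bound} gives $0\le a_r\le \phi^*_r$. The demand constraint of OCC-LP$(0)$ gives $\sum_r a_r = k$. Proposition~\ref{prop:lag-characterization}, strengthened by Lemma~\ref{lem:cs} to hold for the \emph{selected} primal--dual pair, supplies the sign conditions. Every server with $L(v,r)>0$ is filled to capacity ($a_r=\phi^*_r$), and every server with $L(v,r)<0$ gets $a_r=0$. The residual budget $k-\sum_{L>0}\phi^*_r$ therefore lies in $[0,\sum_{L=0}\phi^*_r]$ and is spread over the zero-index servers.

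\emph{Step 2 (matching the three-stage waterfill).} I would check the three stages of Definition~\ref{def:waterfill} one at a time. Stage one fills the positive-index servers, and the budget does not run out there because $\sum_{L>0}\phi^*_r\le k$. Stage two is the capped proportional allocation among zero-index servers with weights $w_{vr}=a_r$. Since these weights already satisfy $w_{vr}\le\phi^*_r$ and sum exactly to the residual budget, the capped proportional rule returns $w$ itself; proportional scaling has factor $1$, so no cap binds. Stage three, over the negative-index servers, receives zero budget. Together these give $\waterfill_r(k,\phi^*;L(v,\cdot),w_v)=a_r$, which is (a).

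I expect Step 2 to be the main obstacle, because it hinges on the exact tie-breaking conventions of the appendix definition. Two degenerate cases need care. If the zero-index weights sum to zero, the residual must also be zero. If some $\phi^*_r=0$, the bound $a_r\le\phi^*_r$ forces $a_r=0$, so the case is harmless. The proportional-scaling argument must be verified to be exactly the identity, not merely approximately so.

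\emph{Step 3 (fixed point).} For (b), $u^{\Lag}(\hat x)_{vr}=\lambda_v\waterfill_r(k,\hat x_{\cdot,0};\ldots)$ with $\hat x_{r,0}=\phi^*_r$, so (a) gives $u^{\Lag}(\hat x)=u^*$ immediately.

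For $G(\hat x)=\hat x$, I would write out the fluid map in the natural form. The new residual-$i$ mass is the mass at $i+1$ plus the fresh assignments with $D_v=i$, and the free mass is $x_{r,0}+x_{r,1}$ minus the assignments. The tail-sum definition $\hat x_{r,i}=\sum_v\Pr(D_v\ge i)u^*_{vr}$ telescopes, so that $\hat x_{r,i+1}+\sum_v\Pr(D_v=i)u^*_{vr}=\hat x_{r,i}$.

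For the free coordinate I need $\phi^*_r+\hat x_{r,1}-\sum_v u^*_{vr}=\phi^*_r$. This holds because $\hat x_{r,1}=\sum_v u^*_{vr}$, as $D_v\ge1$. Consistency of total mass is $\sum_{i\ge1}\hat x_{r,i}=\sum_v\E[D_v]u^*_{vr}=1-\phi^*_r$, which is \eqref{lp:capacity}, so $\hat x\in\mathcal{X}$.

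Finally, the reward identity $\sum_{v,r}s_{vr}u^*_{vr}=\mathrm{OPT}(\text{OCC-LP}(0))$ holds by optimality of $(u^*,\phi^*)$.
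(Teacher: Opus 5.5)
Your proposal is correct and follows essentially the same route as the paper. Both arguments take the sign conditions from Lemma~\ref{lem:cs}(iii), check stage by stage that Stage~1 saturates $P$, that $\theta=1$ in Stage~2 returns $w_v$ on $T$, and that Stage~3 gets zero budget, and then prove (b) with the tail-sum telescoping and the capacity-constraint mass check. One small imprecision: group $N$ also contains zero-index servers with $w_{vr}=0$, not only negative-index ones, but these have $u^*_{vr}=0$ and receive nothing, so your conclusion is unaffected.
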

\end{revblock}

\subsection{The Fluid Differential Equations}\label{sec:fluid-ode}

Let $\mathcal{X} = \bigl\{x \in [0,1]^{R \times (d+1)} : \sum_{i=0}^{d} x_{r,i} = 1 \;\forall\, r \in [R]\bigr\}$
be the fluid state space, where $x_{r,i}$ denotes the fraction of type-$r$ resources
with exactly $i$ steps of service remaining ($i = 0$ means free).

\begin{revblock}
\begin{definition}\label{def:uLag}
For $x\in\mathcal{X}$ and $\phi=(x_{r,0})_r$, the \emph{fluid assignment rate} of $\Lag$ is
\[
  u_{vr}^{\Lag}(x) \;=\; \lambda_v\,\PW_r\bigl(k,\,\phi;\,L(v,\cdot),\,w_v\bigr),
  \qquad w_v:=(w_{vr})_{r\in[R]}.
\]
{\color{red}When $\sum_r x_{r,0}\ge k$, this is the ideal fractional assignment rate before integer rounding. At shortage states, the waterfill partially allocates the available mass only to define a continuous auxiliary map $G$; the stochastic loss system always drops such tasks. Transition approximation will be required only on the well-stocked set $\Xg$ of Lemma~\ref{lem:mn_transition}.}
\end{definition}
\end{revblock}
We define the fluid map $G$ to describe one step of the system when every random quantity equals its average. Its input $x$ gives, for each type $r$, the fraction of type-$r$ resources with $i$ steps of service left. A resource with $i+1$ steps left moves to $i$ steps left. When \Lag\ assigns a free resource to a type-$v$ task, that resource stays busy for $D_v$ steps. The fluid assignment rate $u_{vr}^{\Lag}(x)$ of Definition~\ref{def:uLag} is the fraction of type-$r$ resources that \Lag\ assigns to type-$v$ tasks in one step. The three lines below apply this to free resources ($i=0$), to resources with $i$ steps left for $1 \le i \le d-1$, and to resources with the maximum $d$ steps left.

\begin{definition}[Fluid map]\label{def:fluid_ODE}
  Define the map $G \colon \mathcal{X} \to \mathcal{X}$ by
  \begin{align*}
    G_{r,0}(x)   &= x_{r,0} - \sum_{v} u_{vr}^{\Lag}(x) + x_{r,1}, \\
    G_{r,i}(x)   &= x_{r,i+1} + \sum_{v} \Pr(D_v = i)\cdot u_{vr}^{\Lag}(x),
                    \qquad i = 1, \ldots, d-1, \\
    G_{r,d}(x)   &= \sum_{v} \Pr(D_v = d)\cdot u_{vr}^{\Lag}(x).
  \end{align*}
\end{definition}

The $(m,n)$-scaled system has $mn$ resources of each type, so the fractions $x_{r,i}$ it can realize are multiples of $1/(mn)$. Let $\mathcal{X}^{m,n} \subseteq \mathcal{X}$ be the set of states $x$ with $mn\,x_{r,i} \in \mathbb{N}$ for all $r$ and $i$. This set is finite.

{\color{red}
\begin{lemma}[Transition convergence on well-stocked states]\label{lem:mn_transition}
Under Assumption~\ref{assum:underload}, let $\gamma=R(1-\rho)-k>0$ and
\[
  \Xg=\left\{x\in\mathcal X:\sum_r x_{r,0}\ge k+\gamma/2\right\}.
\]
Let $G^{m,n}$ be the random one-step transition of the scaled loss system under
\Lag. For every $\delta>0$,
\[
  \lim_{m,n\to\infty}\sup_{x\in\mathcal X^{m,n}\cap\Xg}
  \Pr\!\left(\|G^{m,n}(x)-G(x)\|_\infty>\delta\right)=0.
\]
For every bounded continuous $f:[0,1]^{R\times(d+1)}\to\mathbb R$,
\[
  \lim_{m,n\to\infty}\sup_{x\in\mathcal X^{m,n}\cap\Xg}
  |\E[f(G^{m,n}(x))]-f(G(x))|=0.
\]
\end{lemma}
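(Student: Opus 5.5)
The plan is to write the one-step transition of the scaled system explicitly as a sum of contributions from the $m$ dispatchers, and show that each random ingredient concentrates around its fluid counterpart uniformly over $x\in\mathcal X^{m,n}\cap\Xg$. The deterministic part needs no work: resources with $i\ge 1$ steps left move to $i-1$, so the terms $x_{r,i+1}$ and $x_{r,1}$ in Definition~\ref{def:fluid_ODE} are exact. Only the assignment terms are random. For these we introduce the empirical quantities
\[
  U^{m,n}_{vr}=\frac{1}{mn}\#\{\text{type-}r\text{ resources assigned to type-}v\text{ tasks}\},\qquad
  U^{m,n}_{vr,i}=\frac{1}{mn}\#\{\text{such assignments with } D=i\}.
\]
It then suffices to show that $U^{m,n}_{vr}\to u^{\Lag}_{vr}(x)$ and $U^{m,n}_{vr,i}\to\Pr(D_v=i)\,u^{\Lag}_{vr}(x)$ in probability, uniformly over $x\in\mathcal X^{m,n}\cap\Xg$. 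The second statement, for bounded continuous $f$, follows from the first by a standard argument. Because $\mathcal X$ is compact, $f$ is uniformly continuous on it. We split the expectation on the event $\{\|G^{m,n}(x)-G(x)\|_\infty\le\delta\}$ and its complement, bounding the complement's contribution by $2\|f\|_\infty$ times its probability.

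\emph{Step 1: dispatcher-level capacities concentrate.} Dispatcher $j$ sees $C^j_r$ free type-$r$ resources. This count is hypergeometric: it is a draw of $n$ out of the $mn$ resources, of which $mn\,x_{r,0}$ are free. Hoeffding's inequality holds for sampling without replacement, so
\[
  \Pr\bigl(|C^j_r/n-x_{r,0}|>\eta\bigr)\le 2e^{-2n\eta^2},
\]
uniformly in $x$. On $\Xg$ we have $\sum_r x_{r,0}\ge k+\gamma/2$. Taking $\eta<\gamma/(4R)$, any dispatcher whose capacity vector $c^j=C^j/n$ lies within $\eta$ of $\phi$ in each coordinate has at least $nk$ free resources, so its task is accepted. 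By Lemma~\ref{lem:waterfill_lipschitz}, such a dispatcher's fractional allocation satisfies
\[
  \|\PW(k,c^j;L(v,\cdot),w_v)-\PW(k,\phi;L(v,\cdot),w_v)\|_1\le 4R\eta .
\]
The rounding guarantee in Definition~\ref{def:fluid_scaled} adds an error of at most $R/n$. Call the remaining dispatchers \emph{bad}. Each dispatcher is bad with probability at most $2Re^{-2n\eta^2}\to 0$. A bad dispatcher contributes at most $nk/(mn)=k/m$ to $\sum_{v,r}U^{m,n}_{vr}$. Hence, by Markov's inequality, the total contribution of bad dispatchers is $o(1)$ in probability.

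\emph{Step 2: averaging over types and durations.} Conditional on the partition, we write
\[
  U^{m,n}_{vr}=\frac1m\sum_j \mathbf 1[v^j=v]\,\hat a^j_r/n .
\]
The task types $v^j$ are i.i.d.\ and independent of the partition, so they are independent of the capacities $c^j$. For good dispatchers we replace $\hat a^j_r/n$ by $\PW_r(k,\phi;L(v,\cdot),w_v)$, incurring error $4R\eta+R/n$. What remains is $\frac1m\sum_j\mathbf 1[v^j=v]$ times a constant. This average converges to $\lambda_v$ by Chebyshev's inequality, with variance at most $1/m$. The durations are handled the same way. Conditional on types and assignments, each accepted task's $D$ is an independent draw, so $\frac1m\sum_j\mathbf 1[v^j=v,D^j=i]$ concentrates at $\lambda_v\Pr(D_v=i)$. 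Combining, we let $\eta\downarrow0$ slowly and send $m,n\to\infty$; all the bounds are uniform in $x$, which gives the first claim of Lemma~\ref{lem:mn_transition}.

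The main obstacle is dependence. The capacities $C^j$ are not independent across dispatchers, since the partition is a single random permutation. Step 1 sidesteps this, because it needs only per-dispatcher marginal bounds combined with a union bound inside expectations (Markov), not independence. A second concern is that ties at zero index could make the rounded allocation discontinuous. This is exactly why I rely on the Lipschitz Lemma~\ref{lem:waterfill_lipschitz} together with the explicit $1/n$ rounding bound, rather than on any property of the ordering itself.
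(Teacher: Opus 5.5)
Your proposal is correct and follows essentially the same route as the paper. Both arguments use the same ingredients: per-dispatcher hypergeometric concentration (marginals only, so dependence across dispatchers does no harm), the $4$-Lipschitz waterfill together with the $1/n$ rounding bound, a Hoeffding/union bound for admission on $\Xg$, a law of large numbers over the i.i.d.\ task marks $(V_j,D_j)$ drawn independently of the partition, Markov's inequality, and uniform continuity for the second claim. The differences are cosmetic: the paper avoids your good/bad threshold $\eta$ by bounding $\E\|\phi^{(j)}-\phi\|_1\le R/\sqrt n$ directly from the hypergeometric variance, and it aggregates over types via $W_{V_j,r}(\phi)$; this yields the explicit rate $\frac{R(d+1)}{\delta}\bigl(\frac{4R}{\sqrt n}+\frac1n+p_n+\frac1{2\sqrt m}\bigr)$.
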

}

For any fixed $m,n$, let $\mu^{m,n}$ denote a stationary distribution of the
$(m,n)$-scaled chain under $\Lag$; at least one exists since $\mathcal{X}^{m,n}$ is finite.

\begin{revblock}
\begin{lemma}[Reward]\label{lem:mn_reward}
  Let $s_{\max}=\max_{v,r}|s_{vr}|$ and $\gamma=R(1-\rho)-k$. For every $m,n$ and every initial state
  $x(0)\in\mathcal{X}^{m,n}$, there exists a stationary distribution
  $\mu^{m,n}\in\mathbb{P}(\mathcal{X}^{m,n})$ of the $(m,n)$-scaled system under $\Lag$ such that
  \[
    \left|\frac{\Reward_{m,n}(\Lag)}{mn}
    - \sum_{v,r} s_{vr}\int u_{vr}^{\Lag}(x)\,d\mu^{m,n}(x)\right|
    \le s_{\max}\left(\frac{4R}{\sqrt{n}}+\frac{R}{n}+k\,p_n
      +2k\,e^{-\gamma^2 m/(2k^2 d)}\right),
  \]
  where $p_n=R\,e^{-n\gamma^2/(2R^2)}$.
\end{lemma}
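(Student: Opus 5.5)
The proof reduces the long-run reward to an expectation under a stationary distribution, and then bounds the per-state gap between the actual expected one-step reward and the fluid reward $\sum_{v,r}s_{vr}u^{\Lag}_{vr}(x)$. The chain on $\mathcal{X}^{m,n}$ is a finite Markov chain, so the ergodic theorem applies. Started from $x(0)$, the empirical occupation measure of the chain converges almost surely to a stationary distribution $\mu^{m,n}$: the one supported on the recurrent class eventually entered, or a mixture of such distributions weighted by absorption probabilities. Let $\bar r(x)$ denote the expected normalized one-step reward from state $x$. Then
\[
\frac{\Reward_{m,n}(\Lag)}{mn}=\int \bar r(x)\,d\mu^{m,n}(x).
\]
It therefore suffices to bound $\int |\bar r-\sum_{v,r}s_{vr}u^{\Lag}_{vr}|\,d\mu^{m,n}$ by the right-hand side.

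\emph{Bad event: few free resources.} Split states according to whether $x\in\Xg$. Off $\Xg$ the difference is crude but bounded. The realized per-step normalized reward is at most $k s_{\max}$, since each of the $m$ dispatchers assigns $nk$ resources out of $mn$ per type. The fluid reward $\sum_{v,r} s_{vr}\lambda_v\waterfill_r$ is also at most $k s_{\max}$. So this piece contributes at most $2k s_{\max}\,\mu^{m,n}(\Xg^c)$. To bound $\mu^{m,n}(\Xg^c)$, use Lemma~\ref{lem:blocking}: the total busy count does not depend on the policy. In stationarity its mean equals $mnR\rho$ up to blocking, and it is a sum of contributions from the $\le md$ tasks in service. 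Each task contributes $nk$ resources, and tasks are independent across arrivals. A Hoeffding/Azuma bound on the number of busy resources, with increments of size $k/m$ per arrival in normalized units over a window of $d$ steps, then gives $\Pr(\sum_r x_{r,0}<k+\gamma/2)\le e^{-\gamma^2 m/(2k^2d)}$. This yields the last term. I expect this concentration step to be the main obstacle. Blocking only reduces the busy count, so a domination argument against the no-blocking system (an infinite-server system with $m$ arrivals per step) gives the needed one-sided tail cleanly.

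\emph{Good event: per-dispatcher rounding and partition sampling.} On $\Xg$, fix a dispatcher with type-$v$ task. Its free count $C_r$ is hypergeometric: $n$ draws from $mn$ resources of which $mn x_{r,0}$ are free. Hence $\E|C_r/n-x_{r,0}|\le 1/\sqrt n$ by the variance bound. Summing over $r$ and applying Lemma~\ref{lem:waterfill_lipschitz}, the $\ell_1$ distance between $\waterfill(k,C/n)$ and $\waterfill(k,\phi)$ has expectation at most $4R/\sqrt n$. The rounding guarantee $|\hat a_r/n-a_r|<1/n$ adds $R/n$. The dispatcher may still have fewer than $nk$ free resources, in which case the task is dropped. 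By Hoeffding for sampling without replacement on each type, since $\sum_r x_{r,0}\ge k+\gamma/2$, this happens with probability at most $p_n=R e^{-n\gamma^2/(2R^2)}$ (deviation $\gamma/(2R)$ per type). The resulting loss is at most $k s_{\max}$, giving the term $k p_n$.

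Finally, average over the $m$ dispatchers and the type draws, with type $v$ occurring with probability $\lambda_v$, each weighted by $1/m$. Multiplying by $s_{\max}$ and integrating against $\mu^{m,n}$ gives the stated bound.
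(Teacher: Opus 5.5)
Your proposal is correct and follows essentially the same route as the paper. You reduce to $\int \bar r\,d\mu^{m,n}$ via the finite-chain ergodic/Ces\`aro argument, then split on $\Xg$: on $\Xg$ you use the hypergeometric variance bound, the $4$-Lipschitz waterfill, the $R/n$ rounding error and the Hoeffding shortage bound $p_n$; off $\Xg$ you use the crude $2k s_{\max}$ bound together with $\mu^{m,n}(\mathcal{X}\setminus\Xg)\le e^{-\gamma^2 m/(2k^2 d)}$, which you obtain exactly as Lemma~\ref{lem:free-conc} does, by dominating the busy count with the no-blocking count over $md$ independent arrival slots.

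One aside: drop the appeal to Lemma~\ref{lem:blocking}. In the partitioned system each dispatcher's admission depends on its sampled per-type free counts, and hence on how the policy spreads load across types, so the busy count is not policy-independent there; your domination argument does not need that lemma anyway.
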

\end{revblock}

\subsection{Congestion vanishes}\label{sec:congestion}

The upper-bound half of Theorem~\ref{thm:asymptotic_optimality} needs the blocking
probability to vanish. We show that in the mean-field limit an arriving task almost never
finds fewer than $k$ servers free. Let
$X^{(j)}_{r,0}$ denote the fraction of type-$r$ servers that dispatcher $j$ sees free, and
$x_{r,0}$ the fraction free in the whole system. A task at dispatcher $j$ is short of
servers exactly when $\sum_r X^{(j)}_{r,0} < k$.

Fix any work-conserving policy $\pi$, and let $\mu^{m,n}_\pi$ be a stationary distribution
of the $(m,n)$-scaled chain under $\pi$. We argue in two steps: First that the whole system
keeps plenty of servers free, then that each dispatcher's sample is close enough to the
average to inherit this. The first step writes $\gamma = R(1-\rho) - k$ for the expected number of free servers
in excess of the $k$ that one task needs; Assumption~\ref{assum:underload} makes $\gamma > 0$. The
total number of free servers stays above $k$ with probability tending to one.

\begin{lemma}\label{lem:free-conc}
Let $\pi$ be any work-conserving policy with stationary distribution $\mu^{m,n}_\pi$.
Under Assumption~\ref{assum:underload}, with $\gamma = R(1-\rho)-k > 0$,
\[
  \Pr_{X\sim\mu^{m,n}_\pi}\Bigl[\,\textstyle\sum_r x_{r,0}\le k+\tfrac{\gamma}{2}\Bigr]
  \;\to\;0 \qquad\text{as } m,n\to\infty.
\]
\end{lemma}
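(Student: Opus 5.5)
The plan is a coupling with an infinite-server (no-blocking) system, followed by a Hoeffding bound that depends only on $m$. The key observation is the one behind Lemma~\ref{lem:blocking}: the number of busy resources does not depend on \emph{which} resources a policy selects. It depends only on which tasks are accepted and on their durations. I will therefore bound the total busy mass by the busy mass that would result if \emph{every} arriving task were served, and that quantity is a sum of independent bounded variables.

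\emph{Step 1: pathwise domination.} Enlarge the probability space so that every arriving task, accepted or not, carries a pre-drawn duration $D$ from its type's distribution. These draws are independent across tasks given their types, and the loss system uses the draw only when it accepts the task; this leaves the law of the chain unchanged. Write $D_{s,j}$ for the duration attached to the task arriving at dispatcher $j$ at step $t-1-s$, and set $D_{s,j}:=0$ if no task arrives there.

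A resource is busy at step $t$ only if it was assigned at some step $t-1-s$ with $0\le s\le d-1$, to a task whose duration exceeds $s$. Durations are at most $d$, so anything assigned earlier has already returned. Each accepted task occupies exactly $nk$ resources, and acceptance is a subset of arrival. Hence, for every $t\ge d$ and every initial state,
\[
  R-\sum_r x_{r,0}(t) \;=\; \frac{\#\text{busy}}{mn} \;\le\; \frac{k}{m}\sum_{s=0}^{d-1}\sum_{j=1}^m \mathbf{1}[D_{s,j}>s] \;=:\; Y_t .
\]

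\emph{Step 2: mean and concentration.} The $md$ indicators in $Y_t$ are independent and take values in $[0,1]$. Their mean is $\Pr(D_{s,j}>s)=\sum_v\lambda_v\Pr(D_v>s)$. Summing over $s$ and using $\sum_{s=0}^{d-1}\Pr(D_v>s)=\E[D_v]$ gives
\[
  \E[Y_t]=k\sum_v\lambda_v\E[D_v]=R\rho .
\]
The event $\sum_r x_{r,0}(t)\le k+\gamma/2$ forces $Y_t\ge R-k-\gamma/2=R\rho+\gamma/2$. Since $\gamma=R(1-\rho)-k$, this means $Y_t$ exceeds its mean by $\gamma/2$, i.e.\ the unscaled sum exceeds its mean by $m\gamma/(2k)$. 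Hoeffding's inequality over $md$ variables then gives
\[
  \Pr\Bigl[\textstyle\sum_r x_{r,0}(t)\le k+\tfrac{\gamma}{2}\Bigr]\le \exp\!\Bigl(-\tfrac{2(m\gamma/2k)^2}{md}\Bigr)=e^{-\gamma^2 m/(2k^2 d)} .
\]
This bound is uniform in the initial state and in $n$. The same constant appears in Lemma~\ref{lem:mn_reward}, which suggests this is the intended route.

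\emph{Step 3: transfer to stationarity.} Start the chain from $X(0)\sim\mu^{m,n}_\pi$. By stationarity, $X(d)\sim\mu^{m,n}_\pi$. Step 2 holds conditionally on any $X(0)$, so it holds for $X(d)$ as well. This yields
\[
  \Pr_{\mu^{m,n}_\pi}\bigl[\textstyle\sum_r x_{r,0}\le k+\gamma/2\bigr]\le e^{-\gamma^2 m/(2k^2 d)}\to 0 .
\]
Only $m\to\infty$ is actually needed.

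The step needing the most care is Step 1. It must be checked that the indexing convention for the remaining-service state ($X=D$ right after assignment, decreasing by one per step) makes the "busy iff $D>s$" window exactly $d$ steps long. It must also be checked that pre-drawing durations for blocked tasks does not distort the law of the chain; this holds because durations are independent of partitioning and assignment (Definition~\ref{def:fluid_scaled}). If the window turned out to be off by one, the only effect would be replacing $d$ by $d+1$ in the exponent, which does not affect the limit.
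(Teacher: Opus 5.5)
Your proof is correct and follows the paper's argument essentially line for line: you drop the acceptance indicator so that the number of in-service tasks is dominated by the $md$ independent slot indicators $\mathbf{1}[D>s]$, then apply Hoeffding to get $e^{-\gamma^2 m/(2k^2 d)}$. Your Step~3 (start from $\mu^{m,n}_\pi$ and run $d$ steps) and the explicit pre-drawing of durations for blocked tasks make rigorous what the paper states in one sentence, namely that arrivals and durations are exogenous and independent of the policy and the state history.
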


The second step accounts for the fact that a dispatcher sees only a sample of $n$
servers per type, not the whole system. The next proposition shows the sample is
close enough to the mean that a task almost never finds itself short.

\begin{proposition}\label{prop:congestion}
Under Assumption~\ref{assum:underload}, for any work-conserving policy the stationary
probability that an arriving task finds fewer than $k$ free servers among those it
observes, $P^{\mathrm{block}}_{m,n}$, tends to $0$ as $m,n\to\infty$.
\end{proposition}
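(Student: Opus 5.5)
We bound $P^{\mathrm{block}}_{m,n}$ by splitting on whether the whole system is well stocked, using Lemma~\ref{lem:free-conc} for the system-level event and a concentration bound for the dispatcher's random sample.

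Fix a work-conserving policy $\pi$, its stationary distribution $\mu^{m,n}_\pi$, and a dispatcher $j$. Conditional on the global state $x$, the task at $j$ is blocked exactly when $\sum_r X^{(j)}_{r,0}<k$. We split on the event $E=\{\sum_r x_{r,0}\le k+\gamma/2\}$:
\[
  P^{\mathrm{block}}_{m,n}\;\le\;\mu^{m,n}_\pi(E)\;+\;\sup_{x\notin E}\Pr\Bigl[\textstyle\sum_r X^{(j)}_{r,0}<k \,\Bigm|\, x\Bigr].
\]
The first term tends to $0$ by Lemma~\ref{lem:free-conc}. The arrival types at step $t$ are independent of the state at $t$, so the stationary state seen by an arriving task is distributed as $\mu^{m,n}_\pi$, and the same bound applies to it.

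For the second term, the partition is redrawn uniformly at random at every step, independently of the state. Hence, given $x$, dispatcher $j$'s $n$ type-$r$ resources are a uniform sample without replacement of size $n$ from the $mn$ type-$r$ resources. Of those, $mn\,x_{r,0}$ are free. The count $nX^{(j)}_{r,0}$ is therefore hypergeometric with mean $n x_{r,0}$. Hoeffding's inequality applies to sampling without replacement (Hoeffding 1963, Theorem 4). It gives
\[
  \Pr\bigl[X^{(j)}_{r,0}\le x_{r,0}-\tfrac{\gamma}{2R}\bigr]\le e^{-n\gamma^2/(2R^2)}
\]
for each $r$. On $E^c$ we have $\sum_r x_{r,0}>k+\gamma/2$. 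If no type deviates downward by more than $\gamma/(2R)$, then $\sum_r X^{(j)}_{r,0}>k$. A union bound over the $R$ types bounds the second term by $p_n=R\,e^{-n\gamma^2/(2R^2)}$, matching the quantity in Lemma~\ref{lem:mn_reward}. This goes to $0$ as $n\to\infty$ uniformly in $m$ and $x$.

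Combining the two terms gives $P^{\mathrm{block}}_{m,n}\to0$ as $m,n\to\infty$. The averaging over dispatchers and over the stationary state is harmless, because both bounds are uniform.

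The main obstacle is that the argument leans entirely on Lemma~\ref{lem:free-conc}. It is stated for states drawn from $\mu^{m,n}_\pi$, and I would check that it applies uniformly over policies $\pi$. Lemma~\ref{lem:blocking} carries over to the scaled system: the count of busy resources per dispatcher evolves independently of which resources are chosen. This makes the distribution of the total free count policy-independent, so uniformity should hold.

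A second subtlety is the stationarity coupling. The state seen by arrivals must be the stationary state, which is a PASTA-type property. It holds here because types and partitions are drawn independently of the current state at each step.
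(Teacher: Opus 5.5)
Your proof is correct and follows the paper's argument. Like the paper, you split on the system-level event of Lemma~\ref{lem:free-conc} and bound each type's downward sampling deviation by Hoeffding/Serfling for sampling without replacement plus a union bound; your one-sided version even gives the sharper $R\,e^{-n\gamma^2/(2R^2)}$ in place of the paper's $2R\,e^{-n(\gamma/2R)^2}$. Your closing justification is wrong, though. Lemma~\ref{lem:blocking} does \emph{not} carry over to the scaled system: whether a dispatcher blocks depends on the per-type free counts through the hypergeometric partition, hence on which types the policy uses. This does not hurt your proof, because the statement needs no uniformity over policies, and Lemma~\ref{lem:free-conc}'s bound is policy-free anyway since its proof dominates the number of tasks in service by the policy-independent count $N_\infty$.
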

\subsection{Proof of Optimality}\label{sec:mf-proof}

\begin{assumption}[Global Attractor Property]\label{assum:gap}
  There exists a unique $x^*\in\mathcal{X}$ such that for every initial $x(0)\in\mathcal{X}$,
  \[
    G^t(x(0)) \;\to\; x^* \qquad\text{as }t\to\infty.
  \]
\end{assumption}

As noted in~\cite{verloop2016asymptotic}, this global attractor property was imposed
in~\cite{weber1990index} to establish the asymptotic optimality of Whittle's index policy
in a setting with a fixed population of symmetric bandits. In practice, this property is
typically difficult to verify analytically and is usually established through numerical evidence.

\begin{lemma}\label{lem:4_mu_convergence}
  {\color{red}Under Assumptions~\ref{assum:underload} and~\ref{assum:gap}}, for any sequence $((m_k,n_k))_{k=1}^\infty$ with
  $m_k,n_k\to\infty$, the stationary distributions $\mu^{m_k,n_k}$ converge weakly to
  $\delta_{x^*}$.
\end{lemma}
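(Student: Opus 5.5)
We follow the standard Weber--Weiss/Verloop argument: tightness, invariance of limit points under the fluid map $G$, and then the global attractor property to pin the limit down.

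\emph{Step 1 (tightness).} Since $\mathcal{X}$ is a compact subset of $[0,1]^{R\times(d+1)}$, the sequence $\mu^{m_k,n_k}$ is tight. By Prokhorov it has weakly convergent subsequences. It therefore suffices to show that every subsequential weak limit $\mu$ equals $\delta_{x^*}$.

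\emph{Step 2 (invariance: $\mu\circ G^{-1}=\mu$).} Fix a bounded continuous $f$. Stationarity gives
\[
\int f\,d\mu^{m,n}=\int \E[f(G^{m,n}(x))]\,d\mu^{m,n}(x).
\]
We split this integral over $\Xg$ and its complement. On $\Xg\cap\mathcal{X}^{m,n}$, Lemma~\ref{lem:mn_transition} gives uniform closeness of $\E[f(G^{m,n}(x))]$ to $f(G(x))$. On the complement, the contribution is at most $2\|f\|_\infty\,\mu^{m,n}(\Xg^c)$. This vanishes by Lemma~\ref{lem:free-conc}, applied to \Lag\ as a work-conserving policy. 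Hence
\[
\int f\,d\mu^{m,n}-\int f\circ G\,d\mu^{m,n}\to0.
\]
Now $G$ is continuous on $\mathcal{X}$: $u^{\Lag}$ is Lipschitz in $\phi$ by Lemma~\ref{lem:waterfill_lipschitz}, and $G$ is affine in $(x,u^{\Lag})$. So $f\circ G$ is bounded continuous, and passing to the weak limit yields $\int f\,d\mu=\int f\circ G\,d\mu$. Thus $\mu$ is $G$-invariant.

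One subtlety is that Lemma~\ref{lem:free-conc} concerns $\le k+\gamma/2$ while $\Xg$ uses $\ge k+\gamma/2$. This is fine, since the boundary is included in $\Xg$. I would double-check this.

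\emph{Step 3 (invariant measures are $\delta_{x^*}$).} Iterating invariance gives $\int f\,d\mu=\int f\circ G^t\,d\mu$ for all $t$. By Assumption~\ref{assum:gap}, $f(G^t(x))\to f(x^*)$ pointwise for every $x\in\mathcal{X}$. Since $f$ is bounded, dominated convergence gives $\int f\,d\mu=f(x^*)$ for all bounded continuous $f$, so $\mu=\delta_{x^*}$. Because every subsequential limit is the same, the full sequence converges weakly to $\delta_{x^*}$.

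The main obstacle is Step 2's handling of states outside $\Xg$. Lemma~\ref{lem:mn_transition} only controls transitions on well-stocked states, so the argument depends on Lemma~\ref{lem:free-conc} applying to the specific stationary distribution $\mu^{m,n}$ chosen for \Lag. If $\mu^{m,n}$ is not unique, we need that lemma to hold uniformly over all stationary distributions; its statement for an arbitrary stationary $\mu_\pi^{m,n}$ suggests that it does. A secondary check is that $G$ is continuous even at shortage states, which Definition~\ref{def:uLag} ensures by design.
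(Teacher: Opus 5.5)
Your proposal is correct and follows essentially the same route as the paper's proof: Prokhorov compactness, then $G$-invariance of every subsequential limit by splitting the stationary identity over $\Xg$ and its complement (Lemma~\ref{lem:mn_transition} on $\Xg$, Lemma~\ref{lem:free-conc} off it, and continuity of $G$ from Lemma~\ref{lem:waterfill_lipschitz}), and finally the global attractor property with dominated convergence. Your boundary worry resolves as you suspected, since $\mathcal{X}\setminus\Xg=\{x:\sum_r x_{r,0}<k+\gamma/2\}$ lies inside the event bounded by Lemma~\ref{lem:free-conc}; that lemma also holds for every stationary law of every work-conserving policy, so the choice of $\mu^{m,n}$ does not matter.
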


\begin{proof}[Proof sketch of Theorem~\ref{thm:asymptotic_optimality}]
We sketch both halves; full details are in Appendix~\ref{app:meanfield}.

\emph{Achievability.} By Lemma~\ref{lem:mn_reward}, the normalized reward of \Lag\ equals
$\sum_{v,r}s_{vr}\int u^{\Lag}_{vr}(x)\,d\mu^{m,n}(x)$ up to an {\color{red}$o(1)$ error as $m,n\to\infty$}, so it
is governed by the stationary law $\mu^{m,n}$. Assumption~\ref{assum:gap} and
Lemma~\ref{lem:4_mu_convergence} pin this law down: $\mu^{m,n}$ concentrates on the unique
fluid fixed point $x^*$, so the reward converges to $\sum_{v,r}s_{vr}u^{\Lag}_{vr}(x^*)$. It
remains to identify $x^*$. Assembling a fluid state from the waterfill-structured OCC-LP
optimum of Lemma~\ref{lem:lp-optimal} yields a fixed point of the fluid map $G$
(Definition~\ref{def:fluid_ODE}); uniqueness (Assumption~\ref{assum:gap}) forces $x^*$ to be
this state, so $u^{\Lag}(x^*)=u^*$ and the reward converges to
$\mathrm{OPT}(\text{OCC-LP}(0))$.

\emph{Upper bound.} For any work-conserving policy $\pi$,
Proposition~\ref{prop:lp_upper_bound} bounds its normalized reward by
$\mathrm{OPT}(\text{OCC-LP}(B_{m,n}))$, where $B_{m,n}$ is the blocking probability of the
$(m,n)$-scaled system. Blocking is exactly the shortage event of
Proposition~\ref{prop:congestion}, so $B_{m,n}=P^{\mathrm{block}}_{m,n}\to0$. The OCC-LP
value is continuous in the demand $k\lambda_v(1-B)$ at $B=0$. Strict underload
(Assumption~\ref{assum:underload}) puts the demand at $B=0$ in the interior of the set of
demands for which the OCC-LP is feasible, and the value is continuous at every such
interior point. Hence $\mathrm{OPT}(\text{OCC-LP}(B_{m,n}))\to\mathrm{OPT}(\text{OCC-LP}(0))$. The two bounds
meet, so \Lag\ is asymptotically optimal.
\end{proof}

\section{Experiments}
\label{sec:experiments}

We evaluate \Lag\ on six instances spanning real peer-review datasets, synthetic
call-center routing, and structured synthetic regimes, comparing against Greedy, Random,
ALG-SC-LP~\cite{dickerson2021allocation}, and Whittle~\cite{gurushankar2026reviewer}
(Section~\ref{sec:policies}). Our main findings are:
\begin{itemize}
  \item \Lag\ achieves 85.6--94.5\% of OCC-UB across all instances at load $\rho = 0.7$,
    and earns at least as much reward as every other online policy on every instance
    (Table~\ref{tab:performance}). The appendix reports all other loads
    (Table~\ref{tab:perf_full}).
  \item On real peer-review datasets, \Lag\ outperforms Greedy by 6--10 percentage
    points at $\rho = 0.7$; on synthetic instances the gap reaches up to
    23 percentage points (36\% relative).
  \item In the mean-field limit, \Lag\ approaches OCC-UB on all six instances
    we test, in line with
    Theorem~\ref{thm:asymptotic_optimality}
    (Section~\ref{sec:meanfield_convergence}).
\end{itemize}

\subsection{Policies Evaluated}\label{sec:policies}

We compare the following policies and reference bounds throughout Section~\ref{sec:experiments}.

\paragraph{\Lag\ (our policy).}
The Lagrangian index policy defined in Section~\ref{sec:index_policy}.
Upon arrival of a type-$v$ task, \Lag\ assigns the $k$ available servers with the
largest Lagrangian index values
$L(v, r) = s_{vr} - \ell_v^* - \mathbb{E}[D_v]\,q_r^*$ (Definition~\ref{def:lag}).
The dual variables $(\ell^*, q^*)$ are computed once by solving the OCC-LP offline;
no further LP solves are needed at runtime.

\paragraph{Baselines.}
\begin{description}
  \item[Greedy:] At each step, assigns the $k$ available servers with the largest
    current scores $s_{v_t r}$, ignoring future demand entirely.
    Variants of this approach are used in automated online assignment
    systems~\cite{fang2007probabilistic,ZHAO2022103028}.

  \item[Random:] Assigns $k$ servers chosen uniformly at random from all currently
    available servers, ignoring scores entirely.
    Serves as a lower-bound sanity check.

  \item[ALG-SC-LP~\cite{dickerson2021allocation}:] The safe-choice LP policy
    of~\cite{dickerson2021allocation}.
    It solves the KIID LP offline for a fractional allocation $x^*_{vr}$, and at each
    step selects server~$r$ with probability proportional to $x^*_{vr}$ among available
    servers, defaulting to Greedy when no available server has positive LP weight.

  \item[Whittle~\cite{gurushankar2026reviewer}:] A closed-form index policy derived via the
    subsidy-of-indifference approach for restless multi-armed bandits~\cite{whittle1988restless}.
    The index is
    \[
      W(v,r) = s_{vr} - \mathbb{E}[D_v]\sum_{v'} \lambda_{v'} \max(s_{v'r}-s_{vr},\,0).
    \]
    This index has a simple closed-form but provides no asymptotic optimality guarantee.
\end{description}

\paragraph{Upper Bounds.}
The following LP optimal values are not online policies; they are shown as reference
lines in all figures.
\begin{description}
  \item[OCC-UB:] Optimal value of the OCC-LP (Section~\ref{sec:upper-bound-LP}),
    an upper bound on the expected reward of any work-conserving policy
    (Proposition~\ref{prop:lp_upper_bound}).
    At finite scale we plot $\mathrm{OPT}(\text{OCC-LP},\,B{=}\hat{B})$, evaluated
    at the empirical blocking rate $\hat{B}$ measured during simulation.

  \item[KIID-UB:] {\color{red}Optimal value of the KIID LP of~\cite{dickerson2021allocation}.}
\end{description}

\subsection{How Does Performance vary with Load}\label{sec:performance_with_load}

We measure reward as a function of system load
$\rho = \lambda k \mathbb{E}[D] / R$,
where $\lambda = \sum_v \lambda_v$ is the total arrival rate.
We sweep $\rho$ by varying $\lambda$ while holding all other parameters fixed.
At each load point we solve the OCC-LP once and compute \Lag\ indices from the resulting
dual variables; all other policies use only the score matrix and, where applicable,
their respective LPs.
We test six instances across three domains: two from real peer-review data, two synthetic call-center instances, and two analytical instances.
Table~\ref{tab:performance} summarizes results at $\rho = 0.7$; the full load sweep appears in Table~\ref{tab:perf_full} in the appendix.

\begin{figure}[!ht]
  \centering
  \begin{subfigure}[b]{0.48\linewidth}
    \includegraphics[width=\linewidth]{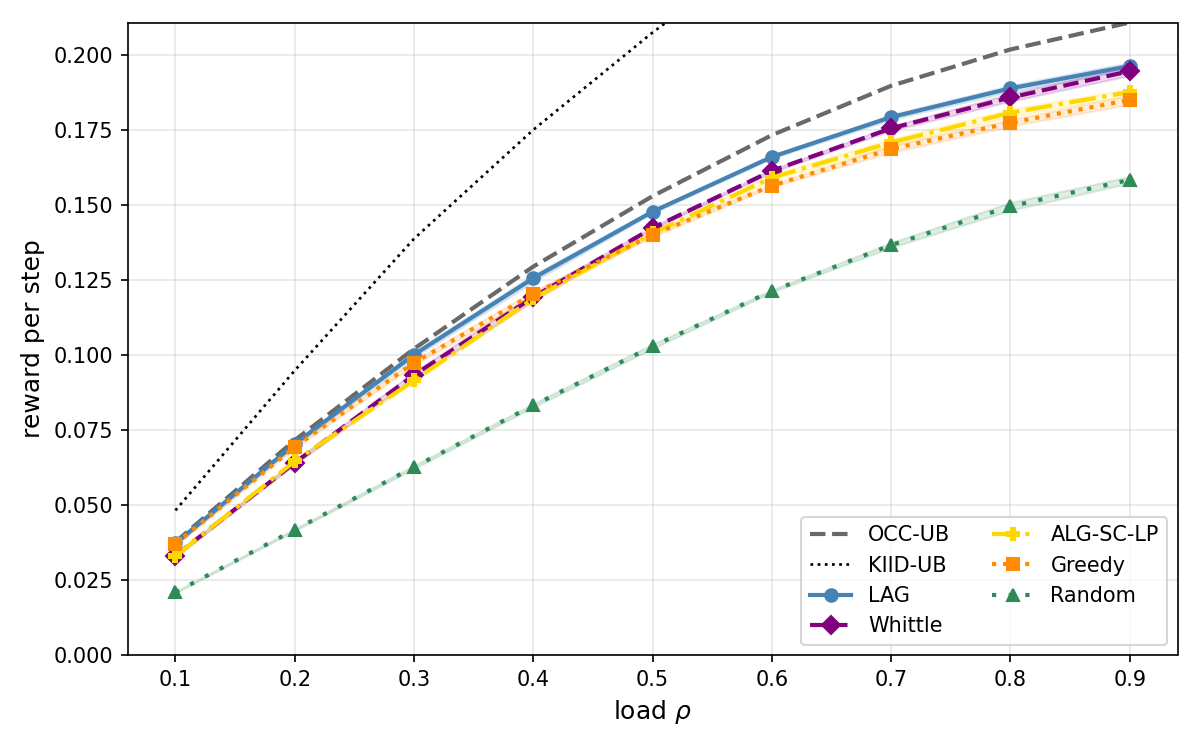}
    \caption{ICLR ($R{=}30$, $V{=}10$, $k{=}3$)}
    \label{fig:load_iclr}
  \end{subfigure}\hfill
  \begin{subfigure}[b]{0.48\linewidth}
    \includegraphics[width=\linewidth]{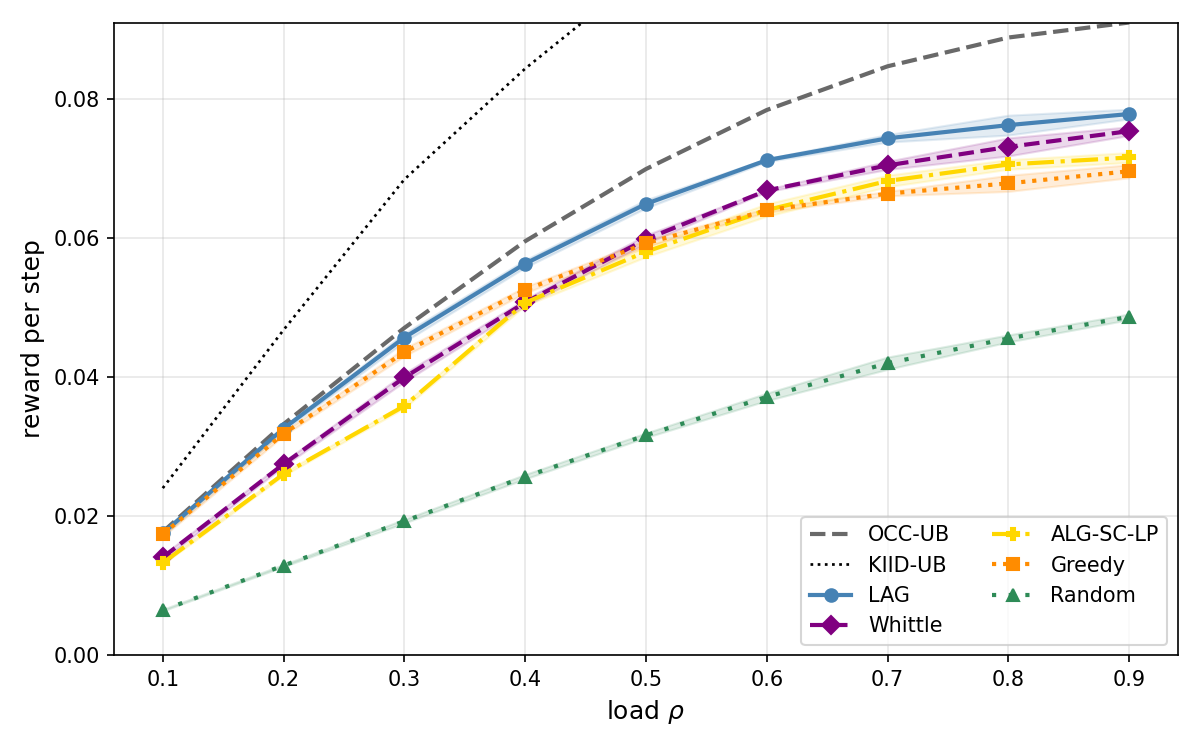}
    \caption{TMLR ($R{=}30$, $V{=}10$, $k{=}3$)}
    \label{fig:load_tmlr}
  \end{subfigure}
  \caption{Reward per step vs.\ load on real peer-review datasets.
    \Lag\ tracks the OCC-UB across all loads.}
  \label{fig:load_peer_review}
\end{figure}

In Figure~\ref{fig:load_peer_review}, arriving tasks represent papers and servers represent
reviewers ($k = 3$).
Match scores are tf-idf cosine similarities~\cite{manning2008introduction} normalized to $[0,1]$;
service times are drawn from the empirical TMLR review-duration distribution~\cite{gurushankar2026reviewer}.
We subsample $R = 30$ reviewers and $V = 10$ papers uniformly at random from each full matrix.
For Figure~\ref{fig:load_iclr} we use the ICLR 2018 similarity score matrix~\cite{xu2019strategyproof};
for Figure~\ref{fig:load_tmlr} we use the TMLR journal similarity matrix~\cite{gurushankar2026reviewer}.
On TMLR, \Lag\ reaches $87.8\%$ of OCC-UB at $\rho = 0.7$ while Greedy falls to $78.4\%$,
a gap that widens at high load.
On ICLR the gap is smaller ($94.5\%$ vs.\ $88.9\%$) but persistent across all loads.

\begin{figure}[!ht]
  \centering
  \begin{subfigure}[b]{0.48\linewidth}
    \includegraphics[width=\linewidth]{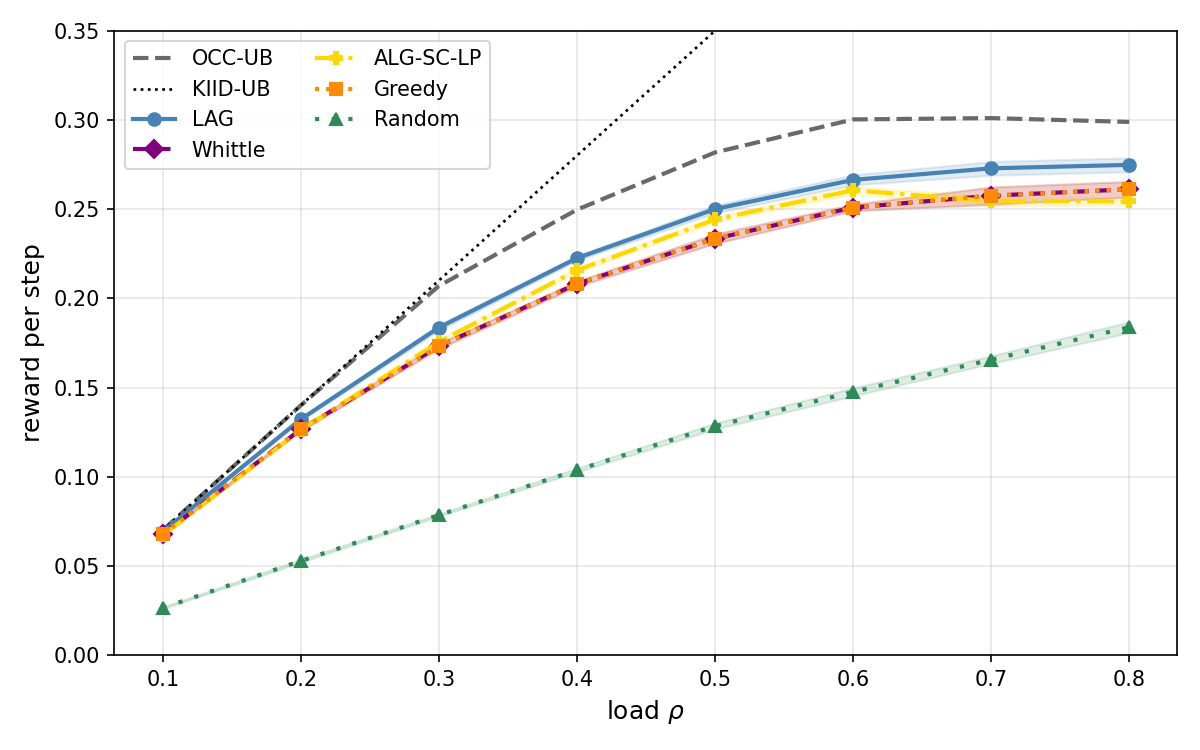}
    \caption{Specialist ($R{=}7$, $V{=}6$, $k{=}1$)}
    \label{fig:load_specialist}
  \end{subfigure}\hfill
  \begin{subfigure}[b]{0.48\linewidth}
    \includegraphics[width=\linewidth]{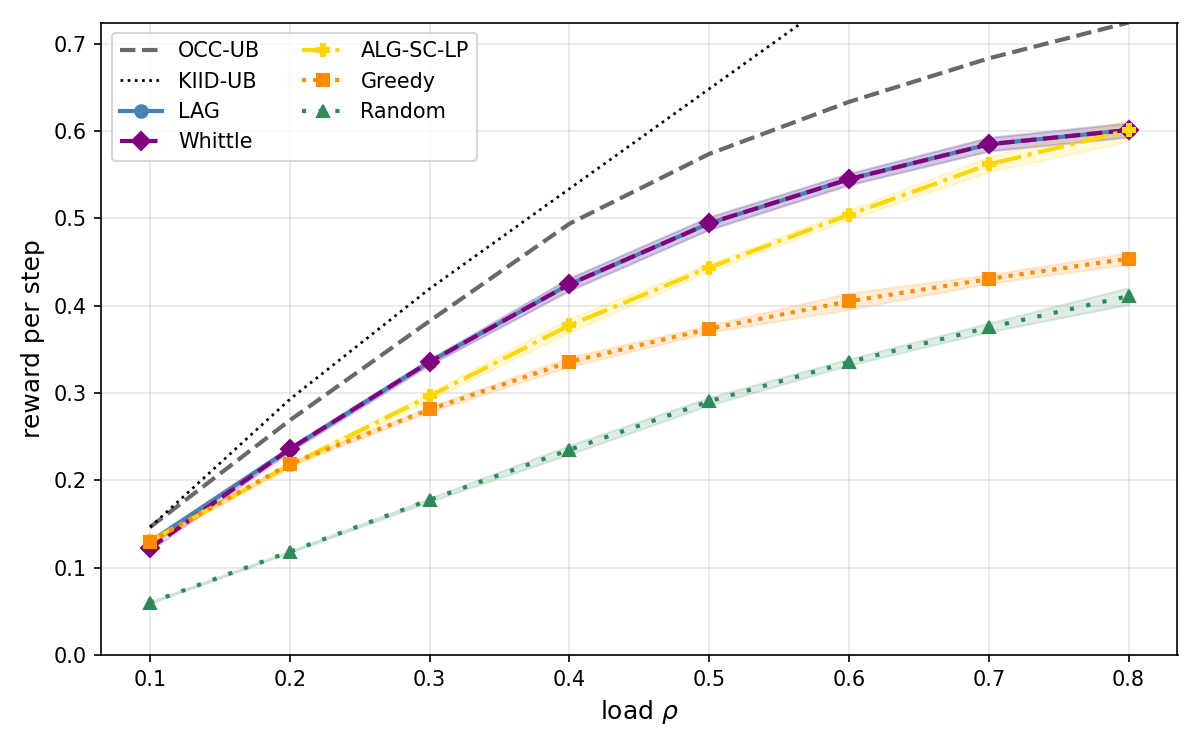}
    \caption{Priority ($R{=}8$, $V{=}2$, $k{=}1$)}
    \label{fig:load_priority}
  \end{subfigure}
  \caption{Reward per step vs.\ load on synthetic call-center instances ($k{=}1$, geometric service).}
  \label{fig:load_callcenter}
\end{figure}

Figure~\ref{fig:load_callcenter} presents two synthetic call-center instances with $k = 1$
and geometric service times.

The \textbf{Specialist} instance ($R = 7$, $V = 6$, $\mathbb{E}[D] = 10$) has two generalists
that score $1.0$ on all six call types and five specialists that score $1.0$ only on their
native routine type.
A hard type with no specialist arrives three times as often as each routine type.
For a routine call, both generalists and the native specialist score $1.0$; Greedy breaks
the three-way tie at random, wasting generalist capacity that only generalists can provide
to hard calls.
At $\rho = 0.7$, Greedy and Whittle both reach $85.5\%$ of OCC-UB; \Lag\ attains $90.6\%$
by routing routine calls to native specialists via the per-type dual prices $\ell_v^*$.

The \textbf{Priority} instance ($R = 8$, $V = 2$, $\mathbb{E}[D] = 20$) has two priority
handlers scoring $[1.8,\,8.0]^\top$ on (standard, priority) calls and six standard handlers
scoring $[1.0,\,0.2]^\top$.
Standard calls arrive $70\%$ of the time.
Because priority handlers outscore standard handlers on standard calls, Greedy assigns them
first to standard arrivals, blocking capacity for high-reward priority calls.
At $\rho = 0.7$, \Lag\ and Whittle both achieve $85.6\%$ of OCC-UB while Greedy falls
to $63.0\%$.

\begin{figure}[!ht]
  \centering
  \begin{subfigure}[b]{0.48\linewidth}
    \includegraphics[width=\linewidth]{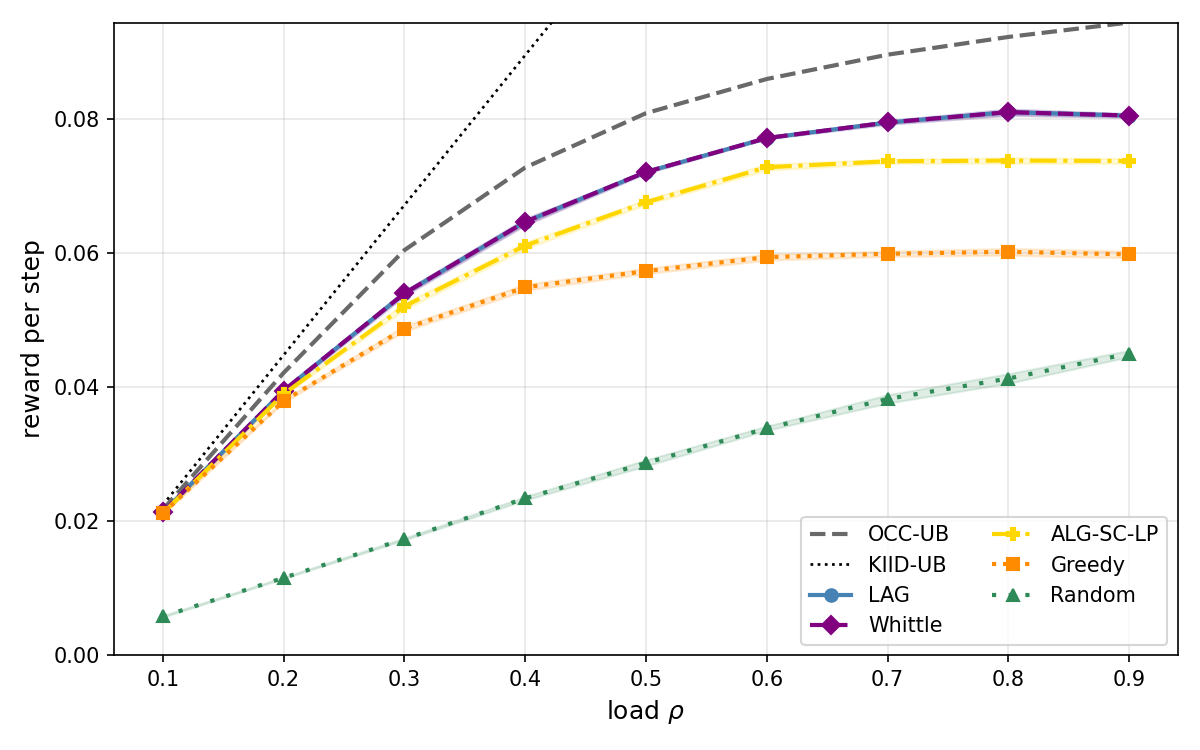}
    \caption{Unfriendly ($R{=}10$, $V{=}32$, $k{=}1$)}
    \label{fig:load_unfriendly}
  \end{subfigure}\hfill
  \begin{subfigure}[b]{0.48\linewidth}
    \includegraphics[width=\linewidth]{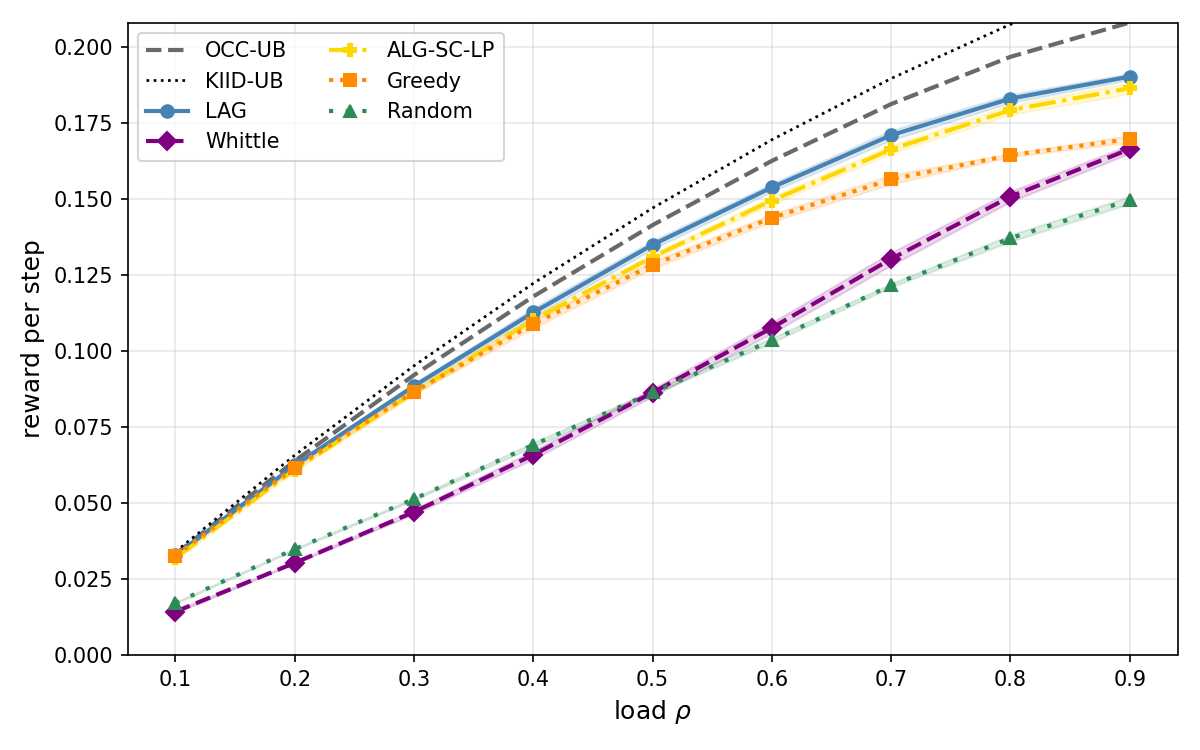}
    \caption{Low-rank ($R{=}30$, $V{=}10$, $k{=}1$)}
    \label{fig:load_lowrank}
  \end{subfigure}
  \caption{Reward per step vs.\ load on synthetic instances.
    On Unfriendly, Greedy wastes high-quality servers on poor-match tasks.
    On Low-rank, Whittle falls more than 22 percentage points below \Lag.}
  \label{fig:load_synthetic}
\end{figure}

Figure~\ref{fig:load_synthetic} presents two synthetic instances designed to expose
complementary failure modes.
Both use deterministic service ($D = \mathbb{E}[D] = 43$ time steps) and $k = 1$.

The \textbf{Unfriendly} instance ($R = 10$, $V = 32$) has $g = 5$ \emph{good} server
types and $b = 5$ \emph{dummy} server types.
Good server~$r \in [g]$ scores $s_{vr} = 1.0$ for the subset of paper types it is
well-matched to, and
$s_{vr} = 0.01$ otherwise; dummy servers score $0$ on all paper types.
The $V = 2^g = 32$ task types cover all binary patterns of the $g$ good servers,
arriving uniformly.
Because the near-zero score of $0.01$ is strictly positive, Greedy assigns a good server
even when the match is poor, burning capacity rather than absorbing the arrival with a
dummy server and holding the good server for a future high-reward match.
At $\rho = 0.7$, Greedy achieves only $66.8\%$ of OCC-UB while \Lag\ attains $88.7\%$,
a relative improvement of $33\%$.
\Lag\ and Whittle coincide on this instance ($k = 1$): Both correctly identify that the
value of withholding a well-matched server outweighs the immediate reward of a poor match.

The \textbf{Low-rank} instance ($R = 30$, $V = 10$) uses a rank-1 score matrix
$s_{vr} = a_v b_r$, where $a_v = (v{-}1)/(V{-}1)$ and $b_r = (r{-}1)/(R{-}1)$ are linearly
spaced in $[0, 1]$ over $v \in [V]$, $r \in [R]$, and task types arrive uniformly.
\Lag\ performs best ($94.3\%$ of OCC-UB at $\rho = 0.7$),
followed by ALG-SC-LP ($91.8\%$) and Greedy ($86.3\%$).
Whittle falls to $71.9\%$, more than 22 percentage points below \Lag\ and more than
14 points below Greedy.

\begin{table}[!ht]
  \centering
  \caption{Reward as \% of OCC-UB at load $\rho = 0.7$.}
  \label{tab:performance}
  \begin{tabular}{lrrrrr}
    \toprule
    Instance & \Lag & Whittle & ALG-SC-LP & Greedy & Random \\
    \midrule
    TMLR        & 87.8 & 83.2 & 80.5 & 78.4 & 49.6 \\
    ICLR        & 94.5 & 92.5 & 90.0 & 88.9 & 72.0 \\
    \midrule
    Specialist  & 90.6 & 85.5 & 84.6 & 85.5 & 55.0 \\
    Priority    & 85.6 & 85.6 & 82.2 & 63.0 & 54.9 \\
    \midrule
    Unfriendly  & 88.7 & 88.7 & 82.2 & 66.8 & 42.7 \\
    Low-rank    & 94.3 & 71.9 & 91.8 & 86.3 & 67.1 \\
    \bottomrule
  \end{tabular}
\end{table}

\subsection{Convergence to the Mean-Field Limit}\label{sec:meanfield_convergence}

We test Theorem~\ref{thm:asymptotic_optimality} empirically.
Using the six instances at load $\rho = 0.8$, we scale the system under the
$(m,n)$-scaling with $m = \lfloor\sqrt{n}\rfloor$, sweeping
$n \in \{1, 4, 16, 64, 256, 1024, 4096, 16384, 65{,}536\}$.
Simulation length scales adaptively as $T \propto 1/m$ so that total computational
work stays roughly constant across $n$ values.
Each data point is the mean of 3 independent trials; shaded bands show
${\pm}1.96\,\hat{\sigma}/\sqrt{3}$ confidence intervals.

\begin{figure}[!ht]
  \centering
  \includegraphics[width=0.55\linewidth]{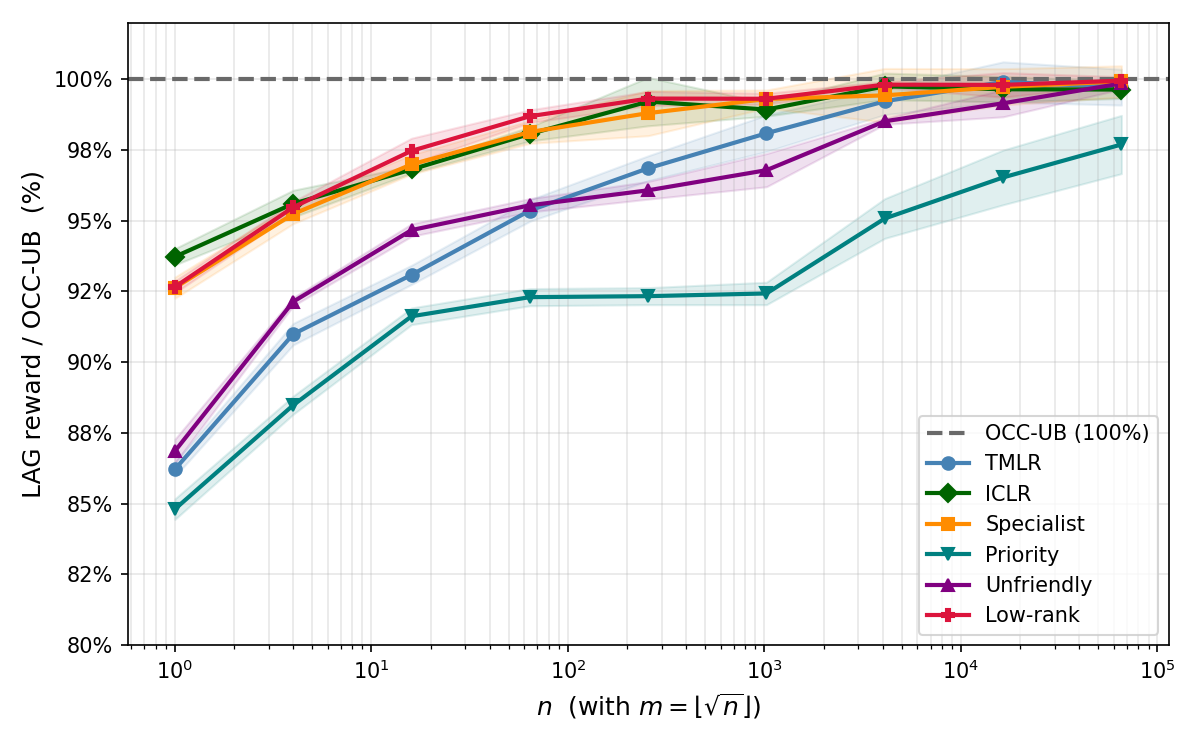}
  \caption{\Lag's reward per server copy, $\text{reward}/(mn)$, as a percentage of
    OCC-UB vs.\ $n$ under the $(m,n)$-scaling ($m = \lfloor\sqrt{n}\rfloor$).
    OCC-UB is $\mathrm{OPT}(\text{OCC-LP}(\hat B))$ at the measured blocking rate $\hat B$.
    All six instances approach OCC-UB as $n$ grows, consistent with
    Theorem~\ref{thm:asymptotic_optimality}.}
  \label{fig:meanfield_convergence}
\end{figure}

Figure~\ref{fig:meanfield_convergence} plots \Lag's reward per server copy as a
percentage of OCC-UB for all six instances on a log~$n$ axis.
At each $n$, OCC-UB is $\mathrm{OPT}(\text{OCC-LP}(\hat B))$, where $\hat B$ is the blocking
rate we measure in the simulation, as in Section~\ref{sec:policies}.
The measured blocking rate falls from between $0.104$ and $0.133$ at $n = 1$ on all
instances except Low-rank ($0.010$) to at most $0.001$ at $n = 65{,}536$.
At $n = 65{,}536$ ($m = 256$), \Lag\ reaches $99.7\%$ on TMLR, $99.6\%$ on ICLR,
$99.9\%$ on Specialist, $97.7\%$ on Priority, $99.8\%$ on Unfriendly, and $99.9\%$ on Low-rank.
Convergence is fastest on Low-rank, which crosses $99\%$ at $n = 256$.
It is slowest on Priority, which stays between $91.6\%$ and $96.5\%$ for $n$ from $16$ to
$16{,}384$ before reaching $97.7\%$ at $n = 65{,}536$.

\FloatBarrier
\section{Conclusion}
\label{sec:conclusion}

We study online assignment of reusable, quality-differentiated resources, where each
arriving task must be matched immediately to $k$ servers that then become busy for a random
duration. We cast the problem as a contextual restless multi-armed bandit and derive the
occupancy-measure LP, whose optimum upper-bounds the reward of any work-conserving policy.
The dual of this LP decouples the routing problem across servers and yields \Lag, a
closed-form index policy: Each task-type and server pair is scored by its raw match value
less an opportunity-cost term that reflects the value of keeping that server free for future
arrivals. To analyze \Lag\ we introduce an $(m,n)$-scaling regime, in which the number of
copies of each server and the number of arrivals per step grow together; scaling the
arrivals concentrates the context distribution and restores the deterministic fluid limit
that a contextual bandit otherwise lacks. In this regime we prove \Lag\ is asymptotically
optimal, attaining the LP bound. Empirically, on real peer-review traces and synthetic
instances, \Lag\ achieves 83--99\% of the LP upper bound and improves over the natural
greedy policy by up to 12.3\% on peer-review data, including well outside the regime where our
optimality theory applies.

\bibliographystyle{ACM-Reference-Format}
\bibliography{bibliography}

\appendix
\section{Tie-Breaking and Rounding Rules for \Lag}\label{app:lag-ties}
\begingroup\color{red}
The rules below complete Definition~\ref{def:lag} and its $(m,n)$-scaled version in Definition~\ref{def:fluid_scaled}. Fix the primal and dual LP solutions once, and fix a total order $\pi$ on $[R]$ for all equal-index and equal-remainder ties. The tie weights $w_{vr}=u^*_{vr}/\lambda_v$ are the LP assignment rates divided by the type-$v$ arrival rate, so $\sum_r w_{vr}=k$. This choice is used in the fluid fixed-point proof (Lemma~\ref{lem:lp-optimal}).

\subsection{Fractional Allocation}
The waterfill ranks servers by index. Among servers with index zero, it splits the remaining budget in proportion to given weights.
\begin{definition}[Waterfill]\label{def:waterfill}
Let $b\ge 0$ be a budget, $c\in[0,1]^R$ capacities, $\iota\in\mathbb{R}^R$ an index vector, and $w\in[0,\infty)^R$ a weight vector. Let $\pi$ be a fixed total order on $[R]$ that breaks ties between equal indices. Split $[R]$ into
\[
  P=\{r:\iota_r>0\},\qquad T=\{r:\iota_r=0,\ w_r>0\},\qquad N=[R]\setminus(P\cup T).
\]
The \emph{waterfill} $\waterfill(b,c;\iota,w)\in\mathbb{R}^R$ is computed in three stages.
\begin{enumerate}
  \item \emph{Group $P$.} Fill the capacities of $P$ in decreasing order of $\iota_r$ (ties by $\pi$) until the budget is used or all of $P$ is full. Let $b_1$ be the unused budget.
  \item \emph{Group $T$.} Set $u_r=\min(c_r,\theta w_r)$ for $r\in T$, where $\theta\ge0$ is the smallest number with $\sum_{r\in T}u_r=\min(b_1,\sum_{r\in T}c_r)$. Let $b_2=b_1-\sum_{r\in T}u_r$.
  \item \emph{Group $N$.} Fill the capacities of $N$ in decreasing order of $\iota_r$ (ties by $\pi$) with budget $b_2$.
\end{enumerate}
\end{definition}

All coordinates are initially zero. If $T$ is empty, skip Stage~2 and set $b_2=b_1$. For a type-$v$ task, use $b=k$, $\iota=L(v,\cdot)$, and $w=w_v$. In the original system, set $c_r=\mathbf1[r\in F]$, where $F$ is the free-server set; in a dispatcher's scaled partition, set $c_r=C_r/n$. Drop the task without any assignment if the free count is below $k$ (respectively, $nk$). Otherwise the fractional allocation
\[
  a=\waterfill\bigl(k,c;L(v,\cdot),w_v\bigr)
\]
has $0\le a_r\le c_r$ and $\sum_r a_r=k$.

\subsection{Integer Assignments}
Set $n=1$ for the original system. For either version, first set $\hat a_r=\lfloor na_r\rfloor$, then add one to each of the $nk-\sum_r\lfloor na_r\rfloor$ coordinates with the largest positive fractional remainders of $na_r$, breaking equal-remainder ties by $\pi$. Assign $\hat a_r$ free copies of type $r$; at $n=1$, these are precisely the servers whose rounded allocation is one. The type counts are deterministic given the free capacities and the chosen LP solutions; any free copies within a type may be selected.

Since $\sum_r na_r=nk$ is an integer, there are enough positive fractional remainders to distribute the leftover units. A coordinate with a positive remainder satisfies $\lceil na_r\rceil\le C_r$, since $C_r$ is an integer. Consequently,
\[
  0\le\hat a_r\le C_r,\qquad \sum_r\hat a_r=nk,
  \qquad \left|\frac{\hat a_r}{n}-a_r\right|<\frac1n.
\]
These are the capacity, budget, and rounding-error guarantees used in the mean-field proofs.
\endgroup

\section{Deferred Proofs for the Derivation of \Lag}\label{app:lag-derivation}

\begin{proof}[Proof of Lemma~\ref{lem:bandit-lp-equivalence}]
{\color{red}Throughout this proof, the penalty vector is $\ell=\ell^*$.}
For any Arm-$r$ bandit policy~$\pi$, let $u_{vr}^\pi$ be the long-run rate of
type-$v$ assignments to server~$r$, and let $\phi_r^\pi$ be the long-run free fraction
of server~$r$.
By the argument of Proposition~\ref{prop:lp_upper_bound} applied to a single
server (Little's Law for the capacity constraint, and the martingale bound for the
availability constraint), $(u^\pi, \phi^\pi)$ is feasible for the Arm-$r$ LP.
Thus the average reward of~$\pi$ is
$g_\pi = \sum_v (s_{vr}-\ell_v)\, u_{vr}^\pi \leq q_r^*$.

To show the bound is tight, given the Arm-$r$ LP optimal solution $(u^*, \phi^*)$, construct bandit policy~$\hat\pi$
that, when free and a type-$v$ task arrives, accepts with probability
$u_{vr}^* / (\lambda_v \phi_r^*)$.
The long-run assignment rate under~$\hat\pi$ satisfies:
\[
  u_{vr}^{\hat\pi}
  \;=\; \frac{u_{vr}^*}{\lambda_v \phi_r^*} \cdot \lambda_v \phi_r^{\hat\pi}
  \;=\; \frac{u_{vr}^*}{\phi_r^*} \cdot \phi_r^{\hat\pi}.
\]
Applying Little's Law:
$\phi_r^{\hat\pi}
  = 1 - \sum_v \mathbb{E}[D_v]\, u_{vr}^{\hat\pi}
  = 1 - \tfrac{\phi_r^{\hat\pi}}{\phi_r^*} \sum_v \mathbb{E}[D_v]\, u_{vr}^*
  = 1 - \tfrac{\phi_r^{\hat\pi}}{\phi_r^*}(1 - \phi_r^*)$.
Solving gives $\phi_r^{\hat\pi} = \phi_r^*$, hence $u_{vr}^{\hat\pi} = u_{vr}^*$,
and the average reward of~$\hat\pi$ equals
$\sum_v (s_{vr} - \ell_v)\, u_{vr}^* = q_r^*$.
\end{proof}

\begin{proof}[Proof of Lemma~\ref{lem:bandit-optimal}]
{\color{red}Throughout this proof, the penalty vector is $\ell=\ell^*$.}
We use the average-reward Bellman optimality equation. Let $g$ be the optimal
average reward per step and $h(\cdot)$ a relative value function, normalized so
that $h(0) = 0$ at the free state. A busy server with $x \ge 1$ steps remaining
takes no action and earns no reward, advancing deterministically to state $x-1$,
so $g + h(x) = h(x-1)$; with $h(0)=0$ this gives $h(x) = -g\,x$.

When the server is free and a type-$v$ task arrives, accepting earns
$s_{vr} - \ell_v$ and moves the server to state $D_v$, while rejecting earns
$0$ and keeps it free. Accepting is optimal iff its value is at least that of
rejecting:
\[
  (s_{vr} - \ell_v) + \mathbb{E}[h(D_v)] \;\ge\; h(0),
  \qquad\text{i.e.}\qquad
  s_{vr} - \ell_v \;\ge\; g\,\mathbb{E}[D_v],
\]
since $\mathbb{E}[h(D_v)] = -g\,\mathbb{E}[D_v]$. By
Lemma~\ref{lem:bandit-lp-equivalence}, $g = q_r^*$, giving the stated threshold.
\end{proof}

\begin{revblock}
\begin{lemma}[Prices and complementary slackness]\label{lem:cs}
Suppose Assumption~\ref{assum:underload} holds. Let $\ell^*$ be an optimal dual solution, let $q_r^*=q_r(\ell^*)$, and let $L(v,r)=s_{vr}-\ell_v^*-\E[D_v]q_r^*$. Then:
\begin{enumerate}
  \item[(i)] $q_r^*=\sum_v\lambda_v\max\{L(v,r),0\}$ for every $r\in[R]$.
  \item[(ii)] For every $(u,\phi)$ feasible for $\text{OCC-LP}(0)$,
  \[
    \sum_r q_r^*+k\sum_v\lambda_v\ell_v^*-\sum_{v,r}s_{vr}u_{vr}
    =\sum_{L(v,r)>0}L(v,r)\bigl(\lambda_v\phi_r-u_{vr}\bigr)+\sum_{L(v,r)<0}\bigl(-L(v,r)\bigr)u_{vr}.
  \]
  \item[(iii)] A feasible $(u,\phi)$ is optimal if and only if $L(v,r)>0\Rightarrow u_{vr}=\lambda_v\phi_r$ and $L(v,r)<0\Rightarrow u_{vr}=0$.
\end{enumerate}
\end{lemma}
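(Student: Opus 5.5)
Part (i) comes from solving the Arm-$r$ LP in closed form. I would use Lemmas~\ref{lem:bandit-lp-equivalence} and~\ref{lem:bandit-optimal} directly. Take any feasible $(u_{\cdot r},\phi_r)$ of the Arm-$r$ LP at penalties $\ell^*$ and substitute the capacity constraint $1=\phi_r+\sum_v\E[D_v]u_{vr}$. This gives the identity
\[
  \sum_v (s_{vr}-\ell_v^*)u_{vr} \;=\; q_r^*\Bigl(\phi_r+\sum_v \E[D_v]u_{vr}\Bigr)-q_r^* + \sum_v(s_{vr}-\ell_v^*)u_{vr} \;=\; \sum_v L(v,r)\,u_{vr} + q_r^*\phi_r - q_r^*\cdot 0,
\]
and more carefully $\sum_v(s_{vr}-\ell^*_v)u_{vr}-q_r^*=\sum_v L(v,r)u_{vr}-q_r^*\phi_r$. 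I would then evaluate this at the optimal bandit policy of Lemma~\ref{lem:bandit-optimal}. That policy accepts exactly when $L>0$ (zero-index choices do not affect the value), so $u_{vr}=\lambda_v\phi_r\mathbf 1[L(v,r)>0]$. The left side is $0$ by Lemma~\ref{lem:bandit-lp-equivalence}, so $\phi_r\bigl(\sum_v\lambda_v\max\{L,0\}-q_r^*\bigr)=0$. It remains to show $\phi_r>0$. This holds because $\phi_r=1/(1+\sum_v\lambda_v\E[D_v]\mathbf 1[L>0])>0$, obtained by solving the capacity constraint. That gives (i).

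For (ii), take $(u,\phi)$ feasible for OCC-LP(0) and expand the left side. Write $k\sum_v\lambda_v\ell^*_v=\sum_{v,r}\ell_v^*u_{vr}$ using the demand constraint. Write $q_r^*=q_r^*\bigl(\phi_r+\sum_v\E[D_v]u_{vr}\bigr)$ using the capacity constraint. Substituting both, the left side becomes $\sum_r q_r^*\phi_r-\sum_{v,r}L(v,r)u_{vr}$. Now apply (i) to write $q_r^*\phi_r=\sum_v\lambda_v\phi_r\max\{L(v,r),0\}$. Grouping terms by the sign of $L(v,r)$ gives exactly the right side; terms with $L=0$ vanish.

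For (iii), each term on the right side of (ii) is nonnegative by the bound constraint~\eqref{lp:bound}. So the gap is zero if and only if every term vanishes, which is exactly the stated sign conditions. It remains to identify a zero gap with optimality. By strong duality (stated before Definition~\ref{def:lag}), $\sum_rq_r^*+k\sum_v\lambda_v\ell_v^*=\mathrm{OPT}(\text{OCC-LP}(0))$. Strong duality needs OCC-LP(0) to be feasible, and Assumption~\ref{assum:underload} is what I would cite for this. Feasibility also gives a finite optimum because the feasible region is bounded.

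The main obstacle is (i): it must be derived rigorously from the bandit lemmas, including the positivity of $\phi_r$. A cleaner alternative is to compute $q_r(\ell^*)$ directly by LP duality on the Arm-$r$ LP, and I would fall back on this if the bandit route seems circular. Eliminate $\phi_r$ and introduce multipliers $\beta_{vr}\ge0$ for $u_{vr}\le\lambda_v\phi_r$ and a scalar for the capacity constraint. The dual then forces $\beta_{vr}=\max\{L(v,r),0\}$ with the scalar equal to $q_r$.
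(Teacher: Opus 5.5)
Your argument is correct. Parts (ii) and (iii) follow the paper's proof essentially verbatim; the paper makes the feasibility behind strong duality explicit with the point $u_{vr}=k\lambda_v/R$, $\phi_r=1-\rho$, which is exactly where Assumption~\ref{assum:underload} enters.

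The difference is in (i). The paper reads the identity directly off the Bellman optimality equation at the free state. With $h(x)=-gx$ and $g=q_r^*$ from Lemma~\ref{lem:bandit-lp-equivalence}, that equation is literally $g=\sum_v\lambda_v\max\{L(v,r),0\}$, so no division by $\phi_r$ is needed. You instead evaluate the gain of the threshold policy of Lemma~\ref{lem:bandit-optimal} through its occupancy measure and the capacity identity. This gives $\phi_r\bigl(\sum_v\lambda_v\max\{L(v,r),0\}-q_r^*\bigr)=0$, and you then use $\phi_r>0$. That is a valid, slightly longer policy-evaluation version of the same fact.

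Two small fixes are needed:
\begin{itemize}
\item The second equality in your first display is wrong. The correct expansion is $\sum_v(s_{vr}-\ell_v^*)u_{vr}=\sum_vL(v,r)u_{vr}+q_r^*(1-\phi_r)$, which is what your ``more carefully'' identity already says.
\item You do not need the claim that tie choices leave the value unchanged. Lemma~\ref{lem:bandit-optimal} only gives some optimal policy with unknown $\pi^*(0,v)$ where $L(v,r)=0$. Those terms vanish from $\sum_vL(v,r)u_{vr}$, and $\phi_r=1/\bigl(1+\sum_v\lambda_v\E[D_v]\pi^*(0,v)\bigr)>0$ regardless.
\end{itemize}

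Your fallback via duality of the Arm-$r$ LP is also sound. It is the most self-contained route, since it bypasses the bandit lemmas entirely. After eliminating the multipliers $\beta_{vr}$, the dual is to minimize $\sigma$ subject to $\sigma\ge\sum_v\lambda_v\max\{s_{vr}-\ell_v^*-\E[D_v]\sigma,0\}$. Since $\sigma$ minus the right side is continuous and strictly increasing, the minimizer is the unique fixed point, and it equals $q_r(\ell^*)$ by LP duality.
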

\begin{proof}
(i) By Lemma~\ref{lem:bandit-lp-equivalence}, $q_r^*$ is the optimal average reward $g$ of the Arm-$r$ bandit with penalties $\ell^*$. By the proof of Lemma~\ref{lem:bandit-optimal}, the relative value function is $h(x)=-gx$ with $h(0)=0$. The Bellman equation at the free state reads $g=\sum_v\lambda_v\max\{s_{vr}-\ell_v^*+\E h(D_v),0\}$, and $\E h(D_v)=-g\E[D_v]$. This is $g=\sum_v\lambda_v\max\{L(v,r),0\}$.

(ii) Substitute $s_{vr}=L(v,r)+\ell_v^*+\E[D_v]q_r^*$. Constraint~\eqref{lp:demand} gives $\sum_ru_{vr}=k\lambda_v$ and constraint~\eqref{lp:capacity} gives $\sum_v\E[D_v]u_{vr}=1-\phi_r$. So $\sum_{v,r}s_{vr}u_{vr}=\sum_{v,r}L(v,r)u_{vr}+k\sum_v\lambda_v\ell_v^*+\sum_rq_r^*(1-\phi_r)$. The left side of (ii) therefore equals $\sum_rq_r^*\phi_r-\sum_{v,r}L(v,r)u_{vr}$. By (i), $q_r^*\phi_r=\sum_v\lambda_v\phi_r\max\{L(v,r),0\}$. Split the pairs by the sign of $L(v,r)$.

(iii) By Assumption~\ref{assum:underload}, $\text{OCC-LP}(0)$ is feasible (the point $u_{vr}=k\lambda_v/R$, $\phi_r=1-\rho$ satisfies all constraints), so strong duality holds. Hence $(u,\phi)$ is optimal if and only if the left side of (ii) is $0$. By $0\le u_{vr}\le\lambda_v\phi_r$, every term on the right of (ii) is nonnegative, so the sum is $0$ if and only if each term is $0$.
\end{proof}
\end{revblock}

\section{Deferred Proofs for Mean Field Optimality}
\label{app:meanfield}

\begin{revblock}
\begin{proof}[Proof of Lemma~\ref{lem:waterfill_lipschitz}]
We use two facts for each of the two kinds of fill, priority fill (Stages 1 and 3) and weighted fill (Stage 2). Both output $u$ with $\sum_r u_r=\min(b,\sum_r c_r)$.
\begin{itemize}
  \item[(F1)] \emph{Budget.} If $b\le b'$ then $u(b)\le u(b')$ coordinatewise. Hence $\|u(b')-u(b)\|_1=\min(b',\sum c)-\min(b,\sum c)\le b'-b$.
  \item[(F2)] \emph{Capacity.} $\|u(c')-u(c)\|_1\le 2\|c'-c\|_1$ for fixed $b$.
\end{itemize}
For priority fill, (F1) is clear. For (F2), by the triangle inequality along a coordinatewise path it suffices to take $c'=c+\delta e_j$ with $\delta>0$. Allocations before $j$ in the order do not change. The allocation of $j$ increases by at most $\delta$. Allocations after $j$ weakly decrease, and since the total $\min(b,\sum c)$ weakly increases, their total decrease is at most the increase at $j$. So $\|u'-u\|_1\le2\delta$. For weighted fill, (F1) holds because $\theta$ is nondecreasing in $b$ and $u_r=\min(c_r,\theta w_r)$. For (F2), by the triangle inequality along a coordinatewise path inside $[0,1]^T$ it suffices to treat $c'=c+\delta e_j$ with $\delta>0$. Write $\Sigma=\sum_r c_r$ and $g(\theta,c)=\sum_r\min(c_r,\theta w_r)$.

\emph{Case $b\le\Sigma$.} Since $g(\theta,c')\ge g(\theta,c)$, the smallest $\theta'$ with $g(\theta',c')=b$ satisfies $\theta'\le\theta$. So $u'_r\le u_r$ for $r\ne j$. Both totals equal $b$, so $u'_j\ge u_j$ and $\|u'-u\|_1=2(u'_j-u_j)$. Also $u'_j=\min(c_j+\delta,\theta' w_j)\le\min(c_j,\theta w_j)+\delta=u_j+\delta$.

\emph{Case $\Sigma<b\le\Sigma+\delta$.} {\color{red}Here $u=c$ and $\sum_r u'_r=b$. For $r\ne j$, the capacity bound gives $u'_r\le c_r=u_r$. Therefore
\[
  u'_j=b-\sum_{r\ne j}u'_r\ge b-\sum_{r\ne j}c_r>c_j=u_j,
\]
while $u'_j\le c_j+\delta$. Since the total allocation increases from $\Sigma$ to $b$, the total decrease in the other coordinates is at most $u'_j-u_j$. Hence $\|u'-u\|_1\le2(u'_j-u_j)\le2\delta$.}

\emph{Case $b>\Sigma+\delta$.} Then $u=c$, $u'=c'$ and $\|u'-u\|_1=\delta$.

Now take $c,c'\in[0,1]^R$ and let $a,t,n$ be the $\ell_1$ distances between $c$ and $c'$ on $P,T,N$. The unused budgets are $b_1=(b-\sum_P c)^+$ and $b_2=(b_1-\sum_T c)^+$, so $|\Delta b_1|\le a$ and $|\Delta b_2|\le|\Delta b_1|+t\le a+t$. By (F1), (F2) and the triangle inequality, the three stages change by at most $2a$, $|\Delta b_1|+2t\le a+2t$ and $|\Delta b_2|+2n\le a+t+2n$. The sum is at most $4a+3t+2n\le 4\|c'-c\|_1$.
\end{proof}
\end{revblock}

\begin{revblock}
\begin{proof}[Proof of Lemma~\ref{lem:lp-optimal}]
(a) Fix $v\in[V]$, and let $P,T,N$ be the groups of Definition~\ref{def:waterfill} for $\iota=L(v,\cdot)$ and $w=w_v$. By Lemma~\ref{lem:cs}(iii) and the bound~\eqref{lp:bound}:
\begin{enumerate}
  \item if $L(v,r)>0$, then $u^*_{vr}=\lambda_v\phi^*_r$;
  \item if $L(v,r)<0$, then $u^*_{vr}=0$;
  \item if $L(v,r)=0$, then $0\le w_{vr}\le\phi^*_r$.
\end{enumerate}
The demand constraint~\eqref{lp:demand} says $\sum_ru^*_{vr}=k\lambda_v$. By (1), (2), and since $w_{vr}=0$ for tied servers outside $T$,
\begin{equation}\label{eq:split}
  k=\sum_{r\in P}\phi^*_r+\sum_{r\in T}w_{vr},\qquad\text{so}\qquad \sum_{r\in P}\phi^*_r\le k.
\end{equation}
Stage~1 uses capacities $\phi^*$. By~\eqref{eq:split} the budget $k$ fills all of $P$, so the waterfill equals $\phi^*_r=u^*_{vr}/\lambda_v$ on $P$ by (1), and the unused budget is $b_1=\sum_{r\in T}w_{vr}$. In Stage~2, if $T=\emptyset$ then $b_1=0$. Otherwise $b_1>0$. At $\theta=1$, (3) gives $\min(\phi^*_r,w_{vr})=w_{vr}$ for $r\in T$, so $\sum_Tu_r=b_1\le\sum_T\phi^*_r$. For $\theta<1$, $\sum_T\min(\phi^*_r,\theta w_{vr})\le\theta b_1<b_1$. So $\theta=1$ is the smallest solution, the waterfill equals $w_{vr}=u^*_{vr}/\lambda_v$ on $T$, and $b_2=0$. Stage~3 has budget $0$, so the waterfill is $0$ on $N$. Every $r\in N$ has $L(v,r)<0$, or $L(v,r)=0$ and $w_{vr}=0$, so $u^*_{vr}=0$ by (2) and the definition of $w$.

(b) By constraint~\eqref{lp:capacity}, $\sum_{i\ge0}\hat x_{r,i}=\phi^*_r+\sum_v\E[D_v]u^*_{vr}=1$, using $\sum_{i\ge1}\Pr(D_v\ge i)=\E[D_v]$. So $\hat x\in\mathcal{X}$. Since $\hat x_{r,0}=\phi^*_r$, part~(a) gives $u^{\Lag}(\hat x)=u^*$. Since $\Pr(D_v\ge1)=1$, $\hat x_{r,1}=\sum_vu^*_{vr}$, so $G_{r,0}(\hat x)=\phi^*_r-\sum_vu^*_{vr}+\hat x_{r,1}=\hat x_{r,0}$. For $1\le i\le d$, with $\hat x_{r,d+1}:=0$,
$G_{r,i}(\hat x)=\hat x_{r,i+1}+\sum_v\Pr(D_v=i)u^*_{vr}=\hat x_{r,i}$, because $\Pr(D_v\ge i+1)+\Pr(D_v=i)=\Pr(D_v\ge i)$. The reward is $\sum_{v,r}s_{vr}u^*_{vr}=\mathrm{OPT}(\text{OCC-LP}(0))$.
\end{proof}
\end{revblock}

{\color{red}
\begin{proof}[Proof of Lemma~\ref{lem:mn_transition}]
Fix $x\in\mathcal X^{m,n}\cap\Xg$, and write $\phi=(x_{r,0})_r$.
Let $\phi^{(j)}_r$ be the fraction of type-$r$ resources that dispatcher $j$ sees free.
Uniform partitioning gives hypergeometric marginals, so, for $mn>1$,
\begin{equation}\label{eq:hypergeo-moments}
  \E\phi^{(j)}_r=\phi_r,\qquad
  \Var(\phi^{(j)}_r)=\phi_r(1-\phi_r)\frac{m-1}{mn-1}\le\frac1n.
\end{equation}
For $m=n=1$ the partition is deterministic and its variance is zero.
Consequently $\E\|\phi^{(j)}-\phi\|_1\le R/\sqrt n$.
The partitions need not be independent for this estimate.

Write $A_j=\mathbf1[\sum_r\phi^{(j)}_r\ge k]$ for the admission indicator.
Since $\sum_r\phi_r\ge k+\gamma/2$, a shortage requires
$\phi_r-\phi^{(j)}_r>\gamma/(2R)$ for some $r$.
Hoeffding's inequality for sampling without replacement and a union bound give
\[
  \Pr(A_j=0\mid x)\le p_n:=R\exp\!\left(-\frac{n\gamma^2}{2R^2}\right).
\]
This bounds a single dispatcher's shortage probability uniformly on $\Xg$;
we will average errors over dispatchers, so no union bound over $m$ is needed.

Independently of the partitions, draw the $m$ i.i.d.\ task marks $(V_j,D_j)$:
$\Pr(V_j=v,D_j=i)=\lambda_v\Pr(D_v=i)$, with a no-arrival mark of probability
$1-\sum_v\lambda_v$. The duration $D_j$ is one draw for the entire task and is
independent of its assignment conditional on $V_j$. We may also draw it for a
rejected task, since its allocation is zero. Define
\[
  W_v(c)=\waterfill\bigl(k,c;L(v,\cdot),w_v\bigr),\qquad W_\emptyset(c)=0.
\]
Let $B_{jr}$ be the actual number of type-$r$ assignments by dispatcher $j$, divided
by $n$. Thus $B_j=0$ on shortage or no arrival; on admission it is the rounded
allocation of Definition~\ref{def:fluid_scaled}. In particular,
\begin{equation}\label{eq:ujvr}
  U^{(j)}_{vr}=\mathbf1[V_j=v]B_{jr},\qquad
  U^{(j)}_{vr,i}=\mathbf1[V_j=v,D_j=i]B_{jr}.
\end{equation}
All coordinates assigned to the same task use the same $D_j$.

For each $r$, the $4$-Lipschitz property and the rounding bound imply
\[
  |B_{jr}-W_{V_j,r}(\phi)|
  \le 4\|\phi^{(j)}-\phi\|_1+\frac1n+(1-A_j).
\]
On admission this follows by inserting $W_{V_j}(\phi^{(j)})$; on rejection
$B_j=0$ and $0\le W_{V_j,r}(\phi)\le1$.
It follows uniformly in $x\in\mathcal X^{m,n}\cap\Xg$ that
\[
  \E|B_{jr}-W_{V_j,r}(\phi)|\le
  \eta_n:=\frac{4R}{\sqrt n}+\frac1n+p_n.
\]

To account for arrivals and shared service durations together, set
\[
  Y_{jr0}=W_{V_j,r}(\phi),\qquad
  Y_{jri}=W_{V_j,r}(\phi)\mathbf1[D_j=i]\quad(1\le i\le d),
\]
with $Y_{jri}=0$ on a no-arrival mark. For fixed $r,i$, these variables are
independent over $j$, lie in $[0,1]$, and have means
\[
  \E Y_{jr0}=\sum_v u^{\Lag}_{vr}(x),\qquad
  \E Y_{jri}=\sum_v\Pr(D_v=i)u^{\Lag}_{vr}(x)\quad(i\ge1).
\]
Their empirical means therefore satisfy
\[
  \E\left|\frac1m\sum_jY_{jri}-\E Y_{1ri}\right|
  \le\sqrt{\Var\!\left(\frac1m\sum_jY_{jri}\right)}
  \le\frac1{2\sqrt m}.
\]
Independence between resources within a task is neither assumed nor needed:
the averaging here is over independent task marks. This also keeps the
service-type probabilities inside the calculation, without dropping weights
from a signed sum of allocation errors.

The actual transition is
\begin{align*}
  G^{m,n}_{r,0}(x)&=x_{r,0}+x_{r,1}-\frac1m\sum_jB_{jr},\\
  G^{m,n}_{r,i}(x)&=x_{r,i+1}+\frac1m\sum_jB_{jr}\mathbf1[D_j=i],
       \qquad 1\le i\le d,
\end{align*}
where $x_{r,d+1}=0$ and no-arrival contributions are zero.
Comparing these expressions with $G$, insert the corresponding empirical mean
of $Y_{jri}$. Multiplication by $\mathbf1[D_j=i]$ cannot increase the absolute
allocation discrepancy, so for every coordinate,
\[
  \E|G^{m,n}_{r,i}(x)-G_{r,i}(x)|\le\eta_n+\frac1{2\sqrt m}.
\]
Summing over coordinates and applying Markov's inequality yields, for $\delta>0$,
\begin{equation}\label{eq:chernoff-bound}
  \sup_{x\in\mathcal X^{m,n}\cap\Xg}
  \Pr\!\left(\|G^{m,n}(x)-G(x)\|_\infty>\delta\right)
  \le\frac{R(d+1)}{\delta}
       \left(\frac{4R}{\sqrt n}+\frac1n+p_n+\frac1{2\sqrt m}\right)
  \longrightarrow0.
\end{equation}
Finally, a continuous $f$ on the compact state cube is uniformly continuous.
Choose $\delta>0$ so that $\|y-z\|_\infty\le\delta$ implies
$|f(y)-f(z)|<\epsilon$. Then
\[
  |\E f(G^{m,n}(x))-f(G(x))|
  \le\epsilon+2\|f\|_\infty
       \Pr\!\left(\|G^{m,n}(x)-G(x)\|_\infty>\delta\right).
\]
Taking the supremum over $\mathcal X^{m,n}\cap\Xg$, then the joint limit and
$\epsilon\downarrow0$, proves the second assertion.
\end{proof}
}

\begin{revblock}
\begin{proof}[Proof of Lemma~\ref{lem:mn_reward}]
  Let $\Xg=\{x\in\mathcal{X}:\sum_r x_{r,0}\ge k+\gamma/2\}$ be the set of well-stocked states.
  Fix $x\in\mathcal{X}^{m,n}$ and write $\phi=(x_{r,0})_r$. Let $\phi^{(j)}$ be the free fractions that
  dispatcher $j$ observes and $A_j=\mathbf 1[\sum_r\phi^{(j)}_r\ge k]$. Let $\hat a_r/n$ be the fraction of
  type-$r$ resources that \Lag\ assigns to a type-$v$ task, {\color{red}using the capped proportional allocation and largest-remainder rounding of Definition~\ref{def:fluid_scaled} whenever $A_j=1$; set $\hat a=0$ when $A_j=0$.
  On an admitted task, that definition gives a rounding error below $1/n$ in each normalized coordinate.} All dispatchers are identically distributed, so the
  expected reward of one step, divided by $mn$, is
  $\rho^{m,n}(x)=\sum_{v,r}s_{vr}\lambda_v\,\E[A_1\hat a_r/n]$. Write
  $\rho(x)=\sum_{v,r}s_{vr}u^{\Lag}_{vr}(x)$.

  Suppose $x\in\Xg$. A waterfill allocates at most its budget $k$. The Lipschitz bound for the
  waterfill (Lemma~\ref{lem:waterfill_lipschitz}, constant $4$),
  the rounding error, and $\E[1-A_1]\le p_n$ give
  \[
    |\rho^{m,n}(x)-\rho(x)|
    \le s_{\max}\sum_v\lambda_v\left(4\,\E\|\phi^{(1)}-\phi\|_1+\frac{R}{n}+k\,\E[1-A_1]\right)
    \le s_{\max}\left(\frac{4R}{\sqrt n}+\frac Rn+k\,p_n\right).
  \]
  Here $\E\|\phi^{(1)}-\phi\|_1\le R/\sqrt n$ by~\eqref{eq:hypergeo-moments}, $\sum_v\lambda_v\le1$, and
  $\E[1-A_1]\le p_n$ holds because a shortage at a dispatcher requires some $x_{r,0}-\phi^{(1)}_r>\gamma/(2R)$,
  which Hoeffding's inequality for sampling without replacement bounds by $p_n$ (as in the proof of
  Proposition~\ref{prop:congestion}). If $x\notin\Xg$, then $|\rho^{m,n}(x)-\rho(x)|\le 2k\,s_{\max}$.

  Since $\mathcal{X}^{m,n}$ is finite, the Ces\`aro averages $\frac1T\sum_{t\le T}P^t(x(0),\cdot)$ converge to a
  stationary distribution $\mu^{m,n}$~\cite[Eq.~(5.6.1)]{durrett2019probability}, and
  $\Reward_{m,n}(\Lag)/(mn)=\int\rho^{m,n}\,d\mu^{m,n}$. Therefore
  \[
    \left|\int\rho^{m,n}\,d\mu^{m,n}-\int\rho\,d\mu^{m,n}\right|
    \le s_{\max}\left(\frac{4R}{\sqrt n}+\frac Rn+k\,p_n\right)+2k\,s_{\max}\,\mu^{m,n}(\mathcal{X}\setminus\Xg).
  \]
  By Lemma~\ref{lem:free-conc}, $\mu^{m,n}(\mathcal{X}\setminus\Xg)\le e^{-\gamma^2m/(2k^2d)}$.
\end{proof}
\end{revblock}

\begin{proof}[Proof of Lemma~\ref{lem:free-conc}]
Each in-service task occupies $nk$ servers and there are $mn$ servers of each type, so the
busy fraction is $R - \sum_r x_{r,0} = \tfrac{nk}{mn}\,N = \tfrac{k}{m}\,N$, where $N$ is
the number of tasks in service. Since $R(1-\rho) = k+\gamma$, the event
$\sum_r x_{r,0}\le k+\tfrac{\gamma}{2}$ is exactly $\tfrac{k}{m}N \ge \rho R + \tfrac{\gamma}{2}$.

Index arrival slots by $(j,s)$ with dispatcher $j\in[m]$ and age $s\in\{0,\dots,d-1\}$. In
the loss model a served task starts service at its arrival, so a task is in service in the
current state iff it was accepted at some slot $(j,s)$ and its duration exceeds its age,
$D>s$. Dropping the acceptance indicator,
\[
  N \;\le\; N_\infty \;:=\; \sum_{j=1}^m\sum_{s=0}^{d-1}
    \mathbf 1[\text{slot }(j,s)\text{ holds a task with } D>s].
\]
The $md$ indicators are independent (distinct slots are distinct arrivals, with independent
types and durations), and slot $(j,s)$ is active with probability
$\sum_v \lambda_v\Pr[D_v>s]$. Hence
\[
  \E[N_\infty] = m\sum_{s=0}^{d-1}\sum_v \lambda_v\Pr[D_v>s] = m\sum_v\lambda_v\E[D_v],
  \qquad \tfrac{k}{m}\E[N_\infty] = k\sum_v\lambda_v\E[D_v] = \rho R.
\]
By Hoeffding's inequality over these $md$ independent indicators in $\{0,1\}$,
\[
  \Pr\Bigl[\tfrac{k}{m}N \ge \rho R+\tfrac{\gamma}{2}\Bigr]
  \;\le\; \Pr\Bigl[N_\infty - \E[N_\infty] \ge \tfrac{\gamma m}{2k}\Bigr]
  \;\le\; \exp\!\Bigl(-\tfrac{\gamma^2 m}{2k^2 d}\Bigr) \;\xrightarrow[m\to\infty]{}\; 0.
\]
Arrivals and durations are exogenous, independent of the policy and of the state history, so
this bound holds under the stationary law $\mu^{m,n}_\pi$ of any work-conserving policy
$\pi$, giving the claim.
\end{proof}

\begin{proof}[Proof of Proposition~\ref{prop:congestion}]
Fix a type $v$ and let $\gamma$ be as in Lemma~\ref{lem:free-conc}. By that lemma
the system's free capacity exceeds $k + \gamma/2$ with
probability tending to one, so it suffices to bound the probability on that
event. There the task is short only if its dispatcher's free capacity falls short of
the system's by more than $\gamma/2$, i.e.\ $\sum_r |X^{(1)}_{r,0}-x_{r,0}| >
\gamma/2$. Each $X^{(1)}_{r,0}$ is a sample of $n$ servers drawn without
replacement, so by the Serfling inequality~\cite{serfling1974probability} (as in
Lemma~\ref{lem:mn_transition}) and a union bound this has probability at most
$2R\,e^{-n(\gamma/2R)^2}$, uniformly in $x$. Both contributions vanish; averaging
over $v$ gives $P^{\mathrm{block}}_{m,n}\to0$.
\end{proof}

\begin{proof}[Proof of Lemma~\ref{lem:4_mu_convergence}]
  We show for every bounded continuous $f$ that $\int f\,d\mu^{m,n}\to f(x^*)$,
  following the approach of~\cite{benaim2008class}.

  First, every subsequence of $\{\mu^{m,n}\}$ has a weakly convergent sub-subsequence.
  Each $\mu^{m,n}$ is supported on
  $\mathcal{X}^{m,n}\subseteq[0,1]^{R\times(d+1)}$. Since $[0,1]^{R\times(d+1)}$ is compact,
  by Prokhorov's theorem~\cite[Theorem~5.1]{billingsley2013convergence} every subsequence of
  $\{\mu^{m,n}\}$ has a weakly convergent sub-subsequence.

  Second, every subsequential limit is $G$-invariant.
  Let $\mu^{m_{k_j},n_{k_j}}\Rightarrow\mu$. Stationarity of $\mu^{m_{k_j},n_{k_j}}$ gives:
  \[
    \int f\,d\mu^{m_{k_j},n_{k_j}}
    = \int\E\!\left[f\!\left(G^{m_{k_j},n_{k_j}}(x)\right)\right]d\mu^{m_{k_j},n_{k_j}}.
  \]
  {\color{red}The LHS converges to $\int f\,d\mu$. For the RHS, define
  \[
    \epsilon_{m,n}=\sup_{x\in\mathcal X^{m,n}\cap\Xg}
      |\E[f(G^{m,n}(x))]-f(G(x))|\longrightarrow0
  \]
  by Lemma~\ref{lem:mn_transition}. Splitting the stationary integral over $\Xg$
  and its complement gives
  \begin{align*}
    &\left|\int\E[f(G^{m,n}(x))]d\mu^{m,n}(x)
             -\int f(G(x))d\mu^{m,n}(x)\right|\\
    &\qquad\le\epsilon_{m,n}
       +2\|f\|_\infty\,\mu^{m,n}(\mathcal X\setminus\Xg)\\
    &\qquad\le\epsilon_{m,n}
       +2\|f\|_\infty e^{-\gamma^2m/(2k^2d)}\longrightarrow0,
  \end{align*}
  using Lemma~\ref{lem:free-conc}. Since $G$ is continuous by
  Lemma~\ref{lem:waterfill_lipschitz}, weak convergence implies
  \[
    \int f(G(x))d\mu^{m_{k_j},n_{k_j}}(x)\to\int f(G(x))d\mu(x).
  \]
  Thus $\int f\,d\mu=\int f(G(x))d\mu(x)$, so $\mu$ is $G$-invariant.}

  Third, the unique $G$-invariant measure is $\delta_{x^*}$.
  For any $G$-invariant probability measure $\mu$, induction gives
  $\int f\,d\mu=\int f(G^t(x))\,d\mu(x)$ for all $t\geq 1$. By
  Assumption~\ref{assum:gap}, $G^t(x)\to x^*$ pointwise, so by dominated convergence:
  \[
    \int f\,d\mu = \lim_{t\to\infty}\int f(G^t(x))\,d\mu(x) = f(x^*).
  \]
  Hence $\mu=\delta_{x^*}$. Since every subsequential limit equals $\delta_{x^*}$, the full
  sequence converges weakly to $\delta_{x^*}$.
\end{proof}

\begin{proof}[Proof of Theorem~\ref{thm:asymptotic_optimality}]
  By Lemma~\ref{lem:mn_reward} and the continuity of $x\mapsto u_{vr}^{\Lag}(x)$
  (Lemma~\ref{lem:waterfill_lipschitz}):
  \[
    \frac{\Reward_{m,n}(\Lag)}{mn}
    = \sum_{v,r}s_{vr}\int u_{vr}^{\Lag}(x)\,d\mu^{m,n}(x) + {\color{red}o(1)}
    \;\xrightarrow[\mu^{m,n}\Rightarrow\delta_{x^*}]{}\; \sum_{v,r}s_{vr}\,u_{vr}^{\Lag}(x^*),
  \]
  where the weak convergence is by Lemma~\ref{lem:4_mu_convergence}. It remains to show
  $u^{\Lag}(x^*)=u^*$.

\begin{revblock}
  Let $(u^*,\phi^*)$ and $\hat{x}$ be as in Lemma~\ref{lem:lp-optimal}. By Lemma~\ref{lem:lp-optimal}(b),
  $\hat x$ is a fixed point of $G$ and $u^{\Lag}(\hat x)=u^*$. By Assumption~\ref{assum:gap} the fixed point is
  unique, so $x^*=\hat{x}$ and $u^{\Lag}(x^*)=u^*$. Therefore the achievability bound holds:
  \[
    \frac{\Reward_{m,n}(\Lag)}{mn} \;\to\; \sum_{v,r}s_{vr}\,u^*_{vr}
    \;=\;\mathrm{OPT}(\text{OCC-LP}(0)).
  \]
\end{revblock}

  For the matching upper bound, fix any work-conserving policy $\pi$. Applying
  Proposition~\ref{prop:lp_upper_bound} to the $(m,n)$-scaled instance gives
  $\Reward_{m,n}(\pi)/mn \le \mathrm{OPT}(\text{OCC-LP}(B_{m,n}))$, where $B_{m,n}$ is the
  blocking probability of the scaled system. Blocking is exactly the event of
  Proposition~\ref{prop:congestion}, an arriving task finding fewer than $k$ free servers,
  so $B_{m,n} = P^{\mathrm{block}}_{m,n} \to 0$.

  It remains to show $\mathrm{OPT}(\text{OCC-LP}(B_{m,n})) \to \mathrm{OPT}(\text{OCC-LP}(0))$.
  Write $V(b)$ for the optimal value of the LP obtained by replacing the demand
  right-hand side in~\eqref{lp:demand} with a parameter $b\in\mathbb{R}^V$, so that
  $\mathrm{OPT}(\text{OCC-LP}(B)) = V\bigl(k\lambda(1-B)\bigr)$, where
  $k\lambda(1-B) = (k\lambda_v(1-B))_{v\in[V]}$. The parameter $b$ enters only the
  constraint right-hand sides, so $V$ is concave and finite on the set of $b$ for which the
  LP is feasible, hence continuous on the interior of that set. Strict underload
  (Assumption~\ref{assum:underload}) gives $R(1-\rho) > k$, so $\text{OCC-LP}(0)$ admits a
  feasible point with strict slack in every availability constraint~\eqref{lp:bound};
  perturbing $b$ near $k\lambda$ preserves feasibility, so $b = k\lambda$ lies in the
  interior. Since $k\lambda(1-B_{m,n}) \to k\lambda$ as $B_{m,n}\to 0$, continuity of $V$ at
  the interior point $k\lambda$ gives
  $\mathrm{OPT}(\text{OCC-LP}(B_{m,n})) \to \mathrm{OPT}(\text{OCC-LP}(0))$.

  Hence $\limsup_{m,n\to\infty}\Reward_{m,n}(\pi)/mn \le \mathrm{OPT}(\text{OCC-LP}(0))$,
  matching the achievability bound, and \Lag\ is asymptotically optimal.
\end{proof}

\section{Diagrams of the Two Partial Scalings}\label{app:scaling-figs}

\begin{figure}[!ht]
    \centering
\begin{subfigure}[t]{\linewidth}
\centering
\includegraphics[width=0.5\textwidth]{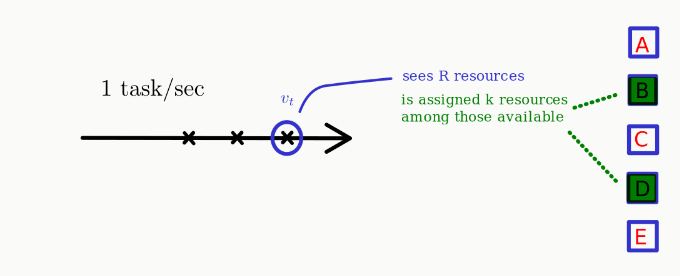}
\caption{Original system}
\label{fig:scaling:orig}
\end{subfigure}
\medskip

\begin{subfigure}[t]{\linewidth}
\centering
        \includegraphics[width=0.55\linewidth]{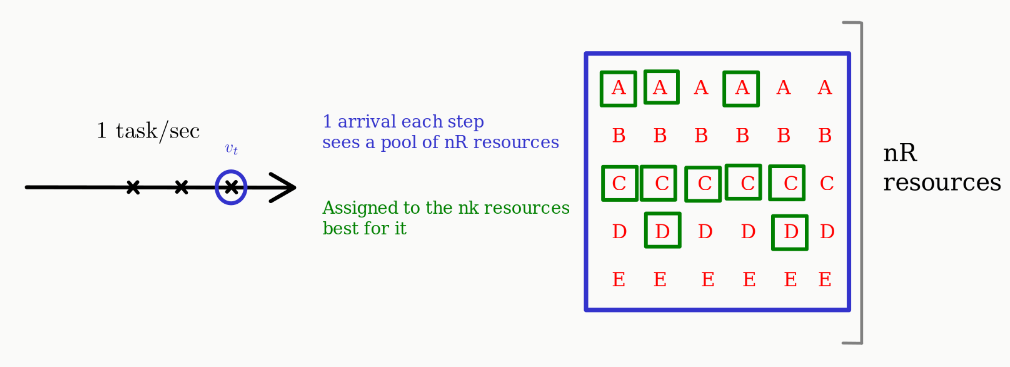}
        \caption{$n$-scaled system}
        \label{fig:n_scaled_system}
    \end{subfigure}
    \medskip
\begin{subfigure}[t]{\linewidth}
\centering
\includegraphics[width=0.5\textwidth]{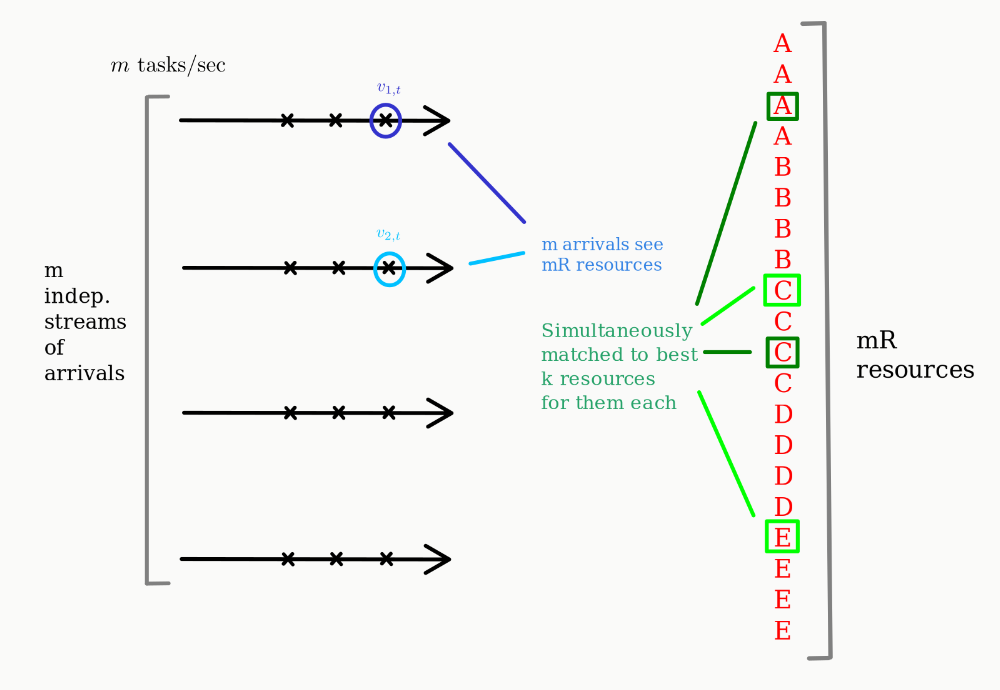}
\caption{$m$-scaled system}
\label{fig:scaling:m}
\end{subfigure}
\caption{Two natural scalings of the original system: (a) the original problem; (b) $n$-scaling, which keeps one random arrival per step; (c) $m$-scaling, which lets resources compare across a batch of simultaneous arrivals.}
\label{fig:scaling}
\end{figure}

\section{Full Load-Sweep Results}\label{app:full_results}

Table~\ref{tab:perf_full} reports reward as a percentage of OCC-UB for all policies
across the full $\rho$ sweep, for each of the six instances.

\begin{longtable}{llrrrrr}
  \caption{Reward as \% of OCC-UB across load $\rho$, all instances.}
  \label{tab:perf_full} \\
  \toprule
  Instance & $\rho$ & \Lag & Whittle & ALG-SC-LP & Greedy & Random \\
  \midrule
  \endfirsthead
  \toprule
  Instance & $\rho$ & \Lag & Whittle & ALG-SC-LP & Greedy & Random \\
  \midrule
  \endhead
  \midrule
  \multicolumn{7}{r}{\textit{continued on next page}} \\
  \endfoot
  \bottomrule
  \endlastfoot
  \multirow{9}{*}{TMLR}
    & 0.1 & 98.9 & 79.3 & 74.8 & 98.2 & 36.5 \\
    & 0.2 & 98.1 & 82.7 & 78.4 & 95.8 & 38.8 \\
    & 0.3 & 97.1 & 84.8 & 76.2 & 92.6 & 40.9 \\
    & 0.4 & 94.6 & 85.2 & 85.2 & 88.2 & 43.1 \\
    & 0.5 & 92.8 & 85.6 & 83.0 & 84.7 & 45.2 \\
    & 0.6 & 90.8 & 85.2 & 81.6 & 81.6 & 47.4 \\
    & 0.7 & 87.8 & 83.2 & 80.5 & 78.4 & 49.6 \\
    & 0.8 & 85.8 & 82.3 & 79.4 & 76.4 & 51.3 \\
    & 0.9 & 85.5 & 82.8 & 78.7 & 76.5 & 53.5 \\
  \midrule
  \multirow{9}{*}{ICLR}
    & 0.1 & 98.8 & 88.0 & 87.7 & 98.3 & 55.6 \\
    & 0.2 & 98.2 & 89.5 & 90.2 & 96.8 & 58.3 \\
    & 0.3 & 98.1 & 91.4 & 89.7 & 95.4 & 61.4 \\
    & 0.4 & 97.0 & 92.1 & 91.8 & 93.0 & 64.3 \\
    & 0.5 & 96.6 & 93.1 & 91.9 & 91.6 & 67.3 \\
    & 0.6 & 95.8 & 93.1 & 91.8 & 90.3 & 70.0 \\
    & 0.7 & 94.5 & 92.5 & 90.0 & 88.9 & 72.0 \\
    & 0.8 & 93.6 & 92.1 & 89.6 & 87.9 & 74.1 \\
    & 0.9 & 93.1 & 92.3 & 89.0 & 87.8 & 75.1 \\
  \midrule
  \multirow{8}{*}{Specialist}
    & 0.1 & 98.8 & 96.9 & 95.9 & 96.9 & 37.8 \\
    & 0.2 & 94.5 & 90.4 & 90.2 & 90.4 & 37.7 \\
    & 0.3 & 88.7 & 83.7 & 84.9 & 83.7 & 37.9 \\
    & 0.4 & 89.1 & 83.4 & 86.4 & 83.4 & 41.5 \\
    & 0.5 & 88.7 & 82.8 & 86.7 & 82.8 & 45.5 \\
    & 0.6 & 88.7 & 83.6 & 86.8 & 83.6 & 49.1 \\
    & 0.7 & 90.6 & 85.5 & 84.6 & 85.5 & 55.0 \\
    & 0.8 & 91.9 & 87.4 & 85.1 & 87.4 & 61.5 \\
  \midrule
  \multirow{8}{*}{Priority}
    & 0.1 & 88.6 & 83.5 & 89.2 & 88.6 & 41.1 \\
    & 0.2 & 87.8 & 87.8 & 80.8 & 81.4 & 44.0 \\
    & 0.3 & 87.8 & 87.8 & 77.5 & 73.5 & 46.4 \\
    & 0.4 & 86.0 & 86.0 & 76.5 & 68.0 & 47.6 \\
    & 0.5 & 86.2 & 86.2 & 77.3 & 65.1 & 50.6 \\
    & 0.6 & 86.0 & 86.0 & 79.6 & 64.0 & 53.0 \\
    & 0.7 & 85.6 & 85.6 & 82.2 & 63.0 & 54.9 \\
    & 0.8 & 83.0 & 83.0 & 82.9 & 62.7 & 56.8 \\
  \midrule
  \multirow{9}{*}{Unfriendly}
    & 0.1 & 98.1 & 98.1 & 97.5 & 97.5 & 26.6 \\
    & 0.2 & 93.7 & 93.7 & 92.5 & 90.0 & 27.4 \\
    & 0.3 & 89.3 & 89.3 & 86.1 & 80.6 & 28.7 \\
    & 0.4 & 88.8 & 88.8 & 84.0 & 75.4 & 32.2 \\
    & 0.5 & 89.2 & 89.2 & 83.6 & 70.9 & 35.5 \\
    & 0.6 & 89.7 & 89.7 & 84.7 & 69.1 & 39.4 \\
    & 0.7 & 88.7 & 88.7 & 82.2 & 66.8 & 42.7 \\
    & 0.8 & 87.8 & 87.8 & 80.0 & 65.2 & 44.7 \\
    & 0.9 & 85.3 & 85.3 & 78.1 & 63.3 & 47.7 \\
  \midrule
  \multirow{9}{*}{Low-rank}
    & 0.1 & 98.8 & 43.0 & 96.0 & 98.4 & 51.7 \\
    & 0.2 & 97.8 & 47.5 & 95.6 & 96.6 & 54.6 \\
    & 0.3 & 96.1 & 51.1 & 94.0 & 94.0 & 55.8 \\
    & 0.4 & 95.5 & 55.8 & 93.5 & 92.3 & 58.6 \\
    & 0.5 & 95.4 & 61.0 & 92.6 & 90.7 & 61.1 \\
    & 0.6 & 94.6 & 66.2 & 92.0 & 88.5 & 63.7 \\
    & 0.7 & 94.3 & 71.9 & 91.8 & 86.3 & 67.1 \\
    & 0.8 & 93.0 & 76.6 & 91.1 & 83.6 & 69.7 \\
    & 0.9 & 91.4 & 80.0 & 89.6 & 81.6 & 71.9 \\
\end{longtable}

\end{document}